\newcommand{\cmark}{\ding{51}}%
\newcommand{\xmark}{\ding{55}}%
\newcommand{\be}{\begin{enumerate}}
	\newcommand{\ee}{\end{enumerate}}
\newcommand{\beqn}{\begin{eqnarray*}}
	\newcommand{\eeqn}{\end{eqnarray*}}
\newcommand{\eetitle}[1]{\vspace{0.8ex}\noindent{\underline{\em #1}}}
\newcommand{\etitle}[1]{\vspace{0.5ex}\noindent{{\em #1}}}
\newcommand{\ie}{\emph{i.e.,}\xspace}
\newcommand{\eg}{\emph{e.g.,}\xspace}
\newcommand{\wrt}{\emph{w.r.t.}\xspace}
\newcommand{\eop}{\hspace*{\fill}\mbox{$\Box$}}
 \newcounter{example}%[section]
 \renewcommand{\theexample}{\arabic{example}}
 \newenvironment{example}{
       \vspace{1.5ex}
       \refstepcounter{example}
       {\noindent\bf Example \theexample:}}{
         \eop
         \vspace{1.5ex}
         }
\newcommand{\nthesection}{\arabic{section}}
 \newcounter{lemma}
 \renewcommand{\thelemma}{\arabic{lemma}}
 \newenvironment{lemma}{\begin{em}
 		\refstepcounter{lemma}
 		{
 		\vspace{1ex}
 		\noindent\bf Lemma \thelemma:}}{
 	\end{em}\eop
 	\vspace{1ex}
 	}
 \newcounter{axiom}
 \renewcommand{\theaxiom}{\arabic{axiom}}
 \newcounter{theorem}
 \renewcommand{\thetheorem}{\arabic{theorem}}
 \newenvironment{theorem}{\begin{em}
         \refstepcounter{theorem}
         {
         \vspace{1.5ex}
         \noindent\bf  Theorem  \thetheorem:}}{
         \end{em}\eop
         \vspace{1.5ex}
         }
\newenvironment{ctheorem}[1]{\begin{em}
        \refstepcounter{theorem}
        {\vspace{1ex}{\noindent \bf Theorem  \thetheorem~[#1]}: }}{
        \end{em}\eop\vspace{1.5ex}}
\newcounter{cor}
\renewcommand{\thecor}{\arabic{cor}}
\newcounter{prop}
\renewcommand{\theprop}{\arabic{theorem}}
 \newcounter{definition}[section]
 \renewcommand{\thedefinition}{\nthesection.\arabic{definition}}
\newcounter{alg}[section]
\renewcommand{\thealg}{\nthesection.\arabic{alg}}
\newcounter{arule}
\renewcommand{\thearule}{\arabic{arule}}
\newenvironment{proofS}{
	\vspace{1ex}
	{\noindent\bf Proof sketch:\ }}{\eop
	\vspace{1ex}
	}
 \newcommand{\tbf}{\textbf{\textcolor{red}{TBF}}\xspace}
\newcommand{\squishlist}{
 \begin{list}{$\bullet$}
  {  \setlength{\itemsep}{0pt}
     \setlength{\parsep}{3pt}
     \setlength{\topsep}{3pt}
     \setlength{\partopsep}{0pt}
     \setlength{\leftmargin}{2em}
     \setlength{\labelwidth}{1.5em}
     \setlength{\labelsep}{0.5em}
} }
\newcommand{\squishlisttight}{
 \begin{list}{$\bullet$}
  { \setlength{\itemsep}{0pt}
    \setlength{\parsep}{0pt}
    \setlength{\topsep}{0pt}
    \setlength{\partopsep}{0pt}
    \setlength{\leftmargin}{2em}
    \setlength{\labelwidth}{1.5em}
    \setlength{\labelsep}{0.5em}
} }
\newcommand{\squishdesc}{
 \begin{list}{}
  {  \setlength{\itemsep}{0pt}
     \setlength{\parsep}{3pt}
     \setlength{\topsep}{3pt}
     \setlength{\partopsep}{0pt}
     \setlength{\leftmargin}{1em}
     \setlength{\labelwidth}{1.5em}
     \setlength{\labelsep}{0.5em}
} }
\newcommand{\squishend}{
  \end{list}
}
\renewcommand{\And}{\mbox{\bf and}\ }
\newcommand{\eat}[1]{}
\newcommand{\NP}{\kw{NP}}
\newcommand{\kw}[1]{{\ensuremath {\mathsf{#1}}}\xspace}
\newcommand{\stitle}[1]{\vspace{1.5ex}\noindent{\bf #1}}
\newcommand{\sstab}{\rule{0pt}{8pt}\\[-2ex]}
\newcounter{ccc}
\DeclareMathOperator*{\argmax}{arg\,max}
\newcommand\redout{\bgroup\markoverwith
{\textcolor{red}{\rule[.5ex]{2pt}{2pt}}}\ULon}
\newcommand{\wu}[1]{{\color{orange}[com-Wu:~{#1}]}}
\newcommand{\dq}[1]{{\color{blue}[{#1}]}}
\newcommand{\mengying}[1]{{\color{cyan}[MY:~{#1}]}}
\newcommand{\rcexp}{\kw{RCExplainer}}
\newcommand{\comwu}[1]{{\color{red}[com-Wu:~{#1}]}}
\newcommand{\warn}[1]{{\color{red}[{#1}]}}
\newcommand{\V}{{\mathcal V}}
\newcommand{\M}{{\mathcal M}}
\newcommand{\C}{{\mathcal C}}
\newcommand{\gnn}{\kw{GNN}}
\newcommand{\gnns}{\kw{GNNs}}
\newcommand{\gat}{\kw{GAT}}
\newcommand{\gats}{\kw{GATs}}
\newcommand{\gcn}{\kw{GCN}}
\newcommand{\gcns}{\kw{GCNs}}
\newcommand{\ppnps}{\kw{APPNPs}}
\newcommand{\ppnp}{\kw{APPNP}}
\newcommand{\gsage}{\kw{GraphSage}}
\newcommand{\PTIME}{\kw{PTIME}}
\newcommand{\gvex}{\kw{GVEX}}
\renewenvironment{proof}{
        \vspace{1ex}
        {\noindent\bf Proof:}}{\vspace{1ex}}
\renewcommand{\st}[1]{}
\newcommand{\cw}{\kw{CW}}
\newcommand{\rcw}{\kw{RCW}}
\newcommand{\rcws}{\kw{RCWs}}
\newcommand{\rgexp}{\kw{RoboGExp}}
\newcommand{\prgexp}{\kw{paraRoboGExp}}
\newcommand{\ppi}{\kw{PPI}}
\newcommand{\house}{\kw{BAHouse}}
\newcommand{\citeseer}{\kw{CiteSeer}}
\newcommand{\reddit}{\kw{Reddit}}
\newcommand{\cf}{\kw{CF\textsuperscript{2}}}
\newcommand{\gcf}{\kw{GCFExplainer}}
\newcommand{\cfexplainer}{\kw{CF}-\kw{GNNExp}}
\newcommand{\verifyw}{\kw{verifyW}}
\newcommand{\verifycw}{\kw{verifyCW}}
\newcommand{\verifyrcw}{\kw{verifyRCW}}
\newcommand{\verifyrcwp}{\kw{verifyRCW}-\kw{APPNP}}
\newcommand{\lbp}{\kw{LBP}}
\newcommand{\pri}{\kw{PRI}}
\newcommand{\mrcwgen}{\kw{RoboGExp}}
\newcommand{\prcwgen}{\kw{paraRoboGExp}}
\newcommand{\expand}{\kw{Expand}}
\newcommand{\paraexpand}{\kw{paraExpand}}
\newcommand{\paraverifyrcw}{\kw{paraverifyRCW}}
\newcommand{\ged}{\kw{GED}}
\newcommand{\fplus}{\kw{Fidelity+}}
\newcommand{\fminus}{\kw{Fidelity-}}
  \providecommand\BibTeX{{%
    \normalfont B\kern-0.5em{\scshape i\kern-0.25em b}\kern-0.8em\TeX}}}
\renewcommand\footnotetextcopyrightpermission[1]{}
\begin{document}

\title{
Generating Robust Counterfactual Witnesses for 
Graph Neural Networks}

\author{\IEEEauthorblockN{Dazhuo Qiu}
\IEEEauthorblockA{\textit{Aalborg University} \\
Denmark \\
dazhuoq@cs.aau.dk}
\and
\IEEEauthorblockN{Mengying Wang}
\IEEEauthorblockA{\textit{Case Western Reserve University} \\
USA\\
mxw767@case.edu}
\and
\IEEEauthorblockN{Arijit Khan}
\IEEEauthorblockA{\textit{Aalborg University} \\
Denmark \\
arijitk@cs.aau.dk}
\and
\IEEEauthorblockN{Yinghui Wu}
\IEEEauthorblockA{\textit{Case Western Reserve University} \\
USA\\
yxw1650@case.edu}
}

%\author{ACM SIGMOD Submission ID XX}

\maketitle

\begin{abstract}
%\textcolor{red}{[AK: Added my comments. Plz address in different color and let me know.]} 
This paper introduces a new class of 
explanation structures, called 
{\em robust counterfactual witnesses} (\rcws), to provide 
robust, both counterfactual and factual 
%{[AK: what we term witness is also called factual explanation]} 
explanations 
for graph neural networks. 
%\textcolor{blue}{aiming at node classification}. 
Given a graph neural network $\M$, 
%for \textcolor{blue}{node classification}, 
a robust counterfactual witness 
refers to the fraction of 
a graph $G$ that 
are counterfactual and factual 
explanation of the results of $\M$
over $G$, 
but also remains so 
for any ``disturbed'' $G$ 
by flipping up to $k$ of its node pairs. 
We establish the hardness results, 
from tractable results to 
co-\NP-hardness, for 
verifying and 
generating robust counterfactual witnesses. 
We study such structures for \gnn-based 
node classification, and present efficient algorithms to 
verify and generate \rcws. We also 
provide a parallel algorithm 
to verify and generate \rcws for 
large graphs with scalability 
guarantees. 
We experimentally verify our 
explanation generation process 
for benchmark datasets, and 
showcase their applications. 
\end{abstract}

\eat{
\begin{CCSXML}
<ccs2012>
   <concept>
       <concept_id>10010147.10010257.10010293.10010294</concept_id>
       <concept_desc>Computing methodologies~Neural networks</concept_desc>
       <concept_significance>500</concept_significance>
       </concept>
   <concept>
       <concept_id>10002951.10002952.10002953.10010146</concept_id>
       <concept_desc>Information systems~Graph-based database models</concept_desc>
       <concept_significance>500</concept_significance>
       </concept>
 </ccs2012>
\end{CCSXML}

\ccsdesc[500]{Computing methodologies~Neural networks}
\ccsdesc[500]{Information systems~Graph-based database models}

\keywords{graph neural networks, knowledge graphs, graph views}
}

%\maketitle

\vspace{-1mm}
\section{Introduction}
\label{sec:intro}
Graph neural networks  (\gnns) have exhibited 
promising performances in graph analytical 
tasks such as classification.  
Given a graph $G$ and a set of test 
nodes $V_T$, a \gnn-based node classifier 
aims to assign a correct label to each node $v\in V_T$. \gnn-based classification 
has been applied for applications  
%domain sciences 
such as social networks and biochemistry~\cite{you2018graph, cho2011friendship,wei2023neural}.

For many \gnns-based tasks such as 
node classification, data analysts or
domain scientists often want %to be able 
to understand the 
results of \gnns by inspecting intuitive, explanatory  structures, 
in the form where 
domain knowledge can be 
directly applied~\cite{gvex24}. In 
particular, such explanation 
should indicate ``invariant''
representative structures 
for similar %analytical cases 
graphs  
that fall into the same group, 
\ie be ``robust'' to small 
changes of the graphs, 
and be both ``factual'' (that preserves 
the result of classification) and  
``counterfactual'' 
(which flips the result 
if removed from $G$)~\cite{zhou2021evaluating}.
%\textcolor{red}{[AK: instead of queryability, we should emphasize the need for factual and counterfactual explanations.]}
\eat{
Such structures may be 
inherently hard to maintain 
upon {\em changes}, which 
may be caused by adversarial attacks, 
missing links, or noises in the  
underlying graphs, and 
may not easily generalize to 
similar graphs. 
}
The need for such robust, 
and both factual and counterfactual 
explanation 
is evident for real-world applications such as 
drug design or cyber security. 

\vspace{.5ex}
Consider the following real-world examples. 

\eat{
\begin{table*}[tb!]
    \centering
    \vspace{-2mm}
     \caption{\small Comparison of our \gvex technique with state-of-the-art \gnn explanation methods. Here ``Learning'' denotes whether (node/edge mask) learning is required, ``Task'' means what downstream tasks each method can be applied to (GC/NC: graph/ node classification), ``Target'' indicates the output %\st{targets} 
     format of explanations (E/NF: Edge/Node Features), ``Model-Agnostic'' means if the method treats \gnns as a black-box during the explanation stage (i.e., the internals of the \gnn models are not required), ``Label-specific" means if the explanations can be generated for a specific class label; ``Size-bound'' means if the size of explanation is bounded; ``Coverage'' means if the coverage property is involved (\S \ref{sec-view}), ``Configurable'' means if users can configure the method to generate explanations for 
     designated class labels (\S \ref{sec:preliminaries}); ``Queryable'' means if the explanations are 
     directly queryable.} %\textcolor{blue}{with domain knowledge}
     %\textcolor{magenta}{[AK: state more clearly]} \textcolor{blue}{[Tingyang: fixed]}\textcolor{blue}{(\S \ref{sec:preliminaries})}, "Queryable" means if the explanation is queryable \textcolor{blue}{with domain knowledge}.} 
    \label{tab:comprehensive-analysis}
    \vspace{-3mm}
    \scriptsize
    \begin{tabular}{c c c c c c c c c c}
			%\hline
            %\toprule
            \textbf{Methods} & \textsc{Learning} & \textsc{Task} & \textsc{Target} & \textsc{Model-Agnostic} & \textsc{Label-specific} & \textsc{Size-Bounded} & \textsc{Coverage} & \textsc{Configurable} & \textsc{Queryable} \\ 
            \midrule
		  {\bf SubgraphX}~\cite{yuan2021explainability} & \xmark & GC/NC & Subgraph & \cmark & \xmark & \xmark & \xmark & \xmark & \xmark\\ 
            {\bf GNNExplainer}~\cite{ying2019gnnexplainer} & \cmark & GC/NC & E/NF & \cmark & \xmark & \xmark & \xmark & \xmark & \xmark\\
            {\bf PGExplainer}~\cite{luo2020parameterized} & \cmark & GC/NC & E & \xmark & \xmark & \xmark & \xmark & \xmark & \xmark\\
            {\bf GStarX}~\cite{zhang2022gstarx} & \xmark & GC & Subgraph & \cmark & \xmark & \xmark & \xmark & \xmark & \xmark\\
            {\bf GCFExplainer}~\cite{huang2023global} & \xmark  & GC & Subgraph & \cmark & \cmark & \xmark & \cmark & \xmark & \xmark\\
            \midrule
            {\bf \gvex (Ours)} & \xmark  & GC/NC & \begin{tabular}{@{}c@{}}Graph Views \\ (Pattern+Subgraph)\end{tabular} & \cmark & \cmark & \cmark & \cmark & \cmark & \cmark \\
            %\bottomrule
    \end{tabular}
    \vspace{-4mm}
\end{table*}
}

\vspace{1ex}
\begin{example}
\label{exa-motivation}
\textbf{[Interpreting ``Mutagenics'' with 
Molecular Structures]}. 
In drug discovery, \gnns have been applied to 
detect %chemical compounds with 
{\em mutagenicity} structures, 
which has an adverse ability 
that causes mutations~\cite{xiong2021graph, jiang2020drug}. 
Consider $G_1$ depicted in Fig. \ref{fig-motivation}, which is a graph representation of a real-world molecular database~\cite{david2020molecular,albalahi2022bond}. The nodes of $G_1$ refer to the atoms of the molecules, and the edges of $G_1$ represent the valence bonds between a pair of atoms. Given a set of ``test nodes'' 
$V_T$ in $G$, a \gnn-based classifier $M$ is trained for the task to correctly assign, for each of the nodes 
in $V_T$, a label: ``\kw{mutagenic}'', if it 
belongs to a chemical compound that is a mutagenicity structure; and ``\kw{nonmutagenic}'', 
otherwise. 

A natural interpretation to a chemist would be to identify a subgraph $G_1$ that contains most of the ``mutagenic'' nodes as a {\em toxicophore} structure, %of a family of {\em chemical mutagens}, 
that is, a critical fraction of chemical compounds responsible for mutagenicity~\cite{kazius2005derivation}. Consider the example in Figure \ref{fig-motivation}. On the left is a mutagenic molecule due to the bold nitro group. Meanwhile, the remaining structure after removing the nitro group is considered as non-mutagenic. However, considering a similar molecule that only misses two bonds (dotted edges), which is in the middle, the red bold structure is an aldehyde with a high risk of triggering mutagenicity. To ensure that the discovered mutagenic-related chemical compounds are meaningful to a family of molecules, it is required to find a robust explanation structure, such as the bold indicated structure on the right molecule. 
\eat{
\textcolor{blue}{A natural interpretation in the eye of 
a chemical analyst identifies a subgraph 
$G_1$ that contains most of the 
``mutagenic'' nodes as a toxicophore structure  
of a family of 
{\em chemical mutagens}, a critical 
fraction of chemical compounds 
responsible for mutagenicity. 
Ideally, an explanation mechanism 
for \gnn-based classifier should 
be able to report such small ``invariant'' 
structures, such as a Quinoline ring, 
that corresponds to common 
structures (known as ``toxicophores'') 
of a family of chemical mutagens 
including Quinoline, Naphthalene, and Pyridine. 
The latters differ only up to  
a bounded number of valence bonds. 
In other words, Quinoline ring remains ``robust'' to 
explain the results of node classification for 
similar molecules with small bond differences.}
}
%\textcolor{red}{[AK: the above paragraph needs to be explained using Fig-1, and also some citation.]}

\begin{figure}[tb!]
\vspace{-1mm}
\centering
\centerline{\includegraphics[width =0.46\textwidth]{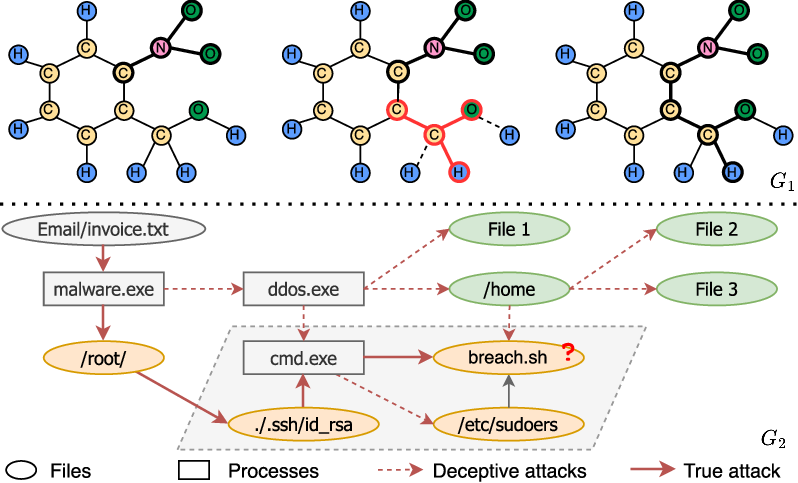}}
\vspace{-2mm}
\caption{\textbf{$G_1$:} \textbf{Left-}Bold nodes and edges indicate a counterfactual. \textbf{Middle-}Red bold nodes and edges indicate a new counterfactual after deleting dotted edges. \textbf{Right-}A counterfactual robust to graph edits. \textbf{$G_2$:} The \textbf{shaded area} is a robust counterfactual, also ``vulnerable zone'' in cyber networks.}
\eat{
Given an input graph, we find a classic factual subgraph and our witness in top-middle and bottom-middle, respectively. However, if we remove $k=2$ edges from the input graph(dotted edges), the counterfactual explanation will change its prediction since the majority of connected nodes reverse from blue to yellow. Meanwhile, for the counterfactual witness, the prediction remains the same. }
\label{fig-motivation}
\vspace{-5mm}
\end{figure}

\vspace{.5ex}
Existing \gnn explanation methods may return different structures as the test nodes vary, 
or carry ``noisy'' structures such as carbon rings or hydrogen atoms that are not 
necessarily corresponding to toxicophores (as verified in 
our case study; see Section~\ref{sec:exp}). 
%\textcolor{blue}{These structures  
%may carry ``noisy'' structures such as carbon rings or hydrogen atoms, that are not 
%necessarily corresponding toxicophores} \textcolor{red}{[AK: citation or evidence?]}.  
This may make it difficult for a chemist to discern the potential toxic structure hidden inside normal structures. (e.g., red bold aldehyde in the middle molecule of Figure \ref{fig-motivation}.) 
Moreover, instead of analyzing explanations for each molecule, identifying a common ``invariant'' explanation that remains the same over a family of similar molecules with few bond differences could improve the efficiency of identifying mutagenic-related chemical compounds. A more effective approach is expected to directly compute explanation structures which are, or contain such toxicophores as robust structures, closer to what a chemist is familiar with.
\eat{
\textcolor{blue}{Existing \gnn explanation method may return different structures as the test nodes vary. 
These structures  
may carry ``noisy'' structures such as carbon rings or hydrogen atoms, that are not 
necessary corresponding toxicophores.  
This makes it difficult for a chemist to discern whether the binding of carbon rings or hydrogen atoms on the carbon rings plays a decisive role to decide mutagenicity. 
Moreover, when chemical analysts access such 
explanations, they should benefit from queries that directly express critical toxicophores structures (\eg an aromatic nitro group)~\cite{kazius2005derivation}. A more effective  approach is expected to directly compute explanation structures which are, or contain such toxicophores 
as robust structures closer to what a chemist 
is familiar with.}
}
%\textcolor{red}{[AK: (1) To support the above claim, in fig-1, it can be shown what some existing methods produce as explanations, why they are not sufficient for our purpose. (2) I see a lot of focus on queryability -- but that is not the focus of this paper. We have 3 goals -- factual, counterfactual, and robust -- our motivation in this paper should be all about these three factors.]}
\eat{
\warn{Give an example -- similarly as 
GVex, shows the need for queryable structure, 
and the need for robust explanations: 
1. ability to do configurable explanation for 
 subset of test nodes; 
2. ability to generalize same structure 
upon small, arbitrary changes of k edges 
to the graphs; 3. ability to support querying.}
}
\end{example}

\eat{Existing 
approaches characterize 
explanations as 
input features 
directly from \gnn layers 
that are coherent with results, and remain limited to retrieving useful 
structural information as needed~\cite{yuan2022explainability}.}

\vspace{.2ex}
Our second example illustrates the need for 
such structures in cyber security, where 
new edges are inserted as part of   
real-world cyber attacks. In such context, 
the explanation structures themselves 
may indicate ``invariant'' structures 
that need to be 
protected {\em regardless} of 
attack tactics. 

\vspace{.2ex}
\begin{example} \textbf{[Understanding 
``Vulnerable Zone'' in Cyber Networks]}. 
Graph $G_2$ in Fig. \ref{fig-motivation} is a fraction of a provenance graph~\cite{ding2023case}. It consists of nodes that represent files (oval) or processes (rectangle), while the edges depict access actions. The graph 
contains a substructure that captures a multi-stage attack. The attack is initiated by an email with a malicious attachment that is disguised as an invoice, and 
targets at a script ``breach.sh'' to perform a data breach.  Attack tactics are encoded as paths or subgraphs (with red-colored, solid or dashed edges) in $G_2$. 
A \gnn-based node classifier is trained to identify potential attack targets (to be 
labeled as ``vulnerable'', colored orange in $G_2$)~\cite{bilot2023graph}.
% A \gnn-based node classifier is trained from validated attacks to identify potential attack targets among a set of test nodes\cite{bilot2023graph}.
% In $G_2$, ``vulnerable'' files are colored orange while ``invulnerable'' files are colored green, according to their groundtruth.
\eat{Graph $G_2$ in Fig. \ref{fig-motivation} illustrates a 
fraction of a provenance graph. A provenance graph~\cite{ding2023case} 
is a common network representation for forensics analysis, which contains nodes as files or processes, 
and edges indicating access actions. Attack tactics 
can be encoded as paths or subgraphs in a 
provenance graph. A \gnn-based node classification 
task labels a set of test nodes to be ``vulnerable'' 
as potential attack targets or not~\cite{bilot2023graph}, given 
labeled training examples from known 
attacking events. 
}

\vspace{.5ex}
An emerging challenge is to effectively detect 
a two-stage tactic, which first uses, \eg ``DDoS'' 
as a deceptive attack (via red dashed edges) on various fake targets to exhaust defense resources, 
to hide the intention of data breaching from an 
invariant set of high-value files as true target (via red solid edges). In such cases, a \gnn-classifier 
can be ``fooled'' to identify a test node as ``vulnerable'' 
simply due to majority of its neighbors wrongly 
labeled as ``vulnerable'' under deceptive stage. 
On the other hand, a robust explanation for those correct labeling of test nodes should stay the same  
for a set of training attack paths. In such cases, 
the tactic may switch the first-stage targets from time to time to maximize 
the effect of deceptive attacks. As the explanation  
structures remain invariant despite various 
deceptive targets, 
 they may reveal the true intention regardless of 
how the first stage deceptive targets change.   
As such, the explanation also suggests 
a set of true ``vulnerable'' files that {\em should} be 
protected, thereby helping mitigate the impact of 
deceptive attacks. 

Nevertheless, computing such an explanation can be 
expensive due to the large size of 
provenance graphs. For example, a single 
web page loading may already result in 22,000 system calls and yields a provenance 
graph with thousands of nodes and edges~\cite{hassan2020tactical,ding2023case}. 
Especially for sophisticated multi-stage attacks above, it may leave complex and rapidly changing patterns in the provenance graph. Detecting and mitigating such attacks require the ability to process large amounts of data quickly for real-time or near-real-time threat analysis.
\end{example}

% \vspace{-2ex}
The above examples illustrate 
the need to generate subgraphs 
as explanation 
structures for \gnn-based classification, 
which are (1) \textit{``factual''}, \ie
contributing subgraphs 
that are consistently responsible for 
the same assigned class labels to 
test nodes, 
(2) \textit{``counterfactual''}, \ie\ 
 \gnn result changes if such explanation 
structures are removed, 
and moreover, 
(3) \textit{``robust''}, \ie 
remain to be a factual and 
counterfactual explanation 
%\ie {\em insensitive} 
upon certain amount of  
%\textcolor{blue}{edge disturbance} 
 structural disturbance 
in the graphs. 
%\textcolor{red}{[AK: Can we be clear, where we consider only edge addition, only edge deletion, or both?]}
Several methods generate explanation
structures (factual or counterfactual)
for \gnn-based classification~\cite{ying2019gnnexplainer,yuan2021explainability,huang2023global,zhang2022gstarx,yuan2020xgnn}, 
yet can not guarantee 
all the three (factual, 
counterfactual, and robust)
%discovering %queryable 
criteria upon edge updates. 

%We characterize such 
%invariant structures  
%as ``robust'' explanations. 
Given a pre-trained 
\gnn model $\M$ and a 
graph $G$, {\em how do we formally 
characterize such ``invariant'' structures?} 
Moreover, {\em how can 
we efficiently compute 
such ``invariant'' structures for 
\gnn-based node classification 
over large graphs? }

\stitle{Contributions.} 
To address the above challenges, 
we propose a novel class of \gnn 
explanation structures called 
{\em robust counterfactual witness}, 
and develop cost-effective 
methods to compute them. 
We summarize our main contributions 
below. 

\sstab
(1) We formalize the notion of 
{\em robust counterfactual witness} (\rcw) for 
\gnn-based node classification. 
An \rcw is a subgraph 
in $G$ that preserves the 
result of a \gnn-based classifier if 
tested with \rcw alone, 
remains counterfactual 
(i.e., the \gnn gives different output 
for the remaining fraction of the graph 
if \rcw is excluded), and 
robust (such that it can preserve 
the result even if 
up to $k$ edges
in $G$ are changed). 

\sstab 
(2) 
%and 
%their practical variant, 
%minimal \rcws,
%in terms of 
%edge removals. 
We analyze properties of 
$k$-\rcws in regards to 
the quantification of 
\textit{$k$-disturbances}, where 
$k$ refers to a budget 
of the total number of 
node pairs that are
allowed to be disturbed. 
We formulate the problems  
for verifying (to decide if a subgraph is a 
$k$-\rcw) and generating \rcws 
(for a given graph, \gnn, and 
$k$-disturbance)
as explanations for \gnn-based node 
classification. 
We establish their 
computational hardness results, 
from tractable cases to co-NP hardness. 

\sstab 
(3) We present
feasible algorithms to 
verify and 
compute \rcws, matching 
their hardness results. 
Our verification algorithm 
adopts an %``layer-peeling'' 
``\textit{expand-verify}''
strategy to iteratively 
expand a $k$-\rcw structure
%based on the $k-1$ counterparts. 
with node pairs that 
are most likely to change 
the label of a next test node, 
such that the explanation 
remains stable by including 
such node pairs to 
prevent the impact of the disturbances. 
\eat{We introduce effective 
data structures to 
improve the efficiency, 
which dynamically 
stores a set of 
``disapproval'' 
structures at each layer, 
in a memory-efficient manner, 
and grows the disapprovals 
to gradually extend to
\rcws. }

\sstab
(4) For large graphs, 
we introduce a parallel algorithm to 
generate $k$-\rcws at scale. 
The algorithm 
distributes the %disapproval 
expansion and 
verification process to partitioned 
graphs, and by synchronizing  
verified $k$-disturbances, dynamically 
refines the search area of 
the rest of the subgraphs for 
early termination. 

\sstab
(5) Using real-world datasets, 
we experimentally 
verify our algorithms. 
We show that it is feasible 
to generate \rcws 
for large graphs. 
%For example, 
%it takes around 8 hours %time 
%to generate robust 
%explanations over real graphs 
%with one hundred million nodes and edges. 
%\warn{more}.
Our algorithms can produce  
familiar structures for domain experts, as well as for large-scale classification tasks. 
We show that they are 
robust for noisy graphs. We also demonstrate application scenarios 
of our explanations.

%\vspace{2ex}
\stitle{Related Work}.
%\label{sec:related} 
We summarize the related work as follows. 

\etitle{Graph Neural Networks}. 
Graph neural networks (\gnns) are deep learning models designed to tackle graph-related tasks in an end-to-end manner~\cite{wu2020comprehensive}. 
%\gnns takes as input a feature representation of a graph $G$, which is a pair $(X,A)$ with a feature matrix $X \in R^{n \times d}$ and an adjacency matrix $A \in \left\{0,1 \right\}^{m \times n}$.  Each node is represented by a $d$-dimensional feature vector in $X$~\cite{xu2021understanding, chen2020graph}. 
%\gnns aims to learn the best feature representation for graph tasks. 
Notable variants of \gnns include graph convolution networks (GCNs)~\cite{kipf2016semi}, attention networks (GATs)~\cite{velivckovic2017graph}, graph isomorphism networks (GINs)~\cite{xu2018how}, APPNP~\cite{gasteiger2018predict}, and GraphSAGE~\cite{hamilton2017inductive}). 
Message-passing based \gnns 
share a similar feature learning paradigm: for each node, update node features by aggregating neighbor counterparts. 
% update the features of a node by aggregating the counterparts from its neighbors. 
%to update its own features. 
The features can then be converted %as desired 
into labels or probabilistic scores for 
specific tasks. 
\gnns have demonstrated their efficacy 
on various tasks, including node and graph classification~\cite{kipf2016semi,xu2018how, zhang2018end, ying2018hierarchical, du2021multi}, and link prediction~\cite{zhang2018link}. 
%While this study makes case for \gnn-based node classification, our algorithms 
%treat 
%\gnns as ``blackbox'', and 
%are model-agnostic -- thus, can be 
%extended to generate explanations 
%for other \gnn-based tasks. 

%\vspace{-1ex}
\etitle{Explanation for GNNs}. 
Several approaches have been proposed to generate explanations for 
\gnns\cite{zeiler2014visualizing, schwarzenberg2019layerwise, huang2022graphlime, schlichtkrull2020interpreting, luo2020parameterized, ying2019gnnexplainer,cf-gnnexplainer, yuan2021explainability, yuan2020xgnn, huang2023global,zhang2022gstarx}. 
%Instance-level methods and model-level methods.
%Instance-level methods provide input-dependent explanations for each test graph, whereas model-level methods provide a global understanding of \gnns without considering specific input instances~\cite{yuan2022explainability}. 
For example, GNNExplainer~\cite{ying2019gnnexplainer} learns to optimize soft masks for edges and node features to maximize the mutual information between the original and new predictions and induce important substructures from the learned masks. SubgraphX~\cite{yuan2021explainability} employs Shapley values to measure a subgraph's importance and considers the interactions among different graph structures. XGNN~\cite{yuan2020xgnn} explores high-level explanations by generating graph patterns to maximize a specific prediction. GCFExplainer ~\cite{huang2023global} studies the global explainability of GNNs through counterfactual reasoning. Specifically, it finds a small set of counterfactual graphs that explain \gnns.
%\textcolor{magenta}{[AK: Add CF2 paper from Web Conference 2022, which does both factual and counterfactual.]} 
%%%% Wu: I added CF2 here; please check. %%%
CF2~\cite{tan2022learning} considers a learning-based approach to infer 
substructures that are both factual and 
counterfactual, yet does not consider 
robustness guarantee upon graph disturbance.  
\cfexplainer~\cite{cf-gnnexplainer} generates counterfactual explanations for GNNs via minimal edge deletions. 
Closer to our work is the generation of 
robust counterfactual explanations for \gnns~\cite{bajaj2021robust}. The explanation structure is required to be 
counterfactual and stable (robust), 
yet may not be a factual explanation. 
%\warn{@Dazhuo, please add more here and state 
%the difference.} 
These methods do not explicitly support all three of our objectives together: robustness,
counterfactual, and factual explanations; neither scalable explanation generation for large graphs are discussed. %in prior study. 

%\vspace{-1ex}
\etitle{Robustness of GNNs}. 
Robustness of \gnn models has been separately 
studied, such as  
certifiable robustness~\cite{bojchevski2019certifiable}. It verifies if 
a model remains robust under bounded disturbance of 
edges. 
In particular, 
robust training of \gnns that ensures such 
robustness has been investigated~\cite{guan2022robognn}. 
A general strategy is to identify 
adversarial edges that, if 
removed, changes the results of 
test nodes. If no such structure 
can be identified, the nodes are certified 
to be robust. 
%Compared to those works,
The key difference in our problem is that we consider robust substructures of the data that preserve the results of a given fixed model as well as its explanation structure. 
%This is different from robustness as measures for 
%different \gnn models. 
Therefore, prior robust learning 
cannot be used to identify 
our explanation structures due to different objectives. 

% Sheng's paper. % robust counterfactual explanation (Hanghang's paper.)

\eat{
\vspace{-3ex}
\etitle{Graph Views}. Graph views have been studied as a 
useful approach to access and query large graphs~\cite{mami2012survey}. A graph view 
consists of a graph pattern and a set of subgraphs as its 
matches via graph pattern matching. Graph views are shown to be 
effective in view-based query processing~\cite{fan2014answering}, 
summarization~\cite{song2018mining}, event analysis~\cite{zhang2020distributed}, 
%query optimization~\cite{zhang2021automatic}, benchmarking~\cite{vcebiric2019summarizing}, 
query suggestion~\cite{ma2022subgraph}, 
data cleaning~\cite{lin2019discovering}
and data pricing~\cite{chen2022gqp}. Several approaches 
have also been developed to discover 
graph views~\cite{song2018mining,lin2019discovering}. 

\vspace{.5ex}
To the best of our knowledge, this is the first work that exploits graph views to support  
queryable explanation  
for \gnn-based classification. Our approach 
is a post-hoc method 
that treats \gnns as 
black-box (hence does not require details 
from \gnns, but only the output 
from its last layer), does not 
require node/edge mask training, %effort, 
and generates explanations 
as views that are queryable, 
concise, and class label-specific, 
all in a user-configurable manner 
%These features are summarized in 
(Table~\ref{tab:comprehensive-analysis}). 
}
%\eat{
%with justified applications in 
%several emerging applications such as 
%graph generating~\cite{}, 
%model provenance~\cite{}, and 
%graph prompting~\cite{}.  
%}

%\vspace{-0.5ex}
\section{Preliminaries}
\label{sec:pre}
\vspace{-0.5ex}
%
%

%%%%%%%%%%%%%%%%%%%%%%%
We start with a review of \gnns. We then introduce our robust subgraph explanation structures. Important notations are summarized in Table~\ref{tab:notations}.

\begin{table}[tb!]
\begin{scriptsize}
\caption{Symbols \& Notations}
\label{tab:notations}
%%\vspace{-1mm}
%\textcolor{magenta}{[AK: needs to be modified as per updated Section 3.]}}
\vspace{-2ex}
\begin{tabular}{|c|p{0.3\textwidth}|}
\hline
\textbf{Symbol} & 
\textbf{Meaning} \\
\hline 
$G$ = $(V, E)$ & graph with nodes $V$ and edges $E$ \\
\hline 
$(X, A)$ & $X$: node feature matrix; $A$: adjacency matrix \\
\hline
$\M$; $M$ & a GNN-based classifier; inference 
function of $\M$ \\
\hline
$X^i$; \textbf{Z} & node embeddings at layer $i$ of $\M$; \newline 
logits (output) of inference process \\
\hline
$L$; $F$ & \# layers of $\M$; \# features per node \\
\hline
$G_w$; $G_s$ & a verified witness; a subgraph to be 
verified \\
\hline
$C$=$(G,G_s,V_T,M,k)$ & a configuration that specifies graph $G$; \newline
a subgraph $G_s$ to be 
verified as witness; \newline
inference function $M$ (of a 
\gnn $\M$); \newline test nodes $V_T$, and $k$ (in $k$-disturbance)\\
%\sout{$\G$; $\V$} & \sout{A set of graphs (graph database) for classification; Node group of $\G$} \\
\hline
\cw; \rcw; $k$-\rcw & counterfactual witness; \newline 
robust counterfactual witness; \newline $k$-robust counterfactual witness \\ 
\hline
$m^*_{l,c}(v)$ & worst-case margin of test node $v$ \\
\hline 
 %\begin{tabular}{@{}c@{}}feature representation of $G$: % \\ 
%($X$: feature matrix; $A$: adjacency matrix)\end{tabular}
%\\
%$X^k$ & the embedding of 
%node $v$ 
%(at layer $k$)\\
\eat{
\dq{$G$; $\V_T$} & \begin{tabular}{@{}c@{}} \dq{The graph that contains all test nodes;} \\ \dq{test nodes of $G$ for classification} \end{tabular} \\
\hline 
 %\sout{$G^l_s = (V_s, E_s)$; $\G^l_s$} & \begin{tabular}{@{}c@{}} \sout{An explanation subgraph $G_s^l$ induced by nodes $V_s$ \wrt class label $l$;} \\ \sout{a set of explanation subgraphs $\G^l_s$ \wrt class label $l$} \end{tabular} \\
\hline 
\dq{$G_w = (V_w, E_w)$} & \dq{A subgraph $G_w$ containing all test nodes $V_T \subseteq V_w$} \\
% \hline 
% \dq{$Witness$} & \dq{A witness if $\M(v,G)$ = $\M(v,G_w)$ = $l\in \L$} \\
\hline 
\dq{CW} & \dq{A counterfactual witness if $\M(v,G\setminus G_w)$ $\neq l$} \\
\hline 
\dq{$k$-RCW} & \dq{A robust counterfactual witness \wrt $k$ robustness. }
\hline
\dq{$\C$ = ($\theta$, $k$)} &
\begin{tabular}{@{}c@{}} 
\dq{A configuration that specifies} \\
\dq{explainability $\theta$ and local budget $k$}
\end{tabular}\\
}
\hline
%\sout{$\G_\V$} & \sout{A set of explanation views} \\
%\hline
\end{tabular}
\end{scriptsize}
%\warn{needs to be updated}} 
\vspace{-2ex}
\end{table}

\vspace{-1.5ex}
\subsection{Graphs and Graph Neural Networks}
\label{sec:gnns}

\vspace{-1.5ex}
\stitle{Attributed Graphs}.  
We consider a connected %, directed 
graph $G = (V,E)$, where $V$ is the set of nodes, and $E\subseteq V\times V$ is the set of edges. 
Each node $v$ carries a tuple ${\sf F}(v)$ 
of \st{node} attributes (or features) and their values. %Each node $v\in V$ (resp. edge $e\in E$) has a {\em type} $L(v)$ (resp. L(e)). 

\eat{
For example, the Graph Convolution Network (\gcn) \cite{kipf2016semi}, 
a representative \gnn model, adopts a general form %of the function 
as:
\begin{equation}
\label{eq-prop}
    X^k = \delta(\Hat{D}^{-\frac{1}{2}} \Hat{A} \Hat{D}^{-\frac{1}{2}} X^{k-1}  \Theta^{k})
\end{equation}

Here $\Hat{A} = A + I$, where $I$ represents the identity matrix and $A$ is the adjacency matrix of graph $G$. $X^k$ indicates node feature representation in the $k$-th layer, (with $X^0=X$ a matrix of input node features), where each row $X_v$ is a vector (numerical) encoding of a node tuple $A(v)$. 
The encoding can be obtained by, e.g., 
word embedding or one-hot encoding~\cite{gardner2018allennlp}. $\Hat{D}$ represents the diagonal node degree matrix of $\Hat{A}$, $\delta(.)$ is the non-linear activation function, and $\Theta^{k}$ represents the learnable weight matrix for the $k$-th layer.% ~\cite{kipf2016semi}.
}

%\vspace{-0.8ex}
%\stitle{GNN-based Classification}.
%Graph classification aims to correctly assign a categorical class label from a label set $\L$ to designated test graph structures of a given graph $G$.  It can be a node classification, when $G$ is a single large graph with a set of test nodes to be classified; or graph classification, when $G$ refers to a database (a set of graphs). 
%For this work, we make case for 
%node classification. 

%\vspace{-0.8ex}
\stitle{Graph Neural Networks (GNNs)}. 
\gnns~\cite{wu2020comprehensive} are a family of well-established deep learning models that extend traditional %convolutional 
neural networks to transform graphs into proper 
embedding representations for various downstream analyses such as node classification. 
\gnns employ a multi-layer message-passing scheme, through which the feature representation of a node $v$ in the next layer is aggregated from its 
counterparts of the neighborhoods of $v$ in $G$ at the current layer.
%\eetitle{Graph Neural Networks}. 
%A \gnn transforms $(X, A)$ to proper 
%representation (logits) for downstream tasks. 
A \gnn with $L$ layers iteratively gathers and aggregates 
information from neighbors of a node $v$ to compute 
node embedding of $v$. 
For example, the Graph Convolution Network (\gcn) \cite{kipf2016semi}, 
a representative \gnn model, adopts a general form %of the function 
as:
\begin{equation}
\label{eq-prop}
    X^i = \delta(\Hat{D}^{-\frac{1}{2}} \Hat{A} \Hat{D}^{-\frac{1}{2}} X^{i-1}  \Theta^{i})
\end{equation}

Here $\Hat{A}=A+I$, where $I$ is the identity matrix, and $A$ is the adjacency matrix of $G$. $X^i$ indicates node feature representation in the $i$-th layer ($i \in [0, L]$).  Here $X^0=X$, which refers to the matrix of input node features. Each row $X_v$ of $X$ is a vector encoding of a node tuple ${\sf F}(v)$.\footnote{ 
The encoding can be obtained by, e.g., 
word embedding or one-hot encoding~\cite{gardner2018allennlp}, among other featurization techniques.} The matrix $\Hat{D}$ represents the diagonal node degree matrix of $\Hat{A}$. 
The function $\delta(.)$ is a non-linear activation function, and $\Theta^{i}$ represents the learnable weight matrix at the $i$-th layer.% ~\cite{kipf2016semi}.

\ppnps, a class of Personalized PageRank based \gnns $\M$~\cite{bojchevski2019certifiable}, specifies 
the $\delta$ function (Eq.~\ref{eq-prop})  with a Personalized PageRank propagation matrix: 
%as 
$X^L =(1-\alpha)(I-\alpha\Hat{D}^{-1} \Hat{A})^{-1}\cdot X\Theta$. 
Here  $D$ is the diagonal matrix of node out-degrees with ${D}_{i i}=\sum_{j} {A}_{i j}$, and $\alpha$ is teleport probability. %\textcolor{red}{[AK: What are $\Hat{D}, \Hat{A}, \Theta$? $X, \Theta$ to be in bold?]}
Other notable \gnn variants are 
%vanilla \gcn ($c$=$n$=1), 
\gsage~\cite{hamilton2017inductive} 
that samples fixed-size neighbors, 
%\textcolor{blue}{($c$=0,$L$=1),} \textcolor{red}{[AK: how $c$=0,$n$=1 make it fixed size? Not clear.]}  
%MixHop~\cite{kapoor2019mixhop} with higher order 
%convolutional layers ($c=0$,$n> 1$), 
and \gat~\cite{velivckovic2017graph} that 
incorporates self-attention.  
%to learn weight coefficients 
%for neighbors.

\eat{Denote the output features 
$\textbf{h}^i_v$ (with $v$ ranges over $V$) at layer 
$i$ as $\textbf{X}^i$. A \gnn typically computes 
$\textbf{X}^i$ as 
\begin{equation}
\label{eq:prop}
\textbf{X}^i = \delta (\parallel^n_{j=c}\tilde{A}^j \textbf{X}^{i-1}\textbf{W}^i_j)
\end{equation}
\textcolor{blue}{where $\parallel$ denotes the horizontal concatenation operation} 
%\textcolor{red}{[AK: Why aggregation operation of GNN needs to be only horizontal concatenation?] -- here we didn't say it  only has to be be, just another way of summarizing representative ones; I added "typically" here}, 
$\textbf{W}^i_j$ refers to the learnable weight matrix of order $j\times j$ %\textcolor{red}{[AK: shall we write: $\textbf{W}^i$ refers to the learnable weight matrix of order $j\times j$?]}
at layer 
$i$, $\delta(\cdot)$ is an activation, and 
$\tilde{A}$ is a normalized adjacency matrix, 
%\textcolor{red}{[AK: what are $n$ and $c$?] 
% n should be L - number of layers; c refers to 
% }
Notable \gnn variants are 
%vanilla \gcn ($c$=$n$=1), 
\gcn and \gsage~\cite{hamilton2017inductive} 
that sample fixed-size neighbors 
%\textcolor{blue}{($c$=0,$L$=1),} \textcolor{red}{[AK: how $c$=0,$n$=1 make it fixed size? Not clear.]}  
%MixHop~\cite{kapoor2019mixhop} with higher order 
%convolutional layers ($c=0$,$n> 1$), 
and \gat~\cite{velivckovic2017graph} that 
incorporates self-attention 
%to learn weight coefficients 
for neighbors. 
%Another class of 
%\gnns, \ppnps, 
%leverages personalized PageRank.  
%which mitigates over-smoothing~\cite{cai2020note}. 
%In such models, 
%${Z = \boldsymbol{\Pi} \textbf{X}^L}$, 
%\textcolor{red}{[AK: will this be: $\textbf{X}^L$, and not $\textbf{X}^n$?]} 
where  $\boldsymbol{\Pi}$ is a PageRank matrix, and $\textbf{X}^L$ is the output features from the last layer of the GNN.
}

\eat{
\eetitle{Training}. 
Given a set of labeled training nodes $V_{Tr}$,  
a \gnn-based classifier $\M$ outputs ``logits'' ${Z \in \mathbb{R}^{|V| \times |L|}}$  %the node embeddings that are not-yet normalized. 
that are fed to a softmax layer to be transformed to $Z'$ that encodes the probabilities of assigning 
a class label to a node. 
For example, the training of $M$ may minimize a loss function 
$\mathcal{L}(Z, A)$ = $-\sum_{v\in V_{Tr}} l \ln Z'_v$,
%\textcolor{red}{[AK: there are other loss functions too, we should say: ''for example,'']}
where $l$ is the true label of a training node $v\in V_{Tr}$, 
%\in V_{T}$$,
%\textcolor{red}{[AK: We also use $V_{T}$ to denote test set later]} 
and $Z'_v$ is  
the embedding of a training node $v$ in $V_T$.
\textcolor{red}{[AK: However, we defined $Z'$ as encoding of probabilities and not node embeddings.]}
%out of $Z'$.
%The smaller $L(Z, A)$ is, the better.  
%%
$\mathcal{L}$ can also be specified to minimize 
a task-specific loss. 
}

\stitle{\gnn-based node classification}.  
%Given a set of test nodes $V_T$ = $\{v_1$, $\ldots,$ $v_n\}$ in $G$, 
Given a graph $G$ = $(V,E)$ and a set of labeled 
``training'' nodes 
$V_{Train}$,  %$V_T\subseteq V$, 
%\sheng{here suggests using labeled nodes since nodes without labels still are used for training in GCN} $V_T\subseteq V$ 
the node classification task is to learn a 
{\em \gnn-based classifier} $\M$
%that accesses $G$ 
%(including all the node 
%features and edges) 
to infer the labels of a set of unlabeled 
test nodes. A \gnn-based classifier   
$\M$ of $L$ layers (1) takes as input $G$=$(X,A)$ and learns to generate representations  
$X^L$, converts them into labels that best fit the labels of $V_{Train}$ in $G$, and (2) assigns, for  
each unlabeled test node $v\in V_T$, a class label $l\in\text{\L}$. 
%(denoted by an inference process $\M(v_i,G)$ = $l$; $v_i\in V_T$). 

\eetitle{Inference}. The label assignment is determined 
by an ``inference process'' $M$ of $\M$.  
%which is denoted by a function 
%$M(v, G)$. 
Given a set of test nodes 
$V_T$, the process outputs a ``logits'' matrix ${Z \in \mathbb{R}^{|V_T| \times |L|}}$ 
%to be 
%converted to 
%likelihood scores %\textcolor{red}{[AK: $Z$ or $Z'$?]}  
in $M$'s output layer, for each node $v\in V_T$ 
and each label $l\in L$, to be converted 
to likelihood scores of 
$l$ given $v\in V_T$; the higher, the more likely 
$v$ is assigned with $l$. A softmax 
function is then used to 
convert the logits to corresponding 
labels. To make a (simplified) difference, 
(1) we consider the inference process as a 
polynomial-time computable function   
$M(v,G)$ = $l$ ($v\in V_T$), 
with a ``result'' $l$ for $v$; 
and (2) we refer to the 
output logits $Z$ of 
$M$ simply as ``output'' of $M$. 

In particular, we consider three 
trivial cases: $M(v,v)$ = $l$, 
and denote $M(v, \emptyset)$ 
or $M(\emptyset, G)$ as ``undefined''. 
%the likelihood scores 
%generated at the output layer of $M$. 

\eetitle{Fixed and deterministic GNN}. 
Given a set of test nodes $V_T$, 
%we say that 
a \gnn $\M$ is {\em fixed}
%\textcolor{red}{[AK: We use both $\M$ and $M$ to denote GNN model, which needs to be fixed]-- yes, $\M$ are models, 
%and $M$ refers to inference function}
if its inference process $M(v,G)$ does not change 
for any input test node $v$ in $V_T$. 
That is, it has all factors 
which determine the computation of $M(\cdot)$ 
such as layers, 
model parameters, among 
others, fixed. 
We say that $\M$ is {\em deterministic} 
if $M(\cdot)$ always generates the same output 
%and results 
for the same input. 
In this work, 
we consider a fixed, deterministic 
\gnn $\M$. In practice, most of 
the \gnns are 
deterministic for the need of 
consistency and accuracy guarantees.  

\eat{
\stitle{Remarks}. We remark on the difference between 
inference process of $M$ that ``outputs 
probabilistic scores'' and ``outputs in a 
stochastic manner''. % (or ``non-deterministic''). 
A \gnn may take a stochastic process $M(\cdot)$, 
that is not ``deterministic'' as 
it outputs varies even with 
the same input. 
}

\eat{
$\mathcal{G} = \{G_1, G_2, \dots, G_m\}$
and a set of class labels $\L$, 
a {\em \gnn-based classifier}  
$\M$ of $k$ layers (1) takes as input \st{their feature representations} $G_i$=$(X_i,A_i)$ $(i\in [1,m])$, 
\st{and} learns to generate representations  
$X_i^k$, \st{and} converts them into labels that best fit a set of labeled 
graphs (``training examples''), and (2) assigns, for  
each unlabeled ``test'' graph $G_i\in \mathcal{G}$, a class label $l\in\L$ 
(denoted as $\M(G_i)$ = $l$). 
}

\subsection{Explanation Structures}
\label{sec-expstructure}

To characterize our explanation structures 
as subgraphs, we introduce a 
notation of {\em witnesses}. 

\stitle{Witnesses}. 
Given a graph $G$, class labels $\text{\L}$, and a \gnn $\M$, 
a test node $v\in V_T$ for which $M(v,G)$=$l\in \text{\L}$, 
we say that a %connected 
subgraph $G_w$ of $G$ is 

\sstab
(1) a {\em factual witness} of  %classification 
result $M(v,G)$=$l$, 
if $M(v,G_w)=l$\footnote{Note that this indicates $G_w$ containing $v$ as 
one of its nodes.}; 

\sstab 
(2) a {\em counterfactual witness} (\cw) 
of $M(v,G)$=$l$,  
if (a) it is a factual witness of $M(v,G)$=$l$, and 
(b) $M(v,G\setminus G_w)$ $\neq l$; 
and 

\sstab 
(3) a {\em $k$-robust counterfactual witness} $k$-\rcw of 
$M(v,G)$=$l$, if for {\em any} graph $\widetilde{G}$ %of $G$ 
obtained by a {\em $k$-disturbance} 
on $G\setminus G_w$, 
$G_w$ remains to be a 
\cw for $M(v,\widetilde{G})$=$l$.  
%Specifically, we refer to 
%$G_w$ as a $k$-\rcw. 

\vspace{.5ex}
Here (1)  $G\setminus G_w$ is the
graph obtained by removing 
all the edges of $G_w$ from $G$, 
while keeping all the nodes; and 
(2) a {\em $k$-disturbance} to a graph $G$ refers to an operation 
that ``flips'' {\em at most} $k$ node pairs (denoted as 
$E_k$) from $G$, including 
edge insertions and removals. 
For a $k$-disturbance posed on 
$G\setminus G_w$, it only ``flips'' a set of 
node pairs $E_k$ that are not in $G_w$, 
\ie it does not insert nor remove edges of $G_w$. 
%nor removes edges from $G_w$. 
A $k$-disturbance may capture a natural 
structural difference 
of graphs (\eg  chemical compounds that 
differ with at most $k$ bonds).  
%or a cyber attack with at most 
%$k$ steps 
%(see Example~\ref{exa-motivation}). 

\vspace{.5ex}
Given a set of test nodes $V_T\subseteq V$, 
we say that $G_w$ is a $k$-\rcw of $V_T$ \wrt\ %$\M$, 
\gnn $\M$, 
if for every test node $v\in V_T$, 
$G_w$ is a $k$-\rcw for the result $M(v,G)$=$l$. Note that 
%by definition, 
a $k$-\rcw $G_w$ of the test nodes $V_T$ 
naturally contains $V_T$ as its nodes.

%\eetitle{``Trivial'' cases}. 
We define several 
{\em trivial cases}. 
(1) %When a subgraph $G_s$ is 
A single node $v\in V_T$ is a {\em trivial} factual witness 
of $M(v, G)$=$l$, as $M(v,v)$=$l$ trivially; 
it is a {\em trivial} \cw because $M(\emptyset,G)$ 
is undefined.   
(2) The graph $G$ is a trivial factual witness 
%for $M(v, G)$ = $l$, 
as $M(v, G)$ trivially equals to itself;   
%for the fixed deterministic \gnn $\M$; and 
a trivial \cw for $M(v, G)$=$l$, 
as it is a trivial factual witness,  
and $M(v,\emptyset)$ is undefined; and also 
a trivial $k$-\rcw, since 
no $k$-disturbance can be 
applied to $G\setminus G$ = $\emptyset$. 
In practice, we aim at finding ``non-trivial'' $k$-\rcws  
as %connected 
subgraphs with at least an edge, and are not $G$ themselves. 

\begin{example}
\label{exa-rcw}
%%%%%%%%% Let's move it to full version %%%%%%
\eat{
Consider graph $G_1$ in Fig.~\ref{fig-motivation}, (1) on the left graph, the bold structure indicates a nitro group, the prediction of the model with only the nitro group is same as the molecule itself, since the nitro group is the key structure to trigger mutagenicity. This behavior of the nitro group indicates that it is a factual witness. Meanwhile, the remaining graph by removing the nitro group will result in an opposite prediction, since current atoms are stable with a maximum number of connected bonds. Therefore, the nitro group is also a counterfactual witness (\cw). (2) On the middle graph, we show a similar molecule variant by removing two edges from the original molecule, the variant occurs with one additional mutagenic component, which is the red bold structure, i.e., the aldehyde structure. If we also want to obtain witness and \cw for the variant, it will result in two drawbacks: we need to generate it by executing the explainability method again with additional time cost; and different explanation structures may confuse the domain expert while interpreting a group of molecule variants. (3) Therefore, it is critical to explain them with a unified explanation structure, generated at once. On the right graph, we show a robust counterfactual witness, indicated as the bold structure, to tackle the drawbacks: instead of generating explanations by considering only one molecule, we consider a group of similar molecules indicated by chemical bonds' connection differences. 
}
%\eat{\warn{Give example to clarify a witness, a cw, and an rcw. In this example, we should show the difference between a witness, a cw and an RCW, and why eventually we need an RCW. Specify what k is in the k-rcw.}}
% \dq{We derive from Example 1.1, considering a set of molecules whose ground truth counterfactual explanations are either Naphthalene, Pyridine, or Quinoline ring. A witness refers to the ring which can provide the same prediction for the original molecule. If the model predicts the remaining graph as different classes by removing this ring, then the witness is considered a counterfactual witness (CW). All Naphthalene, Pyridine, and Quinoline rings satisfy the conditions of witness and CW. However, if the requirement is one explanation for all the molecules, then a fixed $k$ number of edges are removed from any of the molecules in the set, excluding the ring. If the ring remains to be a CW for all the molecules, then the ring is considered a $k$-RCW which explains the set of molecules. In this case, both Naphthalene and Pyridine fail to meet the requirement, if the remaining part is the ring itself. The Quinoline ring is the $k$-RCW that we are looking for. 
The graph $G_2$ (Fig.~\ref{fig-motivation}) has a test node `breach.sh', which is the target file for a data breach and is labeled ``vulnerable''. 
A valid attack path must access a privileged file (`/.ssh/id$\_$rsa' or `/etc/sudoers') and the command prompt (`cmd. exe') beforehand. 
Prior to executing the true attack path (via red solid edges), the malware performs a non-aggressive deceptive `DDoS' attack (via red dashed edges) to mislead the prediction of $M$, where the GNN $\M$ is trained to identify whether a file is ``vulnerable'' (in orange) or not (in green).
% A data breach gains admin access from critical system files such as `/.ssh/id$\_$rsa' or `/etc/sudoers', to execute `breach.sh' via `cmd.exe'. 

\sstab
(1) There are two primary {\em witnesses} to explain the test node's label: (`cmd.exe', `/.ssh/id$\_$rsa', `breach.sh') and (`cmd.exe', `/etc/sudoers', `breach.sh'), each includes a factual path that leads to the breach and serves as evidence to explain the test node's label as ``vulnerable''. 

\sstab
(2) Furthermore, a \cw is shown as the combined subgraph of these two witnesses, denoted as $G_{w_2}$ (shaded area of $G_2$ in Fig.~\ref{fig-motivation}). Removing all edges in $G_{w_2}$ from $G$ will reverse the classification of the test node to ``not vulnerable'', as the remaining part is insufficient for a breach path. 

\sstab
(3) $G_{w_2}$ is also a $k$-\rcw for the test node's label, where $k$=$3$, which is the maximum length of a deceptive attack path. It indicates that for any attack path involving up to $k$ deceptive steps, $G_{w_2}$ remains unchanged. It, therefore, identifies the true important files which must be protected to mitigate the impact of deceptive attacks. 
%\textcolor{red}{[AK: Why `cmd.exe' is not in orange? And, why `/root' is not part of k-RCW?]}. 
%\mengying{Reply AK: [(1) As discussed in the meeting, we only classify files, so 'cmd.exe' is not in orange. (2) In an attack, three components are vital: execution (‘cmd.exe’), permission (‘id\_rsa’ or ‘soduers’), and target (‘breach.sh’). ‘/root’ is a folder that does not provide permission alone. So it can be part of k-RCW, but not a part of the minimal k-RCW. Remove the edges from the current minimal k-RCW already reversed the classified result of 'breach.sh']}
\end{example}

We further justify that 
\rcws serve as a desirable 
explanation structure 
by showing %several
an invariant property. 
\eat{
\textcolor{red}{[AK: Another property could be to prove that given graph $G$, a set of test nodes 
$V_T$ and a fixed deterministic \gnn $M$, 
a $k$-\rcw for $V_T$ always exists. I think it is important to discuss, since otherwise what is the guarantee that Alg-1 input will always have some output?]}
}

%\vspace{.5ex}
\begin{lemma} 
\label{lm-k}
Given graph $G$, a set of test nodes 
$V_T$, and a fixed deterministic \gnn $\M$ with inference function $M$, 
a $k$-\rcw for $V_T$ is a $k'$-\rcw for $V_T'$, for any $k'\in[0,k]$, %any subset 
$V_T'\subseteq V_T$. 
\end{lemma}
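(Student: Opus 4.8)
The plan is to prove the statement by decomposing it into two independent monotonicity properties and then composing them: monotonicity in the \emph{test set} ($V_T' \subseteq V_T$) and monotonicity in the \emph{disturbance budget} ($k' \le k$). Throughout, I fix $G_w$ to be a $k$-\rcw for $V_T$ \wrt\ $\M$, and for each test node $v$ I write $l = M(v,G)$ for its result, which is well defined and stable because $\M$ is fixed and deterministic and neither the budget $k'$ nor the chosen subset $V_T'$ enters the computation of $M(v,G)$.

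First, for the test-set direction, I would unfold the definition of a $k$-\rcw of a set: $G_w$ is a $k$-\rcw of $V_T$ exactly when, for \emph{every} $v \in V_T$, $G_w$ is a $k$-\rcw of the individual result $M(v,G) = l$. Since this is a universal statement over $V_T$, restricting to any $V_T' \subseteq V_T$ simply drops some of the conjuncts, so the per-node condition survives for each $v \in V_T'$. The witness-containment requirement also stays intact: as noted after the definition, a $k$-\rcw of $V_T$ contains $V_T$ as its nodes, hence it contains the smaller set $V_T'$ as well. Thus $G_w$ is a $k$-\rcw for $V_T'$.

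Second, and this is the crux, I would argue monotonicity in the budget. The key observation is that a $k$-disturbance flips \emph{at most} $k$ node pairs of $G \setminus G_w$. Consequently, whenever $k' \le k$, every $k'$-disturbance is in particular a $k$-disturbance, so the family of graphs $\widetilde{G}$ reachable by a $k'$-disturbance on $G \setminus G_w$ is contained in the family reachable by a $k$-disturbance. Fixing any $v \in V_T'$, the hypothesis guarantees that $G_w$ is a \cw of $M(v,\widetilde{G}) = l$ for \emph{all} $k$-disturbed $\widetilde{G}$, and hence, a fortiori, for the subfamily of $k'$-disturbed graphs. Therefore $G_w$ is a $k'$-\rcw of $M(v,G) = l$. (When $k' = 0$, the only $0$-disturbance is the identity $\widetilde{G} = G$, which recovers the plain \cw condition already subsumed by the $k$-robust hypothesis.)

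Composing the two steps yields the claim: for each $v \in V_T'$, the subgraph $G_w$ is a $k'$-\rcw of $M(v,G) = l$, so $G_w$ is a $k'$-\rcw for $V_T'$. I do not anticipate a deep obstacle, since both directions reduce to set-inclusion arguments over universally quantified conditions; the only real care needed is to keep the labels $l = M(v,G)$ fixed and to note that the disturbances act only on $G \setminus G_w$, so that shrinking either the test set or the budget never enlarges the collection of adversarial graphs that $G_w$ is required to survive.
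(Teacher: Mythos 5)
Your proposal is correct and rests on the same key observation as the paper's proof, namely that every $k'$-disturbance with $k'\le k$ is itself a $k$-disturbance, so the family of adversarial graphs only shrinks; the paper phrases this as a proof by contradiction (a disproving $k'$-disturbance would already disprove $k$-robustness), which is just the contrapositive of your direct a fortiori argument. The test-set direction is handled identically in both, by restricting a universally quantified condition to a subset.
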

%% later we'll use this as an early termination strategy: if there is no way 
% to find a k'-\rcw for M(v, G), we early stop, as 
% that means no $k$-\rcw exists for $M(v,G)$. 

\vspace{-1mm}
\begin{proof}
We first prove that 
for any node $v\in V_T$, 
a $k$-\rcw $G_w$ for $M(v,G)$ 
remains to be a $k'$-\rcw 
for $M(v,G)$, for any $k'\in[0,k]$. 
%\textcolor{red}{[AK: $M(v,G)$ not defined in the statement of Lemma 1].}
One can verify the above result by contradiction. 
Assume that there is a $k'\in [0,k]$, such that 
a $k$-\rcw $G_w$ is not a $k'$-\rcw, then 
(a) $G_w$ is not a factual witness, or 
it is not counterfactual, both contradict 
to that $G_w$ is an \rcw; 
or (b) $G_w$ is a \cw but is not robust. Then there exists a $k'$-disturbance with $k'$ edges $E_k'$ in 
$G\setminus G_w$, which 
``disproves'' that $G_w$ is a $k$-\rcw. As $k'\leq k$, 
the same $k'$-disturbance prevents $G_w$ to be a $k$-\rcw by definition. 
This contradicts to that $G_w$ is a $k$-\rcw. 
Hence, $G_w$ remains to be a 
$k'$-\rcw for $M(v,G)$. 
As the test node $v$ ranges over 
$V_T$, the above analysis holds 
to verify that $G_w$ remains to be a 
$k'$-\rcw for any subset 
$V_T'\subseteq V_T$. 
Lemma~\ref{lm-k} thus follows. 
\end{proof}
\eat{
%% could be useful for early termination/pruning and for reducing communication cost in parallel alg. 
\vspace{.5ex}
\begin{lemma} 
\label{lm-anti}
Given graph $G$, a test node 
$v\in V_T$, and a \gnn $M$, 
for any subgraph $G_s$ 
of $G$ that is not 
a \cw for 
$M(v, G)$=$l$, $G_s$ is not 
a $k'$-\rcw of $M(v, G')$=$l$,  
for any $k'\geq k$ and any graph $G'$ 
with result $M(v, G')$=$l$, 
where $G'$ is 
obtained by inserting any edges to $G$ 
without changing its node set. 
%inserting an edge
%$G$ with any new edges. 
\end{lemma}
\begin{proof} 
Let $G'$ be a graph obtained by 
inserting an arbitrary set of edges to $G$. 
As $G_s$ is not a $k$-\rcw of 
$M(v, G)$, we consider the 
following cases. 
(1) $G_s$ is %\textcolor{blue}{
not a factual 
witness of $M(v,G)$=$l$. For 
$G'$ obtained by edge insertions, 
%(a) if $G_s$ is not a factual witness, 
%\textcolor{red}{[AK: What is non factual witness? A witness means it is factual, right, as we wrote in point (1) in the definition of `Witnesses': ``a (factual) witness ...''. Can we just use `witness'?]}
then it is not a $k$-\rcw by definition; 
(b) if $G_s$ is a 
factual witness in $G$, 
\ie $M(v, G)$ = $l$, 
there are further two cases. 
(b.1) $G_s$ is not a \cw in $G$. 
In this case, it is not a 
$k$-\rcw for $G'$. 
(b.2) $G_s$ remains to be a 
\cw in $G'$ 

but not a $k$-\rcw 
in $G$. Thus there must exist a 
%set $E_k$ of at most $k$ edges 
%in $G\setminus G_s$ whose removal 
%prevents  
%$G_s$ to be a \cw. 
$k$-disturbance that removes a set of 
edges $E_k$ and prevents $G_s$ to be a $k$-\rcw 
for $V_T$ \wrt $\M$. 
Observe that $E_k\subseteq 
G'\setminus G_s$. \textcolor{blue}{Hence, 
removing $E_k$ 
already prevents $G_s$ 
to be a $k$-\rcw in $G$} \textcolor{red}{[AK: Why? Does not CW in $G$ ensure not CW in $G'$?]}. 
Thus, $G_s$ is not a $k$-\rcw 
in $G'$ by definition. 
(2) $G_s$ is a factual witness  
%\textcolor{red}{[AK: can we replace it with `a witness'?]}
but not a \cw 
in $G$. Following the same  
analysis as above from 
case (b.1), $G_s$ is not a $k$-\rcw 
in $G'$. 
(3) $G_s$ is a \cw 
%\textcolor{red}{[AK: $G_s$ is a witness and a CW?]},
but not a 
$k$-\rcw. Following the same analysis 
starting from case (b.2), 
%\textcolor{red}{[AK: why not following the same  
%analysis as above from 
%case (b.1)?] -- because b.1 is assumming Gs is not a CW -- not  a case for (3) here}, 
the same result holds. 

Putting these together, 
Lemma~\ref{lm-anti} follows. 
\end{proof} \textcolor{red}{[AK: Is it possible to add a few sentences why these lemmas would be helpful in the later part of this paper?] -- [Wu: will add these after I did my pass 
of the algorithm, to decide what to be best put here.]}
}
%%%% witnesses & counterfactual -- provenance structure. 

%%% robustness in terms of edge disturbance. 
We next study two classes of 
fundamental problems for \rcws, notably, verification and generation. 
%and investigate their 
%hardness and feasibility results. 

%\textcolor{red}{[AK: in introduction, we stated about minimal k-RCW. But, the ''minimality'' aspect in not studied in Sections II, III, IV.] -- a shortened 
%outline added - to be addressed in full version; 
%downplayed/removed in introduction - no longer 
%a major contribution.} 

% \section{Quality Measures for Explanations}
\section{Verification of Witnesses}
\label{sec-expsub}

For the ease of presentation, 
we introduce an input 
configuration for verification problem. 

\stitle{Configuration}. 
A configuration $C$ = $(G, G_s, V_T, M, k)$ 
specifies a graph $G$ with a set of 
test nodes $V_T$, a fixed, deterministic 
\gnn inference process $M$, an integer $k$ 
to specify $k$-\rcw, 
and a subgraph $G_s$ of $G$. 
Specifically, a counterfactual witness \cw is a $0$-\rcw ($k$=0), \ie 
no robustness requirement is posed. 

\subsection{Verification Problem}
\label{sec-verify}

Given a configuration 
$C$ = $(G, G_s, V_T, M, k)$, 
the verification problem 
is to decide if $G_s$ is a 
$k$-\rcw for $V_T$ \wrt $M$. 
We first study the complexity of 
two special cases. 

\stitle{Verification of factual witnesses}. 
The witness verification problem  
takes as input a configuration 
$C$ = $(G, G_s, V_T, M, 0)$ 
and decides if $G_s$ is a 
factual witness for $V_T$, 
\ie for every test node $v\in V_T$, 
$G_s$ is a factual witness. 

We have the following result. 

\begin{lemma}
\label{lm-verifyw}
The witness verification problem is in \PTIME. 
\end{lemma}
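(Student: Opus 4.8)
The plan is to exhibit a polynomial-time decision procedure that follows directly from the definition of a factual witness. Recall that $G_s$ is a factual witness of the result $M(v,G)=l$ exactly when $M(v,G_s)=l$, and that $G_s$ is a factual witness for $V_T$ when this holds for every $v\in V_T$. Since the target label of each test node is the result $l=M(v,G)$ itself (and not an externally supplied label), the problem reduces to checking, for each $v\in V_T$, whether $M(v,G_s)=M(v,G)$.

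The algorithm I would run is as follows. First, for each $v\in V_T$ test whether $v$ is a node of $G_s$; if some $v$ is not, then $M(v,G_s)$ is undefined, so $G_s$ fails to be a factual witness and we return ``no''. Otherwise, for each $v\in V_T$ compute the two labels $M(v,G)$ and $M(v,G_s)$ by invoking the inference process on $G$ and on the subgraph $G_s$, and compare them. The algorithm returns ``yes'' if and only if $M(v,G_s)=M(v,G)$ for all $v\in V_T$. Correctness is immediate from the definition of factual witness together with the assumption that $\M$ is fixed and deterministic, so that each invocation of $M(\cdot)$ yields a single well-defined label per node.

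For the complexity, I would invoke the standing assumption (Section~\ref{sec:gnns}) that the inference process $M$ is a polynomial-time computable function of its graph argument: for an $L$-layer message-passing \gnn, one forward pass is $L$ rounds of feature aggregation over the edges followed by a fixed activation, costing time polynomial in $|V|$, $|E|$, the feature dimension $F$, and $L$. The algorithm performs at most $2|V_T|\le 2|V|$ such forward passes, together with $|V_T|$ label comparisons and the membership test $V_T\subseteq V(G_s)$, all bounded by a polynomial in the size of the configuration $C$. Hence the total running time is polynomial and the witness verification problem lies in \PTIME.

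I expect essentially no obstacle here: the only points requiring care are the bookkeeping of the undefined case (a test node absent from $G_s$) and the observation that the target label is $M(v,G)$ rather than a given $l$, both of which are dispatched by the explicit checks above. The whole argument rests on the single structural fact that \gnn inference is polynomial-time computable, which is exactly what makes the per-node reduction go through.
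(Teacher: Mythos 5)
Your proposal is correct and matches the paper's argument: the paper's \verifyw likewise just runs the \PTIME inference process $M$ on $G_s$ (via an adjacency matrix with non-$G_s$ entries flipped to $0$) for each $v\in V_T$ and checks whether $M(v,G_s)=M(v,G)$, with the complexity bound resting on the same fact that a forward pass of a fixed deterministic \gnn is polynomial in $L$, $|V|$, $|E|$, and $F$. Your extra bookkeeping (the membership test $V_T\subseteq V(G_s)$ and the undefined case) is consistent with the paper's footnoted convention and does not change the approach.
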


%%%%%% PUT to full version %%%%%%%
\eat{
\begin{proof}
We provide a \PTIME algorithm, 
denoted as \verifyw, for 
the witness verification problem. 
By definition, it suffices to verify, for 
each node $v\in V_T$ with $M(v, G)$ = $l$, 
if $M(v, G_s)$ = $l$. To this end, 
\verifyw simply performs the inference process of \gnn $M$ over $G_s$, which takes as input the node features $X$, 
%\textcolor{red}{[AK: $X$ in bold, to be consistent with notations in ''Graph Neural Networks (GNNs)'' paragraph, Section II?] - now seems we are not using bold for X 
%consistently} 
but a revised adjacency matrix %$A_s$ 
with entries ``flipped'' to $0$ 
if the corresponding edges are not in $G_s$. 
As $M$ is a fixed deterministic 
model, the inference cost is in \PTIME. 
\end{proof}
}

%\vspace{1ex}
Similarly, the verification problem  
for \cw is to decide if a given 
subgraph $G_s$ of $G$ is a 
counterfactual witness for $V_T$. 
We show that this does not 
make the verification harder. 

\begin{lemma}
\label{lm-verifycw}
The verification 
problem for \cw is in \PTIME. 
\end{lemma}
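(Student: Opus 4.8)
The plan is to reduce counterfactual-witness verification to a constant number of inference passes of the fixed deterministic \gnn\ $\M$, each of which is polynomial by assumption. Recall that $G_s$ is a \cw\ for $V_T$ exactly when, for every test node $v\in V_T$ with $l = M(v,G)$, two conditions hold: (a) $G_s$ is a factual witness, \ie $M(v,G_s)=l$; and (b) the counterfactual condition $M(v,G\setminus G_s)\neq l$. Condition (a) is precisely the factual-witness check already shown to be in \PTIME\ in Lemma~\ref{lm-verifyw}, so the only new ingredient is deciding (b), which is itself a single label comparison once the relevant inference output is available.

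Concretely, I would design an algorithm \verifycw\ as follows. First, run the inference process once over $G$ to obtain the label $l = M(v,G)$ for each $v\in V_T$ (one pass yields all labels simultaneously). Second, invoke the factual-witness procedure of Lemma~\ref{lm-verifyw} to test $M(v,G_s)=l$ for every $v\in V_T$. Third, construct $G\setminus G_s$ by taking the adjacency matrix of $G$ and flipping to $0$ every entry corresponding to an edge of $G_s$, while retaining all nodes and their features $X$; run the inference process once over this graph and compare $M(v,G\setminus G_s)$ against $l$ for each $v$. The algorithm accepts iff every factual check returns $l$ and every counterfactual check returns a value different from $l$. Since $\M$ is fixed and deterministic, each inference pass is \PTIME, there are only a constant number of such passes, and the remaining work is $O(|V_T|)$ label comparisons; hence \verifycw\ runs in polynomial time.

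The only delicate point, rather than a genuine obstacle, is the treatment of the boundary cases of $M$. Because $G\setminus G_s$ keeps all nodes of $G$, its node set is nonempty and $v$ is always present, so the ``undefined'' cases $M(v,\emptyset)$ and $M(\emptyset,G)$ never arise. If a test node $v$ happens to become isolated in $G\setminus G_s$, the effective receptive field collapses to $\{v\}$ and the inference simply returns $M(v,v)=l$, correctly signalling that condition (b) fails; this is handled automatically by running the pass and needs no special casing. Consequently the correctness of \verifycw\ follows directly from the definition of a \cw, and the polynomial bound is immediate from the polynomial cost of the inference process.
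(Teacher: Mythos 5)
Your proposal is correct and follows essentially the same route as the paper's proof: construct $G\setminus G_s$ by zeroing the adjacency entries of $G_s$'s edges, reuse the factual-witness check of Lemma~\ref{lm-verifyw}, and run a constant number of \PTIME inference passes of the fixed deterministic $M$ to compare labels. Your additional remark on the boundary cases ($M(v,\emptyset)$ never arising because $G\setminus G_s$ retains all nodes) is a small but welcome refinement that the paper leaves implicit.
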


%%%%5 MOVE TO FULL VERSION%%%%
\eat{
\begin{proof}
We provide a second procedure, 
denoted as \verifycw, as a constructive 
proof. By definition, it performs the 
following. (1) It constructs 
a graph $G'$ = $G\setminus G_s$.  
%by disturbing all the edges 
%of $G_s$ (while keeping all the nodes). 
(2) For each node 
$v\in V_T$ with assigned label $l$, 
(a) it first invokes \verifyw to 
verify if $G_s$ is a factual witness of $M(v,G)$, 
in \PTIME; 
and (b) If so, it invokes \verifyw to check if $M(v, G')\neq l$. 
If $G_s$ fails the test at any 
node $v\in V_T$, it returns no. 
Otherwise, $G_s$ is a \cw for 
$V_T$. As \verifyw is in \PTIME (Lemma~\ref{lm-verifyw}), 
\verifycw remains to be in \PTIME. 
\end{proof}
}

%\vspace{-1ex}
%\stitle{Remarks}. 
We prove the above two results with 
two verification algorithms, 
\verifyw and \verifycw, which 
invoke the inference process $M$ of 
\gnn $\M$ to check if the label 
$M(v, G_s)$ and $M(v, G\setminus G_s)$ 
remain to be $l=M(v, G)$, respectively. 
One may justify that 
\verifyw and \verifycw are in \PTIME by observing that
the inference function $M$ is \PTIME 
computable, with a cost typically 
determined by $L$, $|V|$, $|E|$, $d$, and $F$, 
where $L$ is the number of layers of the \gnn
$\M$, $d$ = $\frac{|E|}{|V|}$ is the average degree of $G$, and $F=|{\sf F}(v)|$ is the number of features per node.  
%(the dimension of feature matrix $X$ \textcolor{red}{[$X$ in bold?]}). 
For example, the inference cost 
of message-passing based \gnns (\eg \gcns, \gats, \gsage), or PageRank based \ppnps is $O(L|E|F+L|V|F^2)$~\cite{chen2020scalable,%zhou2021accelerating
gasteiger2018predict}.

%We leave the detailed proofs  in~\cite{full}. 

\vspace{.5ex}
Despite the verification of witnesses 
and \cw is tractable, its \rcw counterpart is nontrivial. 

\begin{theorem}
\label{lm-rwverify}
The $k$-\rcw verification problem is \NP-hard. 
\end{theorem}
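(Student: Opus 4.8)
The plan is to locate the source of hardness in the universal quantifier hidden in the definition of a $k$-\rcw: $G_s$ is a $k$-\rcw for $V_T$ exactly when, for every test node $v\in V_T$ and every one of the exponentially many $k$-disturbances $\widetilde{G}$ of $G\setminus G_s$, the subgraph $G_s$ is still a \cw for $M(v,\widetilde G)=l$. The complement question --- does there \emph{exist} a $k$-disturbance that destroys the \cw property for some $v$? --- lies in \NP: one guesses the set $E_k$ of at most $k$ flipped node pairs (outside $G_s$) and then checks, in \PTIME via the inference-based tests underlying \verifyw and \verifycw, whether $M(v,\widetilde G)\ne l$ or $M(v,\widetilde G\setminus G_s)=l$. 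Hence verification itself lies in co-\NP, and to establish hardness I would give a reduction from a canonical \NP-complete problem to this complement; this shows the complement is \NP-hard, so verification is co-\NP-hard --- precisely the bound promised in the abstract.

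For the reduction I would start from \textsc{Set Cover} (a $3$-SAT based reduction works equally well): given a universe $\{1,\dots,m\}$, sets $S_1,\dots,S_p$, and a budget $k$, I would build a configuration $C=(G,G_s,V_T,M,k)$ with a designated test node $v$ of label $l$. The graph $G$ contains (i) a ``witness core'' $G_s$ that strongly forces $M(v,\cdot)=l$ whenever present, (ii) $m$ constraint-gadget nodes, and (iii) $p$ ``selector'' nodes $s_1,\dots,s_p$, where wiring $s_i$ into $v$'s neighborhood is a candidate edge insertion that marks the elements of $S_i$ as covered. I would fix the weights of a small (two-layer) message-passing \gnn $\M$ so that its inference function computes, in its hidden layer, one ``satisfied'' indicator per constraint and, in its output layer, the conjunction of these indicators via a threshold at $m-\tfrac12$; thus $M(v,\cdot)=l$ on the residual graph iff all $m$ constraints are covered.

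With this encoding, removing $G_s$ leaves $v$ unsatisfied (no constraints covered), so $G_s$ is a genuine \cw and the base case $k=0$ is consistent. A $k$-disturbance of $G\setminus G_s$ inserting edges $v\!-\!s_{i_1},\dots,v\!-\!s_{i_t}$ with $t\le k$ restores $M(v,\cdot)=l$ iff $S_{i_1}\cup\cdots\cup S_{i_t}=\{1,\dots,m\}$, \ie iff the chosen selectors form a cover of size at most $k$. Because $G_s$ is designed to dominate the output, the robust-factual clause $M(v,\widetilde G)=l$ never fails, so the only way to disprove the \cw property is through the counterfactual clause. Consequently a falsifying $k$-disturbance exists iff the \textsc{Set Cover} instance admits a cover of size $\le k$; therefore $G_s$ is a $k$-\rcw iff the instance is a no-instance, and since \textsc{Set Cover} is \NP-complete this yields the stated hardness (and, with the \NP membership above, co-\NP-completeness of verification).

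The main obstacle I anticipate is realizing the desired Boolean/threshold behaviour with a \emph{fixed, deterministic} \gnn built only from legitimate message-passing layers over polynomially encodable node features. I would need to choose the activation and the hand-set weight matrices so that the hidden layer produces clean $0/1$ ``constraint satisfied'' signals that survive the aggregation normalisation (\eg the $\widehat D^{-1/2}\widehat A\widehat D^{-1/2}$ factor of a \gcn, or the Personalized-PageRank propagation of an \ppnp), and so that the output threshold implements an exact $m$-way conjunction rather than a mere majority. Two secondary points that need care are (i) ruling out that edge \emph{deletions} (also permitted by a $k$-disturbance) offer a cheaper falsification --- handled by making selector insertions the only edges that can raise $v$'s score --- and (ii) lifting the gadget from a single test node to a set $V_T$ without letting one disturbance accidentally serve two nodes, which can be enforced by giving each test node private selector and constraint gadgets.
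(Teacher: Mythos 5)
Your proposal is correct in its overall strategy and reaches the same place as the paper, but by a genuinely different reduction. Both proofs locate the hardness in the universal quantifier over $k$-disturbances and show that the complementary ``falsification'' problem (does some $k$-disturbance destroy the \cw property?) is \NP-hard, so that verification is co-\NP-hard --- which is what the paper's Theorem~\ref{lm-rwverify} loosely calls \NP-hard, and what your \NP-membership argument for the complement sharpens to co-\NP-completeness. The difference is the source of hardness: the paper instantiates $\M$ as an \ppnp, so that the inference function is literally a Personalized PageRank computation, and then reduces the known \NP-hard link building problem (\lbp) to the $k$-edge PageRank maximization problem, of which robustness verification is a general case; no GNN weight engineering is needed, and the same PageRank machinery feeds directly into the tractable $(k,b)$-disturbance case (Lemma~\ref{lm-condition-iff} and Procedure~\pri) developed immediately afterwards. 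You instead reduce from \textsc{Set Cover} by hand-crafting a fixed message-passing \gnn\ whose output realizes an exact $m$-way conjunction of coverage indicators. That buys a self-contained argument that does not depend on the \lbp\ literature and exposes the combinatorial core more transparently, but it shifts the entire burden onto the gadget: you must actually exhibit weights and activations for which the $0/1$ signals survive degree normalization, deletions cannot falsify more cheaply than selector insertions, and per-node gadgets stay isolated for $|V_T|>1$. You correctly identify these as the obstacles, but they are exactly the steps a referee would demand in full, whereas the paper's route discharges them by citing the \NP-hardness of \lbp~\cite{olsen2014approximability}. If you complete the gadget, your proof is a valid and arguably more general alternative; as written, it is a sound plan with its hardest step still open.
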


\begin{proofS}
We make a case for \ppnps. 
%A general class of such \gnns are referred to as \ppnps in~\cite{bojchevski2019certifiable}. 
% \textcolor{red}{[AK: citation?]}. 
Given $G_s$, we first 
invoke \verifyw and \verifycw in \PTIME to 
check if it remains to be a \cw. 
If $G_s$ is not a factual witness or a \cw, 
we assert that $G_s$ is not a $k$-\rcw. 
%\dq{The verification of $k$-\rcw is {\em a general case} of the {\em decision} version of the $k$-edge PageRank maximization }
Otherwise, the problem 
equals to verify if 
a \cw $G_s$ is a $k$-\rcw. 
We prove this by first showing that it is {\em a general case} of the $k$-edge PageRank maximization problem: Find at most $k$ node pairs from a graph s.t. the PageRank scores of a targeted node $v$ is maximized if these $k$ links are inserted.
%
%We show that the problem is  equivalent to finding a set of $k$ edges from $G\setminus G_s$, s.t. PageRank scores of $V_T$ are maximized. This is achieved via the verification of certifiable robustness of nodes~\cite{bojchevski2019certifiable}, which shows that the output of $M$ that assigns the label $l$ is determined by the personalized PageRank scores of $V_T$. 
%\textcolor{red}{[AK: (1) but we said earlier that $M$ is fixed and deterministic? (2) Also the above is difficult to understand, can we give more intuition why so?] -- 
%[Wu: let me revisit this part later to 
%decide how to make it more intuitive]. }

We then show the hardness of the
$k$-edge PageRank maximization problem by
constructing a \PTIME reduction from the link building problem (\lbp), a known \NP-hard problem~\cite{olsen2014approximability}: Given a graph $G=(V, E)$, a node $v \in V$, a budget $k \geq 1$, find the set $S \subseteq V \backslash\{v\}$ with $|S|=k$ s.t. the PageRank score of $v$ in $G'(V, E \cup(S \times\{v\}))$ is maximized.
As the $k$-edge PageRank maximization problem is \NP-hard, the verification 
of $k$-\rcw is \NP-hard. 
\end{proofS}

\eat{
\begin{algorithm}[tb!]
    \renewcommand{\algorithmicrequire}{\textbf{Input:}}
    \renewcommand{\algorithmicensure}{\textbf{Output:}}
    \caption{Algorithm \mrcwgen} %Extendable
%    \scriptsize
    \begin{algorithmic}[1]
        \REQUIRE A configuration $C$ = $\{G, \emptyset, V_T, \M, k\}$, integer $b$ (local budget); \\
        \ENSURE A minimal $k$-\rcw $G_s$ of $V_T$ \wrt $\M$. \\
       % /*{\em Initialize Jacobian Matrix}*/
       % \STATE Jacobian Matrix $JM = |V \times V_T|$
       % \FOR{$e \in JM$}
       % \STATE $e = \frac{\partial v\in V_T}{\partial V}$
       % \ENDFOR        
       % /*{\em Non-touchable core area}*/
       \STATE $G_I$:=$\emptyset$; $G_s$:=$\emptyset$; set $E_k$:=$\emptyset$; set $V_u$:=$V_T$; integer $j$:=$1$;  
       \STATE $G_s$:= $V_T$;  $G_I$:=$G_s$; %$\kw{Kruskal}(G,V_T)$;
        \WHILE{$G\setminus G_s\neq\emptyset$}
  %       \IF{$\verifyrcw(C, G_s)$}
            \STATE $G_s$:=$G_I$; $G_I$:= $\kw{Expand}(G_s, G, V_T)$; $j$:=$1$; %/*{\em Edge Expansion}*/
            \FOR{$j = 1$ to $k$}
            \IF{$\verifyrcw(C, G_I, j)$ = \kw{true}}
                 \IF{j=$k$} 
                    \STATE \textbf{return} $G_I$; 
                 \ELSE  \STATE \textbf{continue}; 
                \ENDIF
            \ELSE \STATE $G_I$:=$G_s$; \textbf{break}; 
           % \textbf{continue};  
            \ENDIF
            \ENDFOR
        \ENDWHILE
       \STATE \textbf{return} $G_s$;     
    \end{algorithmic}
    \label{procedure:verifyrcw}
\end{algorithm}
}

\eat{
\begin{algorithm}[tb!]
    \renewcommand{\algorithmicrequire}{\textbf{Input:}}
    \renewcommand{\algorithmicensure}{\textbf{Output:}}
    \caption{Algorithm \verifyrcw (single node)} %Extendable
%    \scriptsize
    \begin{algorithmic}[1]
        \REQUIRE A configuration $C$ = $\{G, G_s, v, \M, k\}$; \\
        \ENSURE  \kw{true} if $G_s$ is a 
        $k$-\rcw; \kw{false} otherwise. \\
         \IF{$\verifyw(C)$ = \kw{false}} 
        \STATE \textbf{return} \kw{false}; 
        \ENDIF
        \STATE construct graph $G'$ := $G\setminus G_s$; 
        \IF{$\verifycw(G',C)$ = \kw{false}}
        \STATE \textbf{return} \kw{false}; \ENDIF
        \STATE initialize bitmap $\hat{A}$ with adjacency matrix of $G'$;
        \FOR{$j = 1$ to $k$}
            \STATE 
            construct next distinct $j$-disturbance $E_k$; 
            \STATE
            flip $\hat{A}$ and update $G'$ (implicitly); 
            \IF{$\verifycw(G',C)$ = \kw{false}} 
            \STATE \textbf{return} \kw{false}; 
            \ENDIF
            \STATE restore $\hat{A}$ and $G'$; 
            \ENDFOR
            \eat{
            \IF{$\verifyrcw(C, G_I, j)$ = \kw{true}}
                 \IF{j=$k$} 
                    \STATE \textbf{return} $G_I$; 
                 \ELSE  \STATE \textbf{continue}; 
                \ENDIF
            \ELSE \STATE $G_I$:=$G_s$; \textbf{break}; 
           % \textbf{continue};  
            \ENDIF
            \ENDFOR
       \STATE \textbf{return} $G_s$; 
       }
      \STATE \textbf{return} \kw{true};  
    \end{algorithmic}
    \label{procedure:verifyrcw}
   % \vspace{-1ex}
\end{algorithm}
}

\eetitle{Algorithm}. 
We outline a general verification 
algorithm, denoted as \verifyrcw.  
%%%%%% put to full version %%%%%
%(illustrated as Algorithm.~\ref{procedure:verifyrcw}). %\textcolor{red}{[Do we mean illustrated in Algorithm 1?]}  
Given a configuration $C$, 
the algorithm first invokes \verifyw and \verifycw to 
check if $G_s$ is a factual witness and a \cw, 
both in \PTIME (Lemma~\ref{lm-verifyw} and Lemma~\ref{lm-verifycw}). 
If $G_s$ remains to be a \cw, 
it next performs a 
$k$-round of verification, 
to verify if $G_s$ is a
$j$-\rcw in the $j$-th round by enumerating all $j$-disturbances. 
It early terminates whenever 
a $j$-th disturbance is 
identified, which 
already disproves that 
$G_s$ is a $j$-\rcw, 
given Lemma~\ref{lm-k}. 
%We present the algorithm and analysis in the full version~\cite{full}. 

\vspace{-0.5ex}
\subsection{A Tractable Case} 

While it is in general intractable to verify $k$-\rcw, 
not all is lost: We show that 
there exists a tractable case 
for the verification problem 
when the $k$-disturbance is 
further constrained by a 
``local budget'', for  
the class of \ppnps. We 
introduce two specifications below. 

\eetitle{$(k, b)$-disturbance}. Given a graph $G$, 
a $k$-disturbance is a $(k, b)$-disturbance to $G$
if it disturbs at most $k$ edges, and meanwhile at most $b$ edges for each  
involved nodes in $G$. Here $b$ is a 
pre-defined constant as a local ``budget'' for a 
permissible disturbance. 
In practice, a local budget may 
indicate the accumulated total 
cost of ``local attack'' of a 
server, or variants of bonds in-degree 
as seen in chemical compound 
graphs~\cite{albalahi2022bond}, which are 
typically small constants (\eg $\leq 5$). 

\eetitle{Node robustness}. 
We start with a characterization of 
node robustness 
that extends 
certifiable robustness~\cite{bojchevski2019certifiable}, 
which verifies if a predicted 
label can be 
changed by a $k$-disturbance. 
This work makes a case for \ppnps, a class of 
Personalized PageRank based 
\gnns $\M$ as aforementioned. 
Given a configuration 
$C$ = $(G, G_s, V_T, M, k)$, 
where $M$ is the inference 
process for a fixed deterministic 
\ppnp $\M$, 
 % over a set of links $L$. 
a node $v\in V_T$ is  
{\em robust} \wrt configuration $C$, 
if a ``worst-case margin'' 
$m_{l, *}^{*}(v)$ = $\min_{c \neq l}$ $m_{l, c}^{*}(v)>0$,
where $M(v, G)$ = $l$, %is the (true) label of $v$, 
and $c$ is any other label $(c\neq l)$.  
%in $L\setminus\{y_t\}$. 
%Following~\cite{bojchevski2019certifiable}, 
We define
%We define 
$m_{l, c}^{*}(v)$ as:
%$ m_{y_{t}, c}^{*}(v)$ is defined as:
\eat{
\[
m_{y_{t}, c}^{*}(v)=\min _{\tilde{G} \in G \cup E'} m_{y_{t}, c}(v)\\
=\min _{\tilde{G} \in G \cup E'} \pi_{\tilde{G}}\left(v\right)^{T}\left(Z_{\{:, y_{t}\}}-Z_{\{:, c\}}\right)
\]
}
\begin{equation}
\label{eq-margin}
\begin{split}
\vspace{-2.5ex}
m_{l, c}^{*}(v) &=\min _{E_k \subseteq G\setminus G_s} m_{l, c}(v)\\
& =\min_{E_k \subseteq G\setminus G_s} \pi_{E_k}\left(v\right)^{T}\left(Z_{\{:, l\}}-Z_{\{:, c\}}\right)
\vspace{-1ex}
\end{split}
\end{equation}
\eat{~\footnote{Notice that here the ``robustness'' is not a direct guarantee for $M$ to assign 
``correct'' label. Rather to simplify 
our discussion, we assume a high quality \gnn $\M$ with 
inference process $M$ that assigns the true 
label for node $v$.} }
where $E_k$ ranges over all the possible 
$(k,b)$-disturbances that can be applied to %$\widetilde{G}$ = 
$G\setminus G_s$, and ${\pi_{E_k}\left(v\right) =\boldsymbol{\Pi}_{v,:}}$ is the PageRank vector of node $v$ in the PageRank matrix $\boldsymbol{\Pi}=(1-\alpha)({I}_{N}-\alpha {D}^{-1} {A'})^{-1}$, 
with $A'$ obtained by 
disturbing the adjacency matrix of $G$ 
 with $E_k$, \eg $A'[i,j]$ = $0$ (resp. $1$) if 
$(v_i, v_j)\in E_k$ (resp. $(v_i, v_j)\not\in E_k$). 
%\textcolor{red}{[AK: why also not the opposite case for edge flipping, $A'[i,j]$ = $1$ if 
%$(v_i, v_j)\not\in E_k$]}.
Note that we 
do not explicitly remove the edges and change $G$, 
%or $\widetilde{G}$, 
but reflect the tentative disturbing 
by computing $A'$. ${Z \in \mathbb{R}^{|V| \times |L|}}$ collects the individual per-node logits for \ppnps.
By verifying $m_{l, *}^{*}(v) > 0$ (\ie $m_{l, c}^{*}(v) > 0$ for any $c\in \text{\L}$ 
other than the label $l=M(v,G)$), it indicates that under any disturbance of size $k$ 
in $G\setminus G_s$, $M$ always predicts the 
label of node $v$ as $l$. %\wrt the output $Z$. 

\eat{
\begin{theorem}
\label{lm-rwverify-tractable}
Given the configuration $C$ that specifies 
\ppnp $\M$, and when only $(k,b)$-disturbance is 
allowed, the verification problem for $k$-\rcw is in 
\PTIME.  
%if each node in 
%$G'$ = $G\setminus G_s$ 
%is allowed to have at most $b$ edges to be 
%\textcolor{blue}{removed} \textcolor{red}{[AK: (1) only removal, no addition, or both? (2) Do we assume any relation between $b$ and $k$? Any upper limit of $b$? Is $b$ a predefined constant?]}. 
\end{theorem}
}

%\stitle{Verification algorithm}. 
%\begin{proofS}
%As a constructive proof, 
We next outline a \PTIME 
algorithm \verifyrcw (Algorithm \ref{procedure:extendable}) for \ppnps. 
The algorithm verifies if a given 
subgraph $G_s$ is a $k$-\rcw for 
a single node $v\in V_T$ \wrt 
$\M$ under $(k,b)$-disturbance. 
For a configuration $C$ with test set $V_T$, 
it suffices to invoke \verifyrcw 
for each $v\in V_T$. 
We show that such \PTIME solution 
exists with the following 
condition given in Lemma~\ref{lm-condition-iff}. Lemma~\ref{lm-condition-iff} itself does not require the notion of local budget; however, the \PTIME computation of $E^*_k$ (defined in Lemma~\ref{lm-condition-iff}) needs a local budget.

\begin{lemma}
\label{lm-condition-iff}
Given the configuration $C$ %= $(G,G_s, V_T, \M, k)$ 
that specifies 
\ppnp $\M$, let $G_s$ be a verified \cw of $M(v,G)$ = $l$ 
for a node $v\in V_T$, and only $k$-disturbance is allowed, 
then $G_s$ is a $k$-\rcw of $M(v,G)$=$l$,  
if and only if $M(v, G\setminus E^*_k)$ = $l$, 
where $E^*_k$ = $\argmax_{E_k\subseteq G\setminus G_s, c\neq l} \pi_{E_k}\left(v\right)^{T}\left(Z_{\{:, c\}}-Z_{\{:, l\}}\right)$.
%
%(\textbf{Z}_{\{:,c\}}-\textbf{Z}_{\{:,l\}})^T\pi_{E_k}\left(v\right)$. 
\end{lemma}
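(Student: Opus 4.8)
The plan is to exploit the linear structure of \ppnp's decision rule to collapse the universal quantifier over all $k$-disturbances that is hidden in the definition of a $k$-\rcw into a single check against the worst-case disturbance $E^*_k$. First I would record the invariants. Since $G_s$ is a verified \cw of $M(v,G)=l$, we have $M(v,G_s)=l$; and because a $k$-disturbance on $G\setminus G_s$ never touches the edges of $G_s$, the equality $M(v,G_s)=l$ holds verbatim for every disturbed graph $\widetilde{G}$. Hence, of the two requirements that make $G_s$ a \cw of $M(v,\widetilde{G})=l$, the only one that a disturbance can break on the factual side is the premise $M(v,\widetilde{G})=l$ itself; the residual ``$M(v,G_s)=l$'' condition is automatic.

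Next I would set up the worst-case reduction. For \ppnp the label is $M(v,\widetilde{G})=\argmax_{c}\pi_{E_k}(v)^{T}Z_{\{:,c\}}$, a linear functional of the PageRank vector $\pi_{E_k}(v)$ induced by the disturbance $E_k$ that produces $\widetilde{G}$. Writing $f(E_k,c)=\pi_{E_k}(v)^{T}(Z_{\{:,c\}}-Z_{\{:,l\}})$, the prediction stays $l$ under $E_k$ iff $f(E_k,c)\le 0$ for every $c\neq l$ (this is the margin of Eq.~\ref{eq-margin}). By definition $E^*_k$, together with its attaining label $c^*$, maximizes $f$ over the finite family of all admissible $k$-disturbances of $G\setminus G_s$ and all $c\neq l$. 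Therefore $f(E^*_k,c^*)\le 0$ is equivalent, on one hand, to $M(v,G\setminus E^*_k)=l$, and on the other hand --- since the maximum dominates every $f(E_k,c)$ --- to the statement ``$M(v,\widetilde{G})=l$ for every $k$-disturbance''. This single equivalence is the engine of both directions.

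With this in hand the two directions are short. For necessity, if $G_s$ is a $k$-\rcw then $M(v,\widetilde{G})=l$ for every disturbance, and instantiating the disturbance at $E^*_k$ gives $M(v,G\setminus E^*_k)=l$. For sufficiency, if $M(v,G\setminus E^*_k)=l$, then by the reduction $M(v,\widetilde{G})=l$ for all disturbances; combined with the invariant $M(v,G_s)=l$, $G_s$ stays a factual witness of $M(v,\widetilde{G})=l$ for every $\widetilde{G}$, so $G_s$ is a $k$-\rcw.

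I expect the main obstacle to lie in two places. The first is justifying that the combinatorial maximum defining $E^*_k$ is attained and genuinely dominates the universal condition; this is exactly where \ppnp's linearity is indispensable (for a nonlinear \gnn the per-disturbance margins would not be summarized by a single extremal disturbance), and it is also where the local budget later reappears when one wants $E^*_k$ computed in \PTIME. The second, more delicate point is the counterfactual half of the \cw requirement, namely $M(v,\widetilde{G}\setminus G_s)\neq l$: its worst case is the disturbance that pushes $G\setminus G_s$ \emph{toward} $l$, which is the opposite extremum of $f$ and lives on $G\setminus G_s$ rather than on $G$. I would therefore verify, under the stated hypotheses (a verified \cw together with robustness of the full-graph prediction), that this counterfactual side is preserved --- arguing it by a symmetric worst-case estimate --- and make explicit why the single condition $M(v,G\setminus E^*_k)=l$ is enough to certify it; pinning down this step cleanly is the crux of the argument.
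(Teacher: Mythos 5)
Your proposal follows essentially the same route as the paper's proof sketch: both collapse the universal quantifier over $k$-disturbances into a single check against the extremal disturbance $E^*_k$ via the worst-case margin of Eq.~\ref{eq-margin}, and both rely on the invariant that $M(v,G_s)=l$ is untouched because disturbances live outside $G_s$. Your handling of the factual half (label preservation on every $\widetilde{G}$) is correct and in fact more explicit than the paper's, since you spell out the domination $f(E_k,c)\le f(E^*_k,c^*)$ that makes the single check sufficient. The one step you defer --- certifying the counterfactual half $M(v,\widetilde{G}\setminus G_s)\neq l$ --- is precisely the step the paper itself dispatches in one unsupported sentence (``since disturbing $E^*_k$ alone is not sufficient \ldots then $M(v,\widetilde{G}\setminus G_s)\neq l$ holds for any $k$-disturbance''), without the symmetric worst-case estimate you rightly note is needed: the dangerous disturbance for that condition is the one that maximizes the margin of $l$ on the graph $G\setminus G_s$, which is a different optimization from the one defining $E^*_k$ and is not controlled by $M(v,G\setminus E^*_k)=l$. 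So you have correctly located the genuine soft spot of the lemma rather than introduced a new one; closing it would require either an additional extremal condition posed on $G\setminus G_s$ or an argument that deleting $G_s$'s edges already forces $l$ to lose to some $c$ under every admissible disturbance. As written, your proposal is as complete as the paper's own proof on this point and more candid about where the difficulty lies.
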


\begin{proofS}
Let $\widetilde{G}$ be the graph 
obtained by disturbing $E^*_k$ in 
$G\setminus G_s$.  
For the \textbf{If} condition, 
let $E^*_k$ be the node pairs to be disturbed in the corresponding 
$k$-disturbance in $G\setminus G_s$. As $E^*_k$ maximizes 
$\pi_{E_k}\left(v\right)^{T}\left(Z_{\{:, c\}}-Z_{\{:, l\}}\right)$, 
which indicates that it minimizes the gap between 
the likelihoods that {\em some} label $c\neq l$ being assigned to 
$v$, quantified by the worst-case margin 
$m^*_{l,c}$, for {\em every} label $c$ that 
is not $l$. %That is, disturbing $E^*_k$ is ``most likely'' to change the current result $M(v,G)$=$l$, \ie $M(v,\widetilde{G})$=$c\neq l$, for any label $c\in L$. 
Then, if $m^*_{l,c}>0$ for every $c\neq l$ under 
disturbance $E^*_k$, $M(v, \widetilde{G})$ 
will remain to be $l$. 
This means that no $k$-disturbance in $G\setminus G_s$ 
will change $v$'s label. 
Meanwhile, since disturbing $E_k^*$ alone is not sufficient enough to affect the outcome of $M$ on $v$, and as $G_s$ is a verified \cw for $M(v, G) = l$, so $M(v, G\setminus G_s) \neq l$, then $M(v, \widetilde{G}\setminus G_s) \neq l$ holds for any $k$-disturbance. Hence, $G_s$ is a $k$-\rcw of $M(v,G)$=$l$ by definition. 

The \textbf{Only If} condition can be shown by 
contradiction. Assuming there exists a set 
$E^*_k$ that makes  
$M(v, \widetilde{G})\neq l$, 
and $G_s$ is still a $k$-\rcw 
of $M(v, G)$ = $l$; since 
$\M$ is fixed and deterministic, 
$M(v, G_s)$ = $l$ is not 
affected by disturbing $E_k$ 
``outside'' of $G_s$; 
yet $M(v, \widetilde{G})\neq l$, 
hence $G_s$ is not a factual witness 
for $M(v, \widetilde{G})$ = $l$, 
and hence not a \cw for $M(v, \widetilde{G})$ = $l$, 
violating the third condition 
in the definition of $k$-\rcw 
for $G$. Hence $G_s$ is not a $k$-\rcw 
of $M(v,G)$ = $l$. 
This contradicts to 
that $G_s$ is a $k$-\rcw of $M(v,G)$=$l$. 
\end{proofS}

\begin{algorithm}[tb!]
    \renewcommand{\algorithmicrequire}{\textbf{Input:}}
    \renewcommand{\algorithmicensure}{\textbf{Output:}}
    \caption{Algorithm \verifyrcwp(single node)}  %Extendable
%    \scriptsize
    \begin{algorithmic}[1]
    \REQUIRE A configuration $C$ = $\{G, G_s, v, \M, k\}$, integer $b$; \\
    \ENSURE \kw{true} if $G_s$ is a $k$-\rcw, \kw{false} otherwise.\\
        
        \IF{$\verifyw(C)=$\kw{false} \textbf{or} $\verifycw(C)=$\kw{false}} 
        \STATE \textbf{return} \kw{false}; 
        \ENDIF

        \vspace{1ex}

        \STATE construct graph $G'$ = $G \setminus G_s$; 
        \STATE $\hat{A}^\prime$ = $A^\prime + I$, where $A^\prime$ is adjacency matrix of $G'$; 
        \STATE initialize edge set $E^0$ with arbitrary subset of $G'$;
        
        \vspace{1ex}
        
        \FOR{\textbf{each} $c\in L \setminus \{l\}$}
            \STATE $\mathbf{r}$ = ${Z}_{\{:,c\}}-{Z}_{\{:,l\}}$;
            \STATE $E^*$ = $\pri(\mathbf{r}, b, E^0)$;  \hspace{4ex}/*get $(k, b)$-disturbance*/
            % \STATE \textbf{if} $|E_k^*| > k$ \textbf{then return} \kw{false}; 
            % \STATE $C'$ := $\{G \setminus E_k^*, G_s, v, \M, k\}$;
            % \IF{\verifycw = false}
            % \STATE \textbf{return} \kw{false}; 
            % \ENDIF 
            \IF{$M(v, G\setminus E^*) \neq l$  \textbf{or} $|E^*| > k$}
            \STATE \textbf{return} \kw{false}; 
            \ENDIF 
        \ENDFOR
        \STATE \textbf{return} \kw{true};
    \end{algorithmic}

        \vspace{1.5ex}

    \textbf{Procedure}: $\pri(\mathbf{r}, b, E^0)$
    \begin{algorithmic}[1]
        \STATE initialize integer $i$=$0$; 
        \WHILE{$E^i\neq E^{i-1}$ \textbf{or} $i = 0$}
            \STATE update $\hat{A}^\prime$ by ``disturbing'' $E^i$ in $G$; /*flip entries*/
            \STATE update $X^L$ = $(I_N-\alpha\Hat{D}^{-1} \Hat{A}^\prime)^{-1}\cdot \mathbf{r}$;
            \STATE $s(u,u')$=$(1-2A^\prime_{uu'})$ $(X_{u'}-\frac{X_u-X_u'}{\alpha})$, $\forall (u, u') \in G'$;
            \STATE $E_{b_u}$ := $\text{top}_b\{(u, u') \in G' \mid s(u, u') > 0\}$, $\forall u \in G'$;
            % \STATE $\forall u \in G'$, \textcolor{red}{choose top-$b$ adjacent edges} $E_{b_u}$ with highest positive $s(u,u')$=$(1-2A_{uu'})$ $(X_{u'}-\frac{X_u-\mathbf{r}_u}{\alpha})$;
            \STATE $E_b^i$ = $\bigcup_{u \in G'}E_{b_u}$;
            \STATE $E^{i+1}$ = $(E^i \cup E^i_b) - (E^i \cap E^i_b)$, $i = i + 1$;
            \STATE \textbf{if} $M(v, G\setminus E^i) \neq l$ \textbf{then} break; 
            % \STATE \mengying{Reversed and deleted $|E^i|\leq k$, as it can be terminated if $G_s$ already cannot hold current $E^i$, no matter what the size is.}
        \ENDWHILE
        \STATE \textbf{return} $E^i$
    \end{algorithmic}
    \label{procedure:extendable}
\end{algorithm}

\eat{
\begin{algorithm}[tb!]
    \renewcommand{\algorithmicrequire}{\textbf{Input:}}
    \renewcommand{\algorithmicensure}{\textbf{Output:}}
    \caption{Algorithm \verifyrcwp(single node)}  %Extendable
%    \scriptsize
    \begin{algorithmic}[1]
\REQUIRE A configuration $C$ = $\{G, G_s, v, \M, k\}$, integer $b$; \\
        \ENSURE \kw{true} if $G_s$ is a 
        $k$-\rcw, \kw{false} otherwise.\\
         \IF{$\verifyw(C)$ = \kw{false}} 
        \STATE \textbf{return} \kw{false}; 
        \ENDIF
        \STATE construct graph $G'$ := $G\setminus G_s$; 
        \IF{$\verifycw(G',C)$ = \kw{false}}
        \STATE \textbf{return} \kw{false}; \ENDIF
        \STATE initializes bitmaps $A$ and $\hat{A}$ as adjacency matrix of $G'$; initializes integer $i$:=$0$; 
        \STATE initialize edge set $E^0_k$ with $k$ random edges in $G'$; a queue $Q$:=$\{V_T\}$; 
        \WHILE{$E^i_k\neq E^{i-1}_k$ \And $|E^i_k|\leq k$} 
        \STATE update $\hat{A}$ by ``disturbing'' $E^i$ in $G$; /*flip entries*/
        % \STATE update $X^L$ s.t $(I_N-\alpha\Hat{D}^{-\frac{1}{2}} \Hat{A})^{-1}\cdot X^L$=$\textbf{Z}_{\{:,c\}}-\textbf{Z}_{\{:,l\}}$; 
        \STATE update $X^L = (I_N-\alpha\Hat{D}^{-1} \Hat{A})^{-1}\cdot (\textbf{Z}_{\{:,c\}}-\textbf{Z}_{\{:,l\}})$;
        \FOR{\textbf{each} $v\in Q$}
        \STATE choose top-$b$ adjacent edges $E^i_b$ of $v$ in $G'$ 
        with highest scores $s(v,v')$=(1-$2\cdot A_{(v,v')})$ $(X_{v'}-\frac{X_v-X_v'}{\alpha})$
        \STATE $E^i_k$ := $E^i_k \cup E^i_b$; $i$:=$i+1$;
        \STATE $Q$:=$Q\setminus\{v\}\cup\{v'\}$ ($v'\in E^i_b$)
        \IF{$M(v, G\setminus E^i_k)\neq l$} 
        \STATE \textbf{return} \kw{false}; 
        \ENDIF
        \ENDFOR
        \ENDWHILE    
        \IF{$M(v, G\setminus E^i_k)=l$} 
        \STATE \textbf{return} \textbf{true};
        \ENDIF
        \STATE \textbf{return} \textbf{false}
        
    \eat{
        \REQUIRE A configuration $C$ = $\{G, \emptyset, V_T, \M, k\}$, integer $b$ (local budget); \\
        \ENSURE A minimal $k$-\rcw $G_s$ of $V_T$ \wrt $\M$. \\
       % /*{\em Initialize Jacobian Matrix}*/
       % \STATE Jacobian Matrix $JM = |V \times V_T|$
       % \FOR{$e \in JM$}
       % \STATE $e = \frac{\partial v\in V_T}{\partial V}$
       % \ENDFOR        
       % /*{\em Non-touchable core area}*/
       \STATE $G_I$:=$\emptyset$; $G_s$:=$\emptyset$; set $E_k$:=$\emptyset$; integer $j$:=$1$;  
       \STATE $G_s$ := $\kw{Kruskal}(G,V_T)$; $G_I$:=$G_s$; 
        \WHILE{$G\setminus G_s\neq\emptyset$} 
            \STATE $G_s$:=$G_I$; $G_I$:= $\kw{Expand}(G_s, G, V_T)$; $j$:=$1$; %/*{\em Edge Expansion}*/
            \FOR{$j = 1$ to $k$}
            \IF{$\verifyrcw(C, G_I, j)$ = \kw{true}}
                 \IF{j=$k$} 
                    \STATE \textbf{return} $G_I$; 
                 \ELSE  \STATE \textbf{continue}; 
                \ENDIF
            \ELSE \STATE $G_I$:=$G_s$; \textbf{break}; 
           % \textbf{continue};  
            \ENDIF
            \ENDFOR
        \ENDWHILE
       \STATE \textbf{return} $G_s$;  
       }
    \end{algorithmic}
    \label{procedure:extendable}
\end{algorithm}
}

Based on the above result, it suffices to 
show that there is a \PTIME solution to 
the optimization problem 
that computes $E^*_k$ (under $(k, b)$-disturbance), which 
maximizes 
$\pi_{E_k}\left(v\right)^{T}\left(Z_{\{:, c\}}-Z_{\{:, l\}}\right)$.
%$(Z_{\{:,c\}}-Z_{\{:,l\}})^T\pi_{E_k}\left(v\right)$. 
Following 
%the analysis in
~\cite{bojchevski2019certifiable}, 
we present a greedy edge selection 
strategy to construct $E^*_k$ 
in \PTIME. %This gives us the following verification algorithm, as a proof by construction for Theorem~\ref{lm-rwverify-tractable}. 

%%%%%%%%%%%Version 1 - No Verify in PRI %%%%%%%%%%%%%
\eat{
\eetitle{Algorithm}. 
Algorithm \ref{procedure:extendable} first invokes \verifyw and \verifycw to 
check if $G_s$ is a factual witness and a \cw (lines~1-2). 
If $G_s$ remains to be a \cw, it then 
follows an iterative 
``optimize-and-verify'' process, 
for each class $c \in L \setminus \{l\}$,
it computes a set of edges  
$E_k$ that can optimize 
$(Z_{\{:,c\}}-Z_{\{:,l\}})^T\pi_{E_k}\left(v\right)$ by Procedure \pri.
Each time an
$(k,b)$-disturbance with $|E_k| \leq k$
is constructed, it invokes \verifycw 
to verify if $G_s$ remains \cw by the updated configuration $C'$(line 11). 
If so, $G_s$ is not 
a $k$-\rcw as 
``disapproved'' by such an 
$(k,b)$-disturbance (Lemma~\ref{lm-k})
and returns ``flase''. 
Otherwise, the robustness of 
$G_s$ is asserted under the tests of $E_k^*$ from all possible labels. 

\eetitle{Procedure~\pri}.
It adapts Policy Iteration algorithm for PageRank Optimization~\cite{bojchevski2019certifiable} to select $E_k$ that mostly hurts $G_s$ as a \cw for $M(v, G) = l$. 
For each node $u \in G\setminus G_s$, it selects at most $b$ adjacent edges with the highest positive $s(u, u')$ and constructs $E_b^i$ (line 5-6). $s(u, u')$ measures the potential improvement on the PageRank score of test node $v$ by altering edge $(u, u')$ in $E_k^i$.
\pri greedily improves $E_k^i$ by removing or inserting the edges in $E_b^i$ to obtain $E_k^{i+1}$ and feeds it to next iteration (line 7). It terminates when no further improvement can be made on $E_k^*$ and returns the optimized $E_k^*$.

\stitle{Analysis}. 
The correctness is ensured by 
showing that the greedy selection process 
correctly solves the Personalized PageRank 
maximization problem,  
which meanwhile minimizes the worst-case margin (Lemma~\ref{lm-condition-iff}) following the analysis  in~\cite{bojchevski2019certifiable}, where a \PTIME solution 
is provided. 
For time cost, \verifyrcw iterates at most
$|G \setminus G_s|$ configurations in Procedure \pri and verify the optimized one for each class pair~\cite{hollanders2011policy},
and both \verifyw and \verifycw 
remain to be in \PTIME under 
$(k,b)$-disturbance (Lemma~\ref{lm-verifyw} and Lemma~\ref{lm-verifycw}). 
The total cost is thus in 
$O(L\cdot(|G \setminus G_s|^2\log|G \setminus G_s|+ LF(|E|+|V|F)))$ for representative 
\gnns. 
We leave the detailed analysis in~\cite{full}. 
}

\eetitle{Algorithm}. 
Algorithm \ref{procedure:extendable} first invokes \verifyw and \verifycw to 
check if $G_s$ is a factual witness and a \cw (lines~1-2). 
If $G_s$ remains to be a \cw, it then 
follows an iterative 
``optimize-and-verify'' process, 
which first computes a set of edges  
$E_k$ that can optimize 
$\pi_{E_k}\left(v\right)^{T}\left(Z_{\{:, c\}}-Z_{\{:, l\}}\right)$,
%$(Z_{\{:,c\}}-Z_{\{:,l\}})^T\pi_{E_k}\left(v\right)$, 
and then verifies if 
the $(k,b)$-disturbance 
%by disturbing $E_k'$ 
changes the label.  
 
\eetitle{Procedure~\pri}.
It nontrivially optimizes the policy iteration procedure~\cite{bojchevski2019certifiable}, 
which goes through multiple 
rounds of top-$b$ edges selection to get $E^*$ that mostly hurts $G_s$ as a \cw for $M(v, G) = l$. 
In each round $i$, 
it calculates a score $s(u, u')$ for each node pair $(u, u') \in G'$, which measures the potential improvement on the PageRank score of test node $v$ by flipping $(u, u')$. %in $E_k^i$. 
Then, for each node $u \in G'$, it selects at most $b$ node pairs with the highest positive $s(u, u')$ and constructs $E_b^i$ (line 6-7).  %\textcolor{blue}{[AK: This only considers deletion of existing edges, how to consider both types of flipping?]} 
%\mengying{It is controlled by $s(u, u')$: (1) $(X_{u'}-\frac{X_u-X_u'}{\alpha})$ is the effect on PageRank score by assuming flipping $A_{uu'}$ from $0$ to $1$, which means ``inserting''; (2) for $(1-A_{uu'})$: (a) ``inserting'':  $A_{uu'} = 0$, then $(1-2A_{uu'}) = 1$, $s(u, u') = (X_{u'}-\frac{X_u-X_u'}{\alpha})$. (b) ``removing'': $A_{uu'} = 1$, then $(1-2A_{uu'}) = -1$, $s(u, u') = -(X_{u'}-\frac{X_u-X_u'}{\alpha})$, the effect of removing an edge on the PageRank score is equal to the negative of the change by inserting it. So by $s(u, u')$, we are always choosing the node pairs with the highest positive effect on the PageRank score if it is flipped.}
\pri greedily improves $E^i$ by removing or inserting the edges in $E_b^i$ to obtain $E^{i+1}$ and feeds it to the next iteration (line 8). Whenever a set of node pairs
$E^i$ 
is constructed, it checks if the label of 
$v$ changes by ``disturbing'' $G$ with $E^i$ (line 9). 
If so, $G_s$ is not 
a $k$-\rcw as 
``disapproved'' by $E^i$
and returns ``false''. 
Otherwise, 
$G_s$'s robustness will be asserted by testing all 
possible disturbances until it reaches a point where no further improvement can be made on $E^*$, at which point \pri terminates and returns the optimized $E^*$. 
%\textcolor{blue}{[AK: It needs to be stated that PRI does not ensure k-size disturbance, instead it only ensures local b-size disturbance, and how we rectify this in line 9. The notation $E_k^*$ in Alg-1 should be updated accordingly.]}\mengying{Added in the Remark part. For notation, do you mean replace $E_k^*$ with $E^*$ in main process?}

\stitle{Remarks}. 
One notable aspect of Procedure \pri is that it only ensures local $b$-size disturbance on each node in $G'$ and it achieves global optimization without guaranteeing $k$-size disturbance. Consequently, in line 9 of Algorithm~\ref{procedure:extendable}, we only check the disturbance where $|E^i| \leq k$ 
% \textcolor{blue}{[AK:however, you deleted that from line 9. Why should we care if $|E^i|>k$ at current round?]}
% \mengying{I mean line 9 in the main process here(revised).  in \pri, due to the nature of greedy, the following $E^*$ must be more aggressive than the current one, so if $G_s$ already cannot hold the current one, we don't need to continue to explore} 
to confirm that $G_s$, which is asserted as a $k$-\rcw, is indeed accurate. If Procedure \pri generates an $E^*$ that exceeds $k$ in size, we reject it as false (line 9, Algorithm~\ref{procedure:extendable}). This approach may accidentally reject some legitimate cases, but it is sufficient to meet the requirements of the subsequent generation algorithm (Section \ref{sec-rcwgen}) that invokes it.
%\textcolor{blue}{[AK: state three things: (1) intuition of correctness of PRI; (2) one-sided correctness of Alg-1; (3) time complexity of Alg-1, which has $b$ term, and possibly in the exponent to be consistent with our earlier NP-hard claim without local budget.]}
The correctness is ensured by 
the greedy selection process (following a ``local budget'') that correctly solves the Personalized PageRank 
maximization problem,  
which meanwhile minimizes the worst-case margin (Lemma~\ref{lm-condition-iff}) following the analysis  in~\cite{bojchevski2019certifiable}, where a \PTIME solution 
is provided. 
For time cost, \verifyrcw iterates at most
$|G \setminus G_s|$ configurations in Procedure \pri~\cite{hollanders2011policy},
and both \verifyw and \verifycw 
remain to be in \PTIME %under $(k,b)$-disturbance 
(Lemma~\ref{lm-verifyw} and Lemma~\ref{lm-verifycw}). Setting $G' = G \setminus G_s$ and $d_m$ as the maximum node degree, the total cost is $O(L|G'|(d_m\log d_m+ LF(|E|+|V|F)))$ for representative 
\gnns. %We leave the detailed analysis in~\cite{full}. 

%%%%%%%%%

%\vspace{-1ex}
\section{RCW Generation Problem} 
\label{sec-rcwgen}

The generation problem, unlike verification,
is to compute a nontrivial $k$-\rcw if exists,  for a given 
configuration $C$ = $(G, \emptyset, V_T, M, k)$, 
or simply $(G, V_T, M, k)$ as the $k$-\rcw 
is to be computed.  
%\textcolor{red}{[AK: Configuration was 5-tuple previously, now it becomes 4-tuple-why? $C$ is used inside the 4-tuple and also to denote a configuration.]}
We can verify the following result. 
%%% hardness. 

\begin{theorem}
\label{thm-rcw-gen} 
Given a configuration $C$ = $(G, V_T, M, k)$, 
the $k$-\rcw generation problem is in general co-\NP-hard. 
\end{theorem}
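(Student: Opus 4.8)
The plan is to reduce a co-\NP-hard robustness question to \rcw generation, again making a case for \ppnps so that the PageRank margin of Eq.~\ref{eq-margin} is available. Recall that deciding whether a verified \cw $G_s$ remains a \cw under \emph{every} $k$-disturbance of $G\setminus G_s$ is a universally quantified (hence co-\NP) property; its refutation --- exhibiting a single $k$-disturbance that drives the adversary's objective $\pi_{E_k}(v)^{T}(Z_{\{:,c\}}-Z_{\{:,l\}})$ past the label-flipping threshold --- is precisely the \NP-hard link building problem \lbp underlying Theorem~\ref{lm-rwverify}~\cite{olsen2014approximability}. Thus deciding robustness of a given \cw is co-\NP-hard, and I would take it as the source problem.

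First I would start from an instance $(G, G_s, V_T, M, k)$ in which $G_s$ is already certified in \PTIME to be a factual witness and a \cw (Lemmas~\ref{lm-verifyw} and~\ref{lm-verifycw}), so that robustness is the only open question. I would then embed it into a generation instance $C'=(G', V_T', M', k)$ through a gadget engineered to make (the copy of) $G_s$ the \emph{unique} nontrivial candidate: the gadget rigidly fixes the factual and counterfactual conditions --- which, by the anti-monotonicity of Lemma~\ref{lm-k}, propagate consistently across subsets of test nodes --- so that any nontrivial $k$-\rcw of $C'$ must coincide with $G_s$, while leaving the disturbable region $G'\setminus G_s$, and hence every admissible $k$-disturbance and its worst-case margin, identical to the original. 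With this equivalence in place, $C'$ admits a nontrivial $k$-\rcw if and only if $G_s$ withstands all $k$-disturbances, i.e., iff the source instance is robust.

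A polynomial-time generation algorithm would then decide the co-\NP-hard robustness question: run it on $C'$, and report ``robust'' exactly when it returns a nontrivial $k$-\rcw and ``not robust'' when it reports that none exists. This yields the claimed co-\NP-hardness. I expect the gadget of the second step to be the main obstacle, because it must secure \emph{both} directions of the equivalence: no spurious witness unrelated to $G_s$ may ever be robust in $C'$ (so that ``none returned'' faithfully certifies non-robustness), and the collapse of all candidates onto $G_s$ must neither enlarge nor shrink the set of admissible disturbances on $G'\setminus G_s$. Finally, the reduction must employ \emph{general} $k$-disturbances with no local budget $b$; otherwise the greedy PageRank machinery behind Lemma~\ref{lm-condition-iff} supplies a \PTIME robustness check, which is exactly why the hardness holds only ``in general.''
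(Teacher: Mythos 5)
Your overall strategy --- reduce the co-\NP-hard question ``is this given \cw robust against every $k$-disturbance?'' to the generation problem --- is a legitimate route in principle, and it is genuinely different from the paper's: the paper builds no reduction gadget at all, but instead works with the complement of the generation decision problem for a single test node (``for every subgraph $G_s$ of $G$ there is some $k$-disturbance that disproves it''), places that complement in \NP by nondeterministically guessing a subgraph together with a disturbance and verifying in \PTIME, and inherits \NP-hardness from the verification hardness of Theorem~\ref{lm-rwverify}. The difficulty with your version is that its entire weight rests on the uniqueness gadget of the second step, and you never construct it; you only list the properties it would need to have. That is not a presentational omission but the actual mathematical content of the reduction. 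Forcing the copy of $G_s$ to be the \emph{only} nontrivial subgraph of $G'$ that is simultaneously factual, counterfactual, and robust is a condition quantified over exponentially many candidate subgraphs, and nothing you cite supplies it: Lemma~\ref{lm-k} is a monotonicity statement in $k$ and $V_T$ for a subgraph that already \emph{is} a $k$-\rcw, so it cannot be used to ``rigidly fix'' or ``propagate'' the factual and counterfactual status of \emph{other} candidates, and invoking it as ``anti-monotonicity'' misreads it. Without the gadget, the equivalence fails in exactly the direction you flag yourself: a generation algorithm run on $C'$ may return a robust witness unrelated to $G_s$, so a ``yes'' answer does not certify that $G_s$ withstands all $k$-disturbances, and the reduction decides nothing.

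Two parts of your write-up are sound and worth keeping: the identification of the source problem's co-\NP-hardness via \lbp for \ppnps is consistent with the machinery behind Theorem~\ref{lm-rwverify}, and your closing observation that the argument must use general $k$-disturbances --- since imposing a local budget $b$ re-enables the \PTIME check behind Lemma~\ref{lm-condition-iff} --- is precisely why the theorem is stated ``in general.'' To close the gap without designing the gadget, follow the paper's shortcut: argue the \NP-hardness of the complementary problem ``no subgraph of $G$ is a $k$-\rcw'' directly from the hardness of verification, rather than trying to pin the generator down to one prescribed candidate.
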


\begin{proofS}
It suffices to consider a ``single node''. 
A problem is co-\NP-hard if 
its complementary problem is 
\NP-complete. The decision problem of 
$k$-\rcw generation for  
a configuration $C$  = $(G, v, M, k)$ 
is to decide if there exists a subgraph 
$G_s$ such that for any $k$-disturbance, 
$G_s$ remains to be a \cw 
for $M(v, \widetilde{G})$=$l$, 
where $\widetilde{G}$ is obtained by 
disturbing $G$ with the $k$-disturbance. 
%(see Section~\ref{sec:pre}). 
%\textcolor{red}{[AK: $\hat{G}$ needs to be defined.]}
Its complementary problem 
is to verify if 
no subgraph $G_s$ in $G$ 
can be a $k$-\rcw, 
\ie for any subgraph $G_s$, 
there always exists 
some $k$-disturbance that 
``disapproves'' $G_s$ to be 
a $k$-\rcw. 
We make a case for \ppnps 
for the complementary problem. 
(1) There is an \kw{NP} algorithm 
that non-deterministically guesses 
a subgraph $G_s$ and a $k$-disturbance, 
and solves a verification problem 
given $G_s$ and $k$-disturbance 
in \PTIME with inference test. 
(2) The \NP-hardness follows from Theorem~\ref{lm-rwverify}, 
which shows that the verification problem of 
$k$-\rcw for \ppnps is already 
\NP-hard. Putting these together, 
Theorem~\ref{thm-rcw-gen} follows. 
\end{proofS}

One may consider a more feasible case 
when the verification problem is 
tractable. Following Lemma~\ref{lm-condition-iff}, 
we identify a tractable case for 
\ppnps under $(k,b)$-disturbances. 

\begin{lemma}
\label{thm-rcw-gen-tractable} 
Given a configuration $C$ = $(G, V_T, M, k, b)$ 
and only $(k,b)$-disturbances are allowed, 
the $k$-\rcw generation problem for \ppnps is 
in \PTIME. 
\end{lemma}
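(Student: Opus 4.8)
The plan is to exhibit a constructive \PTIME generation algorithm that reduces the search for a nontrivial $k$-\rcw to a polynomial number of \PTIME verifications, leaning on the tractable verification already established for \ppnps under $(k,b)$-disturbances (Lemma~\ref{lm-condition-iff} and Algorithm~\verifyrcwp). The guiding observation is a near-monotone structure of $k$-\rcws under edge-wise expansion of $G_s$: since every $(k,b)$-disturbance lies entirely within $G\setminus G_s$, the factual condition $M(v,G_s)=l$ is untouched by any disturbance and needs to be checked only once; and enlarging $G_s$ by moving node pairs out of the complement both shrinks $G\setminus G_s$ (which can only make the counterfactual condition $M(v,G\setminus G_s)\neq l$ easier to satisfy) and removes those node pairs from the set of flippable edges. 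This suggests a guided \emph{expand-verify} scheme that strictly grows $G_s$ and never backtracks.

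Concretely, first I would initialize $G_s$ to a minimal connected subgraph spanning $V_T$ (e.g., via a spanning-tree or shortest-path construction), the smallest nontrivial candidate keeping the test nodes connected. I would then iterate the following. Invoke \verifyw and \verifycw to confirm in \PTIME that $G_s$ is a \cw (Lemmas~\ref{lm-verifyw} and~\ref{lm-verifycw}); if it is not, expand $G_s$ with the node pairs most responsible for the surviving label. Once $G_s$ is a \cw, apply the worst-case test of Lemma~\ref{lm-condition-iff}: for each competing label $c\neq l$, compute the optimal $(k,b)$-disturbance $E^*_k$ by Procedure~\pri in \PTIME and check whether $M(v,G\setminus E^*_k)=l$. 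If every label passes, then by Lemma~\ref{lm-condition-iff} the subgraph $G_s$ is a $k$-\rcw and I return it; otherwise the offending $E^*_k$ witnesses a disturbance that breaks robustness, and I absorb its node pairs into $G_s$ (so they can no longer be flipped) and repeat. For a configuration with $|V_T|>1$ it suffices to intersect/union the per-node requirements, since by Lemma~\ref{lm-k} robustness is inherited downward to subsets of $V_T$.

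The termination and complexity argument is then immediate: each expansion step adds at least one node pair to $G_s$, so the loop runs at most $|E|$ rounds before either returning a $k$-\rcw or reaching $G_s=G$ (the trivial $k$-\rcw), at which point the algorithm reports that no nontrivial $k$-\rcw exists. Each round performs only \PTIME work, namely the \cw checks of Lemmas~\ref{lm-verifyw} and~\ref{lm-verifycw} and the per-label invocation of \pri, whose cost is polynomial following the policy-iteration analysis for PageRank optimization. Multiplying the $O(L|G'|(d_m\log d_m + LF(|E|+|V|F)))$ verification bound by the $O(|E|)$ expansion rounds keeps the total polynomial, establishing membership in \PTIME.

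The step I expect to be the main obstacle is the correctness (soundness and completeness) of the guided expansion, rather than the complexity bound. I would need to argue that absorbing the exact worst-case $E^*_k$ into $G_s$ makes genuine, irreversible progress and never discards a valid nontrivial solution: using the correctness of \pri as the exact maximizer of the worst-case margin for \ppnps, together with Lemma~\ref{lm-k}, any $j$-disturbance eliminated by an expansion stays eliminated. The delicate point is that the factual-witness property is \emph{not} obviously monotone in $G_s$ for an arbitrary \gnn, so I must verify that, for the PageRank-propagation form of \ppnps, the specific expansions driven by \pri preserve $M(v,G_s)=l$ and that the greedy absorption cannot ``overshoot'' a smaller nontrivial $k$-\rcw; equivalently, that reaching $G_s=G$ genuinely certifies the nonexistence of a nontrivial one. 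This is where the structural properties of \ppnps are essential and where the bulk of the proof effort will go.
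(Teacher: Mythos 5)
Your proposal is essentially the paper's own proof: the paper establishes this lemma constructively via Algorithm~\mrcwgen, which initializes $G_s$ from $V_T$, repeatedly invokes the \PTIME verifier \verifyrcwp (built on Lemma~\ref{lm-condition-iff} and Procedure~\pri) and absorbs the worst-case disturbance node pairs into $G_s$ via \kw{Expand}, terminating after at most $|V|+|E|$ expansions with either a nontrivial $k$-\rcw or the trivial one $G$, for the same per-round and total polynomial cost you derive. The completeness worry you flag at the end --- whether greedy absorption can overshoot a smaller nontrivial solution and whether reaching $G_s=G$ genuinely certifies nonexistence --- is a fair concern, but the paper leaves it equally unaddressed, arguing only termination and per-round \PTIME cost.
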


We %\textcolor{blue}{next} 
next 
%\textcolor{red}{[AK: do we mean in the next section?] - yes} 
present a feasible algorithm 
to generate $k$-\rcw for \gnn $\M$. 
The algorithm ensures to output 
a $k$-\rcw in general cases; and 
in particular, ensures a \PTIME 
process for \ppnps 
under $(k,b)$-disturbance. 

%\textcolor{red}{[AK: any result on the minimal k-RCW generation?]-clarified} 

\eat{
\begin{proofS}
We again consider a ``single node''. 
To see the upper bound, 
consider an \NP problem 
that guesses a subgraph 
$G_s$, and invokes 
the algorithm~\verifyrcw 
to verify if $G_s$ is a 
$k$-\rcw in \PTIME 
(Theorem~\ref{thm-feasible}). 
\end{proofS}
}

\eat{
\stitle{Generation Problem}. 
Given $G$, $V_T$, a \gnn $\M$ and a constant $k$, 
the (minimal) $k$-\rcw generation problem 
is to compute a (minimal) $k$-$\rcw$ 
$G_w$ of $V_T$ \wrt $\M$.  

It is often desirable to 
compute small explanations in terms of the 
number of edges to be inspected. 
We capture this with a minimality measure. 

\etitle{Minimal $k$-\rcw}. 
Given a configuration $C$ that 
specifies $G$, $V_T$, a \gnn $\M$ and a constant $k$, 
a subgraph $G_s$ of $G$ is a {\em minimal} 
$k$-\rcw, if for any subgraph $G_s'$ obtained by 
disturbing an edge from $G_s$, 
$G_s'$ is not a $k$-\rcw of 
$V_T$ \wrt $\M$. 

\stitle{Generation Problem}. 
Given $G$, $V_T$, a \gnn $\M$ and a constant $k$, 
the (minimal) $k$-\rcw generation problem 
is to compute a (minimal) $k$-$\rcw$ 
$G_w$ of $V_T$ \wrt $\M$.  

We show that the problem remains to be 
nontrivial, 
%\textcolor{blue}{regardless of the %minimality 
%constraint} \textcolor{red}{[AK: why 'regardless'? 'minimality' is part of Th-3 statement]}, 
even when the corresponding verification 
problem is tractable. 

\begin{theorem}
\label{thm-rcw}
For a configuration $C$ that 
specifies $G$, $V_T$, a fixed, deterministic \ppnp 
$\M$ and fixed constants $k$ and $b$, 
the decision problem of minimal $k$-\rcw 
generation %is %(1) co-NP-hard, 
%and (2) 
is \NP-complete, when only 
$(k,b)$-disturbance is allowed. 
\end{theorem}

\begin{proofS} 
The problem is in \NP: 
An \NP algorithm  
first guesses a subgraph 
$G_s$, and invokes algorithm~\verifyrcw to 
verify if $G_s$ is a $k$-\rcw under 
$(k,b)$-disturbance in \PTIME (Theorem~\ref{lm-rwverify-tractable}). 
The \NP-hardness of the problem 
can be verified by specifying 
$\M$ as a fixed \ppnp, and 
by performing a reduction 
from the Max-PageRank problem, 
a known \NP-hard problem 
that finds $k$ edges 
to maximize Personalized 
PageRank scores of a 
target node. \textcolor{red}{[AK: Can we elaborate more how that reduces to ''minimal'' k-RCW generation?]}
\end{proofS}

\textcolor{red}{[AK: What about the hardness of any k-RCW generation (not necessarily ''minimal'')? Is it still NP-complete?]}

\textcolor{red}{[AK: What about the existance of a k-RCW given any configuration? Why should it always exist?]}

Despite the hardness, 
we next introduce feasible 
algorithms for 
the generation problem. 
}

\begin{algorithm}[tb!]
    \renewcommand{\algorithmicrequire}{\textbf{Input:}}
    \renewcommand{\algorithmicensure}{\textbf{Output:}}
    \caption{Algorithm \mrcwgen} %Extendable
%    \scriptsize
    \begin{algorithmic}[1]
        \REQUIRE A configuration $C$ = $\{G, V_T, M, k\}$, integer $b$ (local budget); \\
        \ENSURE A $k$-\rcw $G_s$ of $V_T$ \wrt $\M$. \\
        \STATE 
        initialize $G_s$=$\V_T$; set $E_k$=$\emptyset$;  
        %integer $j$:=$1$;  %bitmap $\hat{A}$ as the adjacency matrix of $G$; 
        queue $Q$ = $V_T$; %bitmap array $B$:= $\emptyset$; 
         \WHILE{$G\setminus G_s\neq\emptyset$}  %
        % \STATE node $v$:= $Q.\kw{dequeue}()$; 
         \STATE update $C$=$(G,Q,M,k)$; 
         \IF{$\verifyrcw(C,G_s)$=\kw{false}}
         \STATE \textbf{return} $G$; 
         \ENDIF
         \WHILE{$Q\neq\emptyset$}
            \STATE $Q$=$\{v|v~\text{is not in $G_s$}, v\in V_T\}$; 
 %           \STATE $Q$:=$\kw{PrioritizeQ}(Q, C, G_s)$;
            \STATE node $v$=$Q.\kw{dequeue}()$; 
            \STATE $G_s$=\kw{Expand}$(v,G_s,C)$;
            \IF{$\verifyrcw(C,G_s)$=\kw{false}}
                \STATE \textbf{return} $G$; 
            \ENDIF
         \ENDWHILE
         \STATE \textbf{return} $G_s$; 
         \ENDWHILE
          \STATE \textbf{return} $G_s$; 
      %   \vspace{-2ex
        \eat{\STATE 
       $G_I$:=$\emptyset$; $G_s$:=$\emptyset$; set %$E_k$:=$\emptyset$; 
       set $V_u$:=$V_T$; integer $j$:=$1$;  
       \STATE $G_s$:= $V_T$;  $G_I$:=$G_s$; %$\kw{Kruskal}(G,V_T)$;
        \WHILE{$G\setminus G_s\neq\emptyset$}
  %       \IF{$\verifyrcw(C, G_s)$}
            \STATE $G_s$=$G_I$; $G_I$= $\kw{Expand}(G_s, G, V_T)$; $j$=$1$; %/*{\em Edge Expansion}*/
            \FOR{$j = 1$ to $k$}
            \IF{$\verifyrcw(C, G_I, j)$ = \kw{true}}
                 \IF{j=$k$} 
                    \STATE \textbf{return} $G_I$; 
                 \ELSE  \STATE \textbf{continue}; 
                \ENDIF
            \ELSE \STATE $G_I$=$G_s$; \textbf{break}; 
           % \textbf{continue};  
            \ENDIF
            \ENDFOR
        \ENDWHILE
       \STATE \textbf{return} $G_s$; 
       }
    \end{algorithmic}
  %  \myhrule\\
   \vspace{2ex}
    %\hspace{-1ex}
        \textbf{Procedure}: $\kw{Expand}(v,G_s,C)$\\
        \vspace{-2.3ex}
        \begin{algorithmic}[1] 
        \STATE set $N_k$= $\{u|u \in G\setminus G_s\}$; 
% ~\text{is in k-hop neighbor of }
        \STATE queue $v.Q_k$=$\emptyset$; set $E_k$=$\emptyset$;
        \WHILE{there is an unvisited pair $(u,u')$ in $ G\setminus G_s$}
% N_k\cap
        \STATE $v.Q_k$=${(u,u')}$;
        \STATE update worst-case margin $m^*_{l,c}(v)$; 
        \STATE update $E_k$ as all pairs $(u,u')$ that 
        maximize the worst-case margin; 
 %       \IF{$E_k\in B$} 
%        \STATE \textbf{continue};
%        \ENDIF
        \STATE augment $G_s$ with $E_k$; %$B$:=$B\cup E_k$; 
        \ENDWHILE
        \STATE \textbf{return} $G_s$; 
        \end{algorithmic}
    \label{procedure:rcwgen}
\end{algorithm}

\section{Generating Robust Witnesses}
\label{sec-approx} %Minimal

We present our main result below. 

\begin{theorem}
\label{thm-feasible}
%Given a graph $G$, a test node $v$,  
%a fixed deterministic \gnn $M$, 
%a set of vulnerable edges $\hat{E}$ and a local budget $k$, 
%(1) the decision problem for RCW generation  
%is NP-complete, 
%(2) 
%Given a configuration $C$ that 
%specifies $G$, $\M$, constants $k$ and $b$, 
Given a configuration $C$ 
that specifics $G$, $\M$, 
$V_T$, and $k$, 
there is an algorithm that 
generates a $k$-\rcw in 
$O((N+|G|)(L|E|F+L|V|F^2))$ time, where $N$ is the 
total number of verified $k$-disturbances. 
Specifically for \ppnps,
% under $(k,b)$-disturbances, 
given $G' = G\setminus G_s$ and $d_m$ as maximum degree,
it is in $O(L|G'||V_T|\cdot(d_m\log d_m+ LF(|E|+|V|F) + k))$ time. 
\end{theorem}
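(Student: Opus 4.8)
The plan is to prove the claim constructively, exhibiting Algorithm~\mrcwgen (Algorithm~\ref{procedure:rcwgen}) as the witness for the existence statement and then bounding its cost. The algorithm initializes $G_s$ with the test nodes $V_T$ (a trivial non-witness) and repeatedly calls \expand to augment $G_s$ with the node pairs that currently maximize the worst-case margin $m^*_{l,c}(v)$, interleaving calls to \verifyrcw after each augmentation. First I would establish termination: $G_s$ grows monotonically (edges are only added, never removed) and is upper-bounded by $G$; since $G$ is itself a trivial $k$-\rcw (no $k$-disturbance applies to $G\setminus G=\emptyset$, among the trivial cases noted in Section~\ref{sec-expstructure}), either the loop guard $G\setminus G_s\neq\emptyset$ eventually fails or the fall-back \textbf{return}~$G$ fires. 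Hence the algorithm always halts with output. Soundness then follows from the correctness of \verifyrcw: any returned $G_s$ has been certified as a $k$-\rcw, and the fall-back output $G$ is a trivial $k$-\rcw. Lemma~\ref{lm-k} guarantees that the per-node certifications carried out while $Q$ drains suffice to certify $G_s$ for the entire set $V_T$.

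Next I would justify that greedily absorbing the margin-maximizing node pairs is the correct expansion rule. The key lever is Lemma~\ref{lm-condition-iff}: for \ppnps a verified \cw $G_s$ is a $k$-\rcw if and only if $M(v,G\setminus E^*_k)=l$ for the single worst-case disturbance $E^*_k$ maximizing $\pi_{E_k}(v)^{T}(Z_{\{:,c\}}-Z_{\{:,l\}})$. Thus each unsuccessful verification pinpoints exactly the pairs that break robustness, and \expand draws precisely these pairs into $G_s$, shrinking $G\setminus G_s$ so that the disturbance that disproved robustness can no longer be posed. This monotone ``disturbance-absorption'' argument shows expansion is both sound (no feasible witness is discarded prematurely, as the fallback to $G$ remains available) and progress-making, so the outer loop runs at most $O(|G|)$ times in the general case.

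For the running time I would count inference calls, since each verification and each margin update reduces to running the \PTIME inference $M$. In the general case every verified $k$-disturbance costs one inference pass of $O(L|E|F+L|V|F^2)$ (the message-passing cost quoted after Lemma~\ref{lm-verifycw}); with $N$ such verified disturbances plus at most $|G|$ expansion steps, this yields the stated $O((N+|G|)(L|E|F+L|V|F^2))$. For \ppnps under $(k,b)$-disturbance I would substitute the tractable verifier \verifyrcwp (Algorithm~\ref{procedure:extendable}), whose single-node cost is $O(L|G'|(d_m\log d_m+LF(|E|+|V|F)))$ with $G'=G\setminus G_s$, invoked once per test node of $V_T$; folding in the per-node $O(k)$ check that $|E^*|\le k$ and multiplying over the $|V_T|$ nodes and the $|G'|$ expansion iterations gives $O(L|G'||V_T|(d_m\log d_m+LF(|E|+|V|F)+k))$.

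The main obstacle, I expect, is the \ppnp complexity accounting rather than correctness. One must show that the outer expansion and the inner policy-iteration-based \pri procedure together perform only polynomially many inference calls, reusing the guarantee of~\cite{bojchevski2019certifiable,hollanders2011policy} that the greedy PageRank maximization converges in $O(|G'|)$ policy updates, and that re-running \verifyrcwp after each expansion compounds no worse than linearly in $|V_T|$ and $|G'|$. Care is also needed to confirm that rejecting disturbances with $|E^*|>k$ (the $(k,b)$-versus-$k$ mismatch flagged in the Remarks on Procedure~\pri) does not compromise soundness of the generated witness; this again follows because the algorithm can always fall back to the trivial witness $G$, so any over-rejection only shifts the output toward a larger, still valid $k$-\rcw.
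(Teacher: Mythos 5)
Your proposal matches the paper's proof essentially step for step: both are constructive arguments built on Algorithm~\mrcwgen's expand--verify loop, with termination by monotone growth of $G_s$ toward the trivial witness $G$, correctness resting on Lemma~\ref{lm-condition-iff} and the \PTIME verifiers, and the same two cost decompositions ($N$ verified disturbances plus at most $|G|$ expansion steps in the general case; per-node \verifyrcwp cost times $|V_T|$ plus $O(k|V_T|)$ for \expand in the \ppnp case). One small caution on the \ppnp accounting: the $|G'|$ factor already comes from the policy-iteration rounds inside \pri (the paper explicitly notes that \kw{Expand} incurs no additional verification because $E^*_k$ is computed once per node), so your phrase about also ``multiplying over the $|G'|$ expansion iterations'' would, taken literally, double-count to $|G'|^2$ --- though the final bound you state is the correct one.
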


%Given a configuration $C$=$(G,V_T, M, k)$, 
We present an algorithm as a 
constructive proof. 
Following Lemma~\ref{lm-condition-iff}, 
it suffices to ensure that the algorithm 
verifies the sufficient and necessary condition. 
To this end, it 
processes a set of test nodes $V_T$ ``one node at a time'', 
and iteratively grows $G_s$ with an   
%``%prioritize-
``expand-verify'' strategy for each node: 
\eat{reorder unvisited test nodes in 
$V_T$ to find the node that $G_s$ is likely to continue 
to be a $k$-\rcw, \ie its label is {\em not} likely to be 
changed given current $G_s$ and $k$-disturbances in $G\setminus G_s$ (see Procedure~\kw{PrioritizeQ}); 
}
(1) It {\em approximates} the optimal set of 
node pairs $E^*_k$ that can minimize the 
worst-case margin $m^*_{l,c}$ of $v$ (Eq.~\ref{eq-margin}), 
\ie most likely to change its label if ``flipped''
\footnote{For \ppnps, maximizing Personalized PageRank score as in Lemma~\ref{lm-condition-iff}.} (see Procedure~\kw{Expand}); 
and (2) augment $G_s$ to contain  
$E^*_k$, such that it ``secures'' 
$G$ by preventing disturbances from 
$E^*_k$. As such, $M(v, G\setminus G_s)$ 
is most likely\footnote{Here the optimality of $E^*_k$ is not guaranteed by the procedure, and needs to be 
verified, for general cases.} 
%\textcolor{red}{[AK: state why? and why 'most likely' and not deterministic? ]-explained}
to be $l$, and 
$G_s$ will remain to be a $k$-\rcw 
for $M(v,G\setminus G_s)$ determined 
by the next round of verification. 

\stitle{Algorithm}. 
The algorithm, denoted as \mrcwgen 
and illustrated in Algorithm~\ref{procedure:rcwgen}, 
outputs a non-trivial $k$-\rcw $G_s$ if exists; 
and terminates with the trivial $k$-\rcw $G$ %or $V_T$ 
by default. 
It uses (i) a queue $Q$ to %prioritize 
coordinate the processing 
of the test nodes $V_T$, (ii) for each test node $v\in V_T$, 
a queue of node pairs $v.Q_k$, to track the $k$-disturbance 
that most likely 
%\textcolor{red}{[AK: why not deterministic?]-deterministic is gnn property; this likelihood -- the algorithm solution for the optimization problem.}
to change its own label. 
(1) It initializes $G_s$ as a trivial \cw set $V_T$ (line 1), 
and grows $G_s$ up to the trivial $k$-\rcw $G$ to 
guarantee the termination (lines 2-12). 
(2) The expansion of $G_s$ is 
controlled by a %prioritized 
node-by-node 
process (lines~7-12). Given a verified $k$-\rcw $G_s$
for a fraction of $V_T$ (lines~3-5); 
%it invokes procedure
%~\kw{PrioritizeQ} to prepare the next test node $v$
%to be processed (line~8), 
it invokes procedure
~\kw{Expand} to process $v$ and to grow $G_s$. 
(3) The process continues until 
either a nontrivial $k$-\rcw $G_s$ for 
$V_T$ is identified and is returned (line 13), or no such 
structure can be found, and trivial 
explanations are returned (lines 5 and 11). %\dq{dq: please check if this is correct. }
%\textcolor{red}{[AK: no such line]}

\eetitle{Procedure~\kw{Expand}}. 
The procedure~\kw{Expand} iteratively explores an 
unvisited node pair $(u,u')$ in $G\setminus G_s$ 
and iteratively estimates if the worst-case margin 
is affected by including $(u,u')$ in a 
$k$-disturbance. If $m^*_{l,c}$ is less than $0$, 
a $k$-disturbance is enlisted to $v.Q_k$ to 
augment $G_s$ for further verification. 

In particular for \ppnps under $(k,b)$-disturbance, ~\kw{Expand} 
implements the greedy policy iteration process 
in the procedure~\verifyrcwp (which replaces lines 5-6 with an 
implementation that simulates lines 7-10 of \verifyrcwp), to expand $G_s$ with the 
optimal edge set $E^*_k$
%\dq{dq: please check if this is correct. }
% \textcolor{red}{}. 
%\textcolor{red}{[AK: no such line numbers in that algorithm.]}

\begin{figure}[tb!]
\centering
\centerline{\includegraphics[width =0.46\textwidth]{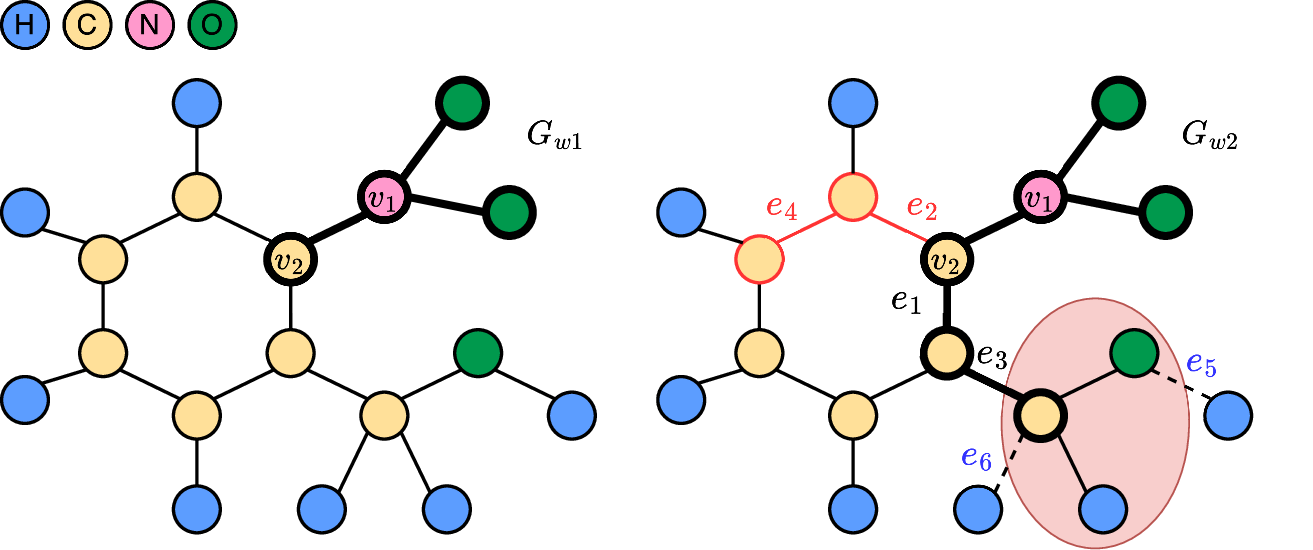}}
\vspace{-1.5ex}
\caption{Generating robust witness for $G_1$ with expansion and verification. }
\label{fig:example}
\vspace{-2ex}
\end{figure}

\begin{example}
 Figure \ref{fig:example} illustrates 
 the expansion and verification process. 
 Consider the mutagenic molecule graph $G_1$ 
 with two test nodes $V_T$ = $\{v_1, v_2\}$, and set $k$=2, $b$=1. Both nodes $v_1$ and $v_2$  
 are labeled `mutagenic' by an \ppnp $\M$. 
% On the left hand side, 
A currently verified $2$-\rcw $G_{w1}$ is identified 
for node $v_1$ (with bold edge), which contains 
$v_1$ and its three direct neighbors. 
% The edge-bold subgraph $G_{w1}$ is the 

\sstab
(1) Given $G_{w1}$, we start with $v_2$. On the right figure, \mrcwgen invokes \kw{Expand} to estimate, from the "changeable" area $(G_1\setminus G_{w1})$, those node pairs that are most likely to maximize the worst-case margin of $v_2$ from `mutagenic' to `nonmutagenic' if disturbed. Assume that we have two possible $E_{expand}^*$ for %\textcolor{red}{[AK:what is 'epd'?]} 
augmentation, which lead to two possible $G_s$ to be verified after \expand: $G_{w1}\cup \{e_1, e_3\}$ and $G_{w1}\cup \{e_2, e_4\}$. 
\sstab
(2) In the verification phase, \verifyrcwp invokes \pri to generate disturbance $E_{disturb}^*$, i.e., $\{e_5, e_6\}$. The red bold area indicated in Figure \ref{fig:example} is an aldehyde structure and shows the `mutagenic' property under the disturbance. The $G_{w1}\cup \{e_2, e_4\}$ fails the verification since the augmented part is not associated with any `mutagenic' structure, while $G_{w1}\cup \{e_1, e_3\}$ reaches to the carbon node in the aldehyde structure, which is responsible for `mutagenic', ensuring the inclusion of `mutagenic' structure and explains the label of $v_2$ under disturbance. Finally, the bold structure on the right graph, denoted as $G_{w2}$, is the \rcw that it finds. 
\end{example}

\eat{
In the next iteration, we add either edge 3 or edge 4, respectively, and remove edge 6. Now $G_{w1}$ with additional edges 2 and 4 is no longer a \cw since the aldehyde structure in the red area is toxic. Meanwhile, $G_{w1}$ with additional edges 1 and 3 remains a \cw, we denote it as $G_{w2}$. Finally, we generate an \rcw $G_{w2}$ w.r.t. given $k$ and $b$ constants. 
}

\vspace{-.5ex}
\stitle{Analysis}. 
Algorithm~\mrcwgen always terminates with 
either a non-trivial $k$-\rcw, or trivial cases. 
For the time cost, (1) it takes a total verification cost 
in $O(N\times I)$ 
(line~4), where $N$ is the total number of $k$-disturbances 
being verified, and $I$ is the \gnn-specific 
single inference cost, which is typically bounded in  
$O(L|E|F+L|V|F^2)$ for representative \gnns
(\eg \gcns, \gats, \gsage, \ppnps)~\cite{chen2020scalable,zhou2021accelerating}. 
Procedure~\kw{Expand} takes
$O(|G|I)$ time, as $G_s$ grows at most $|V|+|E|$ times\footnote{Note that although $G_s$ considers node pairs in $|V|\times|V|$, it only grows to size $|G|$ and does not add node pairs already included.}. 
The total cost is thus in $O((N+|G|)L|E|F+L|V|F^2)$.  
%for representative \gnns.

Specifically for \ppnps, the algorithm~\mrcwgen 
invokes procedure~\verifyrcwp to 
verify $k$-\rcw, set $G' = |G \setminus G_s|$, 
$d_m$ as the maximum degree, 
as analyzed in Algorithm~\ref{procedure:extendable}, it takes $O(L|G'|\cdot(d_m\log d_m+ LF(|E|+|V|F)))$ time to process each node $v \in V_T$.
% at most 
% $k\times b$ 
% $(k,b)$-disturbances 
% for each node $v\in V_T$, and for each verification, 
% takes time $O(L|E|F+L|V|F^2)$ for $L$-layered 
% \ppnps with $F$ features per node. 
% Thus the total verification cost is in 
% $O(kbLF(|E|+|V|F))$. 
The optimal node pairs $E^*_k$ 
for a node 
$v$ only need to be computed once in 
\verifyrcwp (line~8) %\dq{dq: please check if line 8 is correct}
%\textcolor{red}{[AK: no such line number]} 
or by \kw{Expand} (line~6) via a 
deterministic process, hence 
the procedure \kw{Expand} {\em does not} incur 
additional verification, and has cost 
$O(k|V_T|)$. Thus the total cost is  
$O(L|G'||V_T|\cdot(d_m\log d_m+ LF(|E|+|V|F) + k))$. %\textcolor{red}{[AK: What happened to the term $k|V_T|$, mentioned just above?]}
%Here $b$ is bounded by node degree, and $L$ is 
%typically small for \gnns.
% \warn{@Mengying, add more discussion here and update the cost to be consistently 
% with Theorem~\ref{thm-feasible}, using maximum 
% degree}. 
Here $L$ is usually a small constant, 
and $|V_T|$ and $d_m$ are 
relatively much smaller compared with 
$|V|$ and $|E|$, respectively.

\vspace{.5ex}
Given the above analysis, Lemma~\ref{thm-rcw-gen-tractable} and Theorem~\ref{thm-feasible} follow. 

%%%%%%%%% let's save it for journal version. 
% more work needs to be done. I have no time 
% to give detailed algorithm and proof. 
%%%%%%%%%%%%%%%%%%%%%%%%%%%%%%
\eat{
\stitle{Minimality of $k$-\rcws}. It is often desirable to compute small explanations in terms of the number of edges to be inspected. 
We capture this with a minimality measure. 
A subgraph $G_s$ of $G$ is a {\em minimal} 
$k$-\rcw, if for any subgraph $G_s'$ obtained by 
disturbing an edge from $G_s$, 
$G_s'$ is not a $k$-\rcw of 
$V_T$ \wrt $\M$. We show that with minor revisions, algorithm~\mrcwgen can 
readily generates minimal \rcws without 
incurring additional overhead. Specifically, 
\mrcwgen uses a revised \expand and 
\verifyrcw as follows. 
(1) Instead of a batch estimation 
and expansion of multiple node pairs, 
\expand is enforced to perform a ``best-at-a-time'' 
growth in the procedure~\expand, such that 
the explanation is verified each time an edge 
is inserted to $G_s$.  
(2) Upon the expanded $G_s$, 
\verifyrcw performs at most 
$|G_s|$ round of additional inference tests (by invoking $M$), 
and in each round, removes an edge $e$ 
in $G_s$ to test if the rest remains a 
$k$-\rcw. Note that it suffices 
for \verifyrcw 
to {\em incrementally} check if $G_s\setminus\{e\}$ remains to be a 
$k$-\rcw, as $G_s$ is a verified $k$-\rcw for $M(v, G)$=$l$, 
and \verifyrcw only considers additional $k$-disturbances 
that must include $e$, which ``hot starts'' 
from $k-1$-\rcw verification in~\verifyrcw. % (in~\cite{full}). 
%(%line~7 of see Algorithm~\ref{procedure:verifyrcw}). 
%\textcolor{red}{[AK: at least some intuition needs to be given how it works.]} 

Due to limited space, we present 
the details in~\cite{full}. 
}

\eat{
\subsection{Verification 
with Vulnerability Constraints} 

As a part of 
the constructive proof for 
Theorem~\ref{thm-feasible}, we revisit 
verification problem. 

\begin{lemma}
\label{lm-cwverify-local}
Given a graph $G$, a test node $v$,  
a fixed deterministic \gnn $M$, 
a set of fixed vulnerable edges $\hat{E}\subseteq E$ and a local 
budget $k$, 
it is in \PTIME to check if 
a subgraph $G_s$ of $G$ is a $k$-\rcw \wrt $\hat{E}$ and local budget 
$k$. 
\end{lemma}

We introduce such an algorithm. 

\subsection{Computing Minimal 
Robust Witnesses}

We next introduce our algorithm to 
compute minimal $k$-\rcws. 
}

\eat{
%\floatname{algorithm}{Algorithm}
\begin{algorithm}[tb!]
    \renewcommand{\algorithmicrequire}{\textbf{Input:}}
    \renewcommand{\algorithmicensure}{\textbf{Output:}}
    \caption{Algorithm \mrcwgen} %Extendable
%    \scriptsize
    \begin{algorithmic}[1]
        \REQUIRE A configuration $C$ = $\{G, \emptyset, V_T, \M, k\}$, integer $b$ (local budget); \\
        \ENSURE A minimal $k$-\rcw $G_s$ of $V_T$ \wrt $\M$. \\
       % /*{\em Initialize Jacobian Matrix}*/
       % \STATE Jacobian Matrix $JM = |V \times V_T|$
       % \FOR{$e \in JM$}
       % \STATE $e = \frac{\partial v\in V_T}{\partial V}$
       % \ENDFOR        
       % /*{\em Non-touchable core area}*/
       \STATE $G_I$:=$\emptyset$; $G_s$:=$\emptyset$; set $E_k$:=$\emptyset$; set $V_u$:=$V_T$; integer $j$:=$1$;  
       \STATE $G_s$:= $V_T$;  $G_I$:=$G_s$; %$\kw{Kruskal}(G,V_T)$;
        \WHILE{$G\setminus G_s\neq\emptyset$}
  %       \IF{$\verifyrcw(C, G_s)$}
            \STATE $G_s$:=$G_I$; $G_I$:= $\kw{Expand}(G_s, G, V_T)$; $j$:=$1$; %/*{\em Edge Expansion}*/
            \FOR{$j = 1$ to $k$}
            \IF{$\verifyrcw(C, G_I, j)$ = \kw{true}}
                 \IF{j=$k$} 
                    \STATE \textbf{return} $G_I$; 
                 \ELSE  \STATE \textbf{continue}; 
                \ENDIF
            \ELSE \STATE $G_I$:=$G_s$; \textbf{break}; 
           % \textbf{continue};  
            \ENDIF
            \ENDFOR
        \ENDWHILE
       \STATE \textbf{return} $G_s$;     
    \end{algorithmic}
    \label{procedure:extendable}
\end{algorithm}

%\floatname{algorithm}{Algorithm}
\begin{algorithm}[tb!]
    \renewcommand{\algorithmicrequire}{\textbf{Input:}}
    \renewcommand{\algorithmicensure}{\textbf{Output:}}
    \caption{:\mrcwgen} %Extendable
%    \scriptsize
    \begin{algorithmic}[1]
        \REQUIRE A graph $G$, the test node set $V_T$ and $k$. 
        \ENSURE A Minimal $k$-RCW. \\
        /*{\em Initialize Jacobian Matrix}*/
        \STATE Jacobian Matrix $JM = |V \times V_T|$
        \FOR{$e \in JM$}
        \STATE $e = \frac{\partial v\in V_T}{\partial V}$
        \ENDFOR        
        /*{\em Non-touchable core area}*/
        \STATE Generate a spanning tree $ST=Kruskal(G, V_T)$. $(|V_{ST}| = |V_T|, |E_{ST}| \subset |E_G|)$\\
        /*{\em Level-wise Search}*/
        \STATE $G_I = ST$. 
        \WHILE{$|G_I| \leq k$}
            \STATE Update $JM$ with node label propagation (1-hop). 
            \STATE $e = argmax(Sensitivity(JM))$. 
            \IF{Exist attack graph $G_A = Validator(G_I)$}
            \STATE $G_I = G_I \backslash G_A$. 
            \STATE $G_I = G_I \cup e$. 
            \ELSE
            \STATE $kRCW = G_I$
            \STATE \textbf{break}
            \ENDIF
        \ENDWHILE
        \RETURN $kRCW$;    
    \end{algorithmic}
    \label{procedure:extendable}
\end{algorithm}
}

\eat{
We summarize our main results in Table~\ref{tab:results}.

\begin{table}[tb!]
\begin{small}
\vspace{-3mm}
\begin{tabular}{c|c|c|c|c}
%\hline
\textbf{Setting} & \textbf{Hardness} & \textbf{Algorithm} & \textbf{Guarantee} & \textbf{Time} \\ 
\hline
Witness Verification  & PTIME &  &  & \\ \hline  
\cw Verification & PTIME & & & \\ \hline 
\rcw Verification & NP-hard & & &  \\ \hline
minimal \rcw & CoNP-hard & & & \\ \hline 
\hline
\end{tabular}
\end{small}
\caption{Main results, Algorithms and 
Guarantees \tbf}  
\label{tab:results}
\vspace{-2ex}
\end{table}
}

\section{Parallel Witness Generation}
% \label{sec:pattern_mining}
\label{sec-parallel}

When the graph is large, the verification 
of $k$-\rcw can be expensive and is a major 
bottleneck. 
We next present a parallel algorithm, 
denoted as \prcwgen and is illustrated 
as Algorithm~\ref{alg:parallel}. 
Our idea is to parallelize the 
verification into a smaller manageable 
tasks \textless$G_i, G_s, j$\textgreater, each of 
which checks ``if there is a disturbance from $G_i$ alone that disproves $G_s$ to be a  
$j$-\rcw of $M(v,G)$=$l$'', and performs parallelized verification for each 
fragment $G_i$ of $G$.   

Given Lemma~\ref{lm-k}, we observe the following result. 

\begin{lemma}
\label{lm-parallel}
Given a subgraph $G_i$ of $G\setminus G_s$, 
If there is a $j$-disturbance $E^i_j$ 
in $G_i\setminus G_s$, and applying $E^i_j$ 
to $G$ changes the 
result $M(v,G)$ = $l$, then 
$G_s$ is not a $k$-\rcw for 
$M(v,G)$=$l$, for any $k\geq j$. 
\end{lemma}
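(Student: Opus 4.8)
The plan is to first show that the single witnessed disturbance $E^i_j$ already breaks $G_s$ at robustness level $j$, and then lift this to every $k\ge j$ by invoking Lemma~\ref{lm-k}. The first, and conceptually central, step is the observation that a disturbance confined to a fragment is automatically a legitimate disturbance of the whole graph. Concretely, $E^i_j$ flips at most $j$ node pairs, all lying in $G_i$; since $G_i$ is a subgraph of $G\setminus G_s$, none of the flipped pairs is an edge of $G_s$. Hence $E^i_j$ is an admissible $j$-disturbance \emph{on $G\setminus G_s$} in the precise sense of the $k$-\rcw definition (it inserts and removes only node pairs outside $G_s$). This ``locality-implies-globality'' property is exactly what makes the fragment-wise verification in \prcwgen sound.

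Next I would use the hypothesis that applying $E^i_j$ changes the result. Writing $\widetilde G$ for the graph obtained from $G$ by this $j$-disturbance, we have $M(v,\widetilde G)\neq l$. Since $\M$ is fixed and deterministic and $E^i_j$ touches no edge of $G_s$, the value $M(v,G_s)=l$ is unchanged, while $M(v,\widetilde G)\neq l$; hence $G_s$ is not a factual witness of $M(v,\widetilde G)=l$, and therefore not a \cw of $M(v,\widetilde G)=l$ --- the same failure mode used in the Only-If direction of Lemma~\ref{lm-condition-iff}. Thus $E^i_j$ is a $j$-disturbance on $G\setminus G_s$ that disproves $G_s$ as a $j$-\rcw of $M(v,G)=l$, i.e.\ $G_s$ is \emph{not} a $j$-\rcw.

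Finally I would propagate this failure upward in $k$ by the contrapositive of Lemma~\ref{lm-k}. Suppose toward a contradiction that $G_s$ were a $k$-\rcw of $M(v,G)=l$ for some $k\ge j$. Instantiating Lemma~\ref{lm-k} with $k'=j\in[0,k]$ and the single-node set $\{v\}$ would force $G_s$ to be a $j$-\rcw of $M(v,G)=l$, contradicting the previous paragraph. Hence no $k\ge j$ makes $G_s$ a $k$-\rcw, which is the claim. I expect no genuine obstacle: the only care needed is the bookkeeping that $E^i_j\subseteq G_i\subseteq G\setminus G_s$ makes $E^i_j$ admissible in the $k$-\rcw definition, and that a $j$-disturbance is subsumed by a $k$-disturbance whenever $k\ge j$. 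The monotonicity in $k$ itself is already packaged inside Lemma~\ref{lm-k}, so the substance of the argument rests entirely on the locality observation of the first paragraph.
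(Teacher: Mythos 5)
Your proof is correct and follows essentially the same route the paper intends: the paper justifies Lemma~\ref{lm-parallel} only by the remark ``Given Lemma~\ref{lm-k}, we observe the following result,'' and your argument is precisely the fleshed-out version --- the locality observation that $E^i_j\subseteq G_i\subseteq G\setminus G_s$ makes it an admissible $j$-disturbance disproving $j$-robustness (the same failure mode as the Only-If direction of Lemma~\ref{lm-condition-iff}), followed by the contrapositive of Lemma~\ref{lm-k} to cover all $k\geq j$. No gaps.
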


\begin{figure}
\centering
\begin{algorithm}[H]
    \renewcommand{\algorithmicrequire}{\textbf{Input:}}
    \renewcommand{\algorithmicensure}{\textbf{Output:}}
    \caption{Algorithm~\prcwgen} 
    \begin{algorithmic}[1]
        \REQUIRE Graph $G = \{G_1, G_2, \dots, G_n\}$, test nodes $V_T$, and $k$. 
        \ENSURE A $k$-\rcw $G_s$ of $V_T$ \wrt $\M$. \\
        /*{\em At Coordinator $s_0$}*/
        \STATE initialize $G_s$=$\V_T$; set $E_k$=$\emptyset$; 
        queue $Q$ = $V_T$; 
        \STATE compute bitmap $B$ of adjacency matrix of $G$ %with size $k\times|E|$.
%        \STATE $G^0_s$ = GenerateSpanningTree($G$, $V_T$), $Q = G$
        \WHILE{$Q\neq\emptyset$} 
        \STATE $Q$=$\{v|v~\text{is not in $G_s$}, v\in V_T\}$; 
        %\STATE $Q$:=$\kw{PrioritizeQ}(Q, C, G_s)$; 
        \STATE node $v$=$Q.\kw{dequeue}()$;   
        \FOR{$j = 1$ to $k$} 
        
           % \STATE \textbf{set} $Q_t$ = $Q$, $Q=\varnothing$.
            \STATE $G^i_{s_j}$ = $\paraexpand(C,G_i,G_s,j,B_j)$; 
            /*{\em parallel expansion}*/
            %parallelizes the compute $i$-\rcw for each partition $G_j \in G$.
            %\IF{not \verifyrcw($G$, $i$-\rcw, $V_T$, $i$, $B^i$)}
             
            \STATE $\paraverifyrcw(C,G^i_{s_j},G_s,j,B_j)$;/*{\em Parallel verification}*/
            \STATE synchronize $B$; 
            
            \STATE $G_{s_j}$= $\bigcup_{i=1}^{n}G^i_{s_j}$;
           % \STATE identify and remove conflicts from $i$-\rcw and $G^i_s$.
            %\STATE %\verifyrcw$(G_{s_j}, B)$;
            \IF{$\verifyrcw(C,G_{s_j},B_j)$=\kw{false}}
                \STATE update $B_j$; \textbf{continue};  
            \ENDIF
    %\IF{$\verifyrcw(C,G_{s_j},B)$=\kw{true} \AND $j$=$k$}
        %    \STATE \textbf{return} $G_s$; 
        %    \ENDIF
        \ENDFOR
        \ENDWHILE
        \STATE \textbf{return} $G_s$;   
        \vspace{1ex}\\
        \eat{
        /*{\em Parallel execution at worker $s_j$}*/
        \WHILE{$Q_t$ is not empty}
                \STATE $G_j$ = $Q_t$.pop(), $G^i_{s_j} = G^{i-1}_{s_j}$.
                \STATE Identify sensitive edges and update $G^i_{s_j}$.
                \IF{\verifyrcw($G$, $G^i_{s_j}$, $V_T$, $i$, $B^i$)}
                    \STATE $Q$.add($G_j$).
                \ENDIF
            \ENDWHILE
}
    \end{algorithmic}
\end{algorithm}
\vspace{-3ex}
\label{alg:parallel}
\end{figure}

\eat{
\begin{figure}
\centering
\begin{algorithm}[H]
    \renewcommand{\algorithmicrequire}{\textbf{Input:}}
    \renewcommand{\algorithmicensure}{\textbf{Output:}}
    \caption{Parallel Witness Generation} 
    \begin{algorithmic}[1]
        \REQUIRE Graph $G = (G_1, G_2, \dots, G_n)$, test nodes $V_T$, and $k$. 
        \ENSURE A minimal $k$-\rcw $G_s$ of $V_T$ \wrt $\M$. \\
        % \STATE \textbf{queue} $Q$ = Partition($V_T$).
        % \STATE initialize an empty array $R$ with size $k\times|Q|$.
        \STATE initialize a bitmap array $D$ with size $k\times|E|$.
        \vspace{1ex}
        \FOR{\textbf{each} $V_{T_n}$ \textbf{in} $Q$}
            % \STATE $JM_n$ = ComputeJacobianMatrix($G$, $V_{T_n}$).
            \STATE $R^0[V_{T_n}]$ = GenerateSpanningTree($G$, $V_{T_n}$).
        \ENDFOR
        \vspace{1ex}
        \FOR{$i = 1$ to $k$}
            \STATE \textbf{set} $Q_t$ = $Q$, $Q=\varnothing$.
            \WHILE{$Q_t$ is not empty}
                \STATE $V_{T_n}$ = $Q_t$.dequeue(), $G_S = R^{i-1}[V_{T_n}]$.
                \STATE Update $JM_n$ with node label propagation (1-hop). 
                \STATE Identify sensitive edges and update $G_S$ with $JM_n$.
                \IF{\verifyrcw($G$, $G_S$, $V_{T_n}$, $i$, $D^i$)}
                    \STATE $R^i[V_{T_n}] = G_S$, $Q$.enqueue($V_{T_n}$).
                    % \STATE $D^i[V_{T_n}] = RecordDisturbances(G_S)$.
                \ENDIF
            \ENDWHILE
            \STATE $i$-\rcw $= \bigcup_{j=1}^{n}R^i[V_{T_j}]$
            \IF{not \verifyrcw($G$, $i$-\rcw, $V_T$, $i$, $D^i$)}
            \STATE identify and remove conflicts from $i$-\rcw and $R$.
            \ENDIF
        \ENDFOR
        \vspace{1ex}
        % \STATE $k$-\rcw $= \bigcup_{j=1}^{k}R^j$
        % \IF{not \verifyrcw($G$, $k$-\rcw, $V_T$, k)}
        % \STATE identify and remove conflicts from $k$-\rcw.
        % \ENDIF
        \RETURN $k$-\rcw;    
    \end{algorithmic}
\end{algorithm}
\label{alg:parallel}
\end{figure}
}

\stitle{Partition}. The algorithm \prcwgen 
works with a coordinator site $S_0$ and 
$n$ workers $\{S_1, \ldots S_n\}$. 
Given a configuration $C$ = $(G,V_T,M,k)$, 
it ensures an ``inference preserving partition'' 
such that local verification and 
inference can be conducted. 
(1) The \gnn model $\M$ is duplicated 
at each site. 
(2) The graph $G$ is fragmented 
into $n$ partitions $\{G_1, \ldots G_n\}$ through edge-cut based partition
where each worker $S_i$
processes one fragment $G_i$. 
For each ``border node'' $v$ that resides 
in fragment $G_i$, 
its $k$-hop neighbors are duplicated 
in $G_i$, 
to ensure that no data exchange is needed 
for parallel verification. 
(3) Each row of the adjacency matrix is converted into a bitmap, a sequence of bits where each bit represents the presence (`1') or absence (`0') of an edge from the vertex corresponding to that row to another vertex.
All local fragments share 
a compressed bitmap 
encoding $B$ of the adjacency matrix of $G$, 
the node features $X^0$, 
and the common $V_T$, such that 
$M(v,G)$ can be inferred locally 
without expensive data communication. 

\stitle{Algorithm}. Algorithm \prcwgen is 
illustrated as Algorithm~\ref{alg:parallel}. 
It starts with the initialization 
similar to its sequential counterpart~\mrcwgen. 
In addition, it uses one bitmap for each worker to record verified $k$-disturbances
and a global bitmap
$B$ to synchronize all $k$-disturbances verified from each worker to avoid redundant verification. 
It mainly parallelizes the 
computation of two procedures: \kw{Expand}, 
with procedure \paraexpand (line~8), 
and \verifyrcw, with procedure~\paraverifyrcw (line~9) 
(not shown). 
It assembles an expanded 
global subgraph $G_{s_j}$ as the union 
of all verified local counterparts (line~11). 
As this does not necessarily ensure 
all disturbances are explored, it continues 
to complete a coordinator side verification, yet does 
not repeat the verified local ones (lines~12-13) 
using the global bitmap $B$.

\vspace{.5ex}
The procedure \paraexpand 
grows the current verified $j-1$-\rcw $G_{s_{j-1}}$ 
at each site in parallel, 
and sends the locally expanded 
$G^i_{s_j}$ to the coordinator with locally 
expanded node pairs that can optimize the 
worst case margin $m^*_{l,c}(v)$.  
Similarly, the procedure \paraverifyrcw 
examines and only sends $G^i_{s_j}$ that 
remains to be a \cw, 
and updates the bitmap $B$ 
with the verified local $k$-disturbance. 

\stitle{Analysis}. 
The algorithm \prcwgen correctly 
parallelizes the computation of 
its sequential counterpart \mrcwgen, 
as guarded by the coordinator side verification, 
and reduces unnecessary data shipment 
and verification, 
given Lemma~\ref{lm-parallel}. 
For the parallel cost, 
there are at most $k|V_T|$ rounds 
of parallel computation. 
The total time cost (including communication cost) is 
$O(\frac{|G|+|B|)(L|E|F+L|V|F^2)}{n} 
+ I_0\cdot L|E|F+L|V|F^2)$. 
Here $I_0$ refers to the number of $k$-disturbances 
verified at the coordinator. 
Algorithm \prcwgen~ reduces 
unnecessary and redundant verification. 
This analysis verifies that \prcwgen 
scales well as more 
processors are used. 
%We leave the details in~\cite{full}. 

%%%%%%% THE FOLLOWING MOVED TO FULL %%%
\eat{
For the total parallel time cost, 
there are at most $k|V_T|$ rounds 
of parallel computation, and 
$2k|V_T|$ rounds of 
communications between 
the coordinator 
and the workers. In 
each communication, 
(1) \paraexpand incurs on average 
$\frac{|G^i_{s_j}|}{n}$ parallel time cost 
and a data shipment of $\frac{|B_j|+|G^i_{s_j}|}{n}$. 
and (2) \paraverifyrcw incurs a parallel time cost 
in $O(\frac{|B_j|(L|E|F+L|V|F^2)}{n})$ 
time, with a data shipment of $\frac{|B_j|}{n}$. % \textcolor{red}{[AK: unclear sentence]}
Thus the total time cost (including communication cost) is in 
$O(\frac{|G|+|B|)(L|E|F+L|V|F^2)}{n} + I_0\cdot L|E|F+L|V|F^2)$. 
Here $I_0$ refers to the number of $k$-disturbances 
verified at the coordinator site, 
which is disjoint with any verified counterparts from 
any local site; that is, \prcwgen~{\em avoids} 
unnecessary redundant verification. 
This analysis verifies that \prcwgen 
scales well as more 
processors are used. 
}
\section{Experimental Study} 
\label{sec:exp}

We experimentally evaluate our algorithms with three real-world and one synthetic datasets. We evaluate the following: 
(\textbf{RQ1}): (a) the quality of the $k$-\rcws generated by \rgexp in terms of robustness and fidelity measures; 
(b) the impact of critical factors, including test node size 
$|V_T|$  and degree of disturbances ($k$), to the quality; 
%when $G$ is disturbed by $k$ edges?
(\textbf{RQ2}): (a) the efficiency of \rgexp and 
\prgexp for large graphs, (b) the impact of 
$|V_T|$ and $k$ to efficiency,  
and  
(\textbf{RQ3}): The scalability of \prgexp. 
%through parallel optimization, by varying disturbance degree $k$ and thread counts.
We also perform two case study analyses to showcase real applications of \rgexp. The codes and datasets 
are made available at~\cite{code}. %, with the full version~\cite{full}.  
%utilized in our experiments can be found \tbc{at~\cite{}}.

\subsection{Experiment settings} 
\label{sec:exp:settings}
\vspace{-1ex}
\stitle{Datasets}. Our datasets are summarized below (Table~\ref{tab:dataset}):

%\vspace{-.5ex}
\sstab
(1) \underline{\house}~\cite{ying2019gnnexplainer}, a synthetic dataset, uses a Barab\'{a}si-Albert graph as the base with a house motif. Each node has 5 neighbors on average, with motifs labeled 1, 2, and 3 as `roof', `middle', and `ground'. The remaining nodes are labeled 0.
%\eat{This is a synthetic dataset with the base graph as Barab\'{a}si-Albert graph and motif as house motif. We set each node to have 5 neighbors on average. Node labels for motifs are 1, 2, and 3, corresponding to the 'roof', 'middle', and 'ground' positions of the house. The rest nodes of BA graph are labeled as 0.} 

\sstab
(2) \underline{\ppi}~\cite{ppi}, a protein-protein interaction dataset, contains human proteins (nodes) and their interactions (edges). Node features include motif gene sets and immunological signatures, labeled by gene ontology sets reflecting functional properties.
\eat{The \ppi dataset is a protein-protein interaction dataset, each node refers to human proteins and each edge refers to interactions between proteins. The node features are motif gene sets and immunological signatures, and the node labels are gene ontology sets, which indicate the functional properties.}

\sstab
(3) \underline{\citeseer}~\cite{citeseer} is a citation dataset for computer science. Each node is a publication with edges as citation relations. There are 6 classes: Agents, AI, DB, IR, ML, and HCI. ``Agents'' specifically pertains to papers related to the field of autonomous agents and multi-agent systems. Each node features a binary vector for keyword occurrences.
%\eat{The \citeseer is a citation dataset of the computer science domain, each node represents a publication with citation relations as edges. the publications are classified into 6 classes: Agents, AI, DB, IR, ML, and HCI. The node features are 0/1 word vectors to identify the word occurrence of each publication.}  

\sstab
(4) \underline{\reddit}~\cite{reddit} is a large-scale social network dataset with hundreds of millions of edges and a set of nodes (posts) with features and node classes. Node features are word vectors, and node labels are communities in which the post belongs. 

For injecting $k$-disturbances, we adopt 
a strategy that mainly removes existing edges to 
capture cases that establish new 
links in real networks may be expensive~\footnote{Other disturbance strategy exists such as insertion-only or random attacks; the best choice is application-specific.}. 
%The statistical information of the datasets is presented in Table~\ref{tab:dataset}.

%\eat{As shown in Table \ref{tab:dataset}, we evaluate our method on various domains: social analysis\cite{hamilton2017inductive}, citation network\cite{10.1145/276675.276685}, drug discovery\cite{doi:10.1021/jm00106a046, 10.5555/3042573.3042614}, and synthetic\cite{ying2019gnnexplainer}. Meanwhile, we also investigate the performance with different types of graphs: sparser/denser and binary/multi-class. }

\begin{table}[]
\caption{Statistics of Datasets}
\label{tab:dataset}
\resizebox{\columnwidth}{!}{%
\begin{tabular}{c|cccc}
\hline
Dataset & \# nodes & \# edges & \# node features & \# class labels \\ \hline
\house & 300 & 1500 & - & 4 \\
\ppi & 2,245 & 61,318 & 50 & 121 \\
\citeseer & 3,327 & 9,104 & 3,703 & 6 \\
\reddit & 232,965 & 114,615,892 & 602 & 41 \\ \hline
\end{tabular}%
}
%\vspace{-5ex}
\end{table}

%\wu{Below is completely from gvex-please revise/specify; 
%rewrite; don't just change numbers.}

\stitle{GNN Classifiers.} 
Following the evaluation of \gnn explanation 
%the methodologies of recent studies 
\cite{yuan2021explainability,ying2019gnnexplainer,zhang2022gstarx,huang2023global}, we adopt standard message-passing graph convolutional network (\gcns) configured with $3$ convolution layers, each featuring an embedding dimension of 128. We remark that our solutions are model-agnostic and generalize to \gnn specifications. 
%We leave the details of training 
%and inference settings in~\cite{full}. 
%Enhancements to the \gcn include a max pooling layer and a fully connected layer to support classification effectiveness. For datasets lacking inherent node features, a default feature is assigned to each node. We use Adam optimizer~\cite{kingma2014adam} 
%for training, with a learning rate of 0.001 throughout 2000 epochs. We split the datasets with 80\% for training purposes, 10\% for validation, and the remaining 10\% for testing. Explanations are derived from the classification outcomes of the test set. 
%We remark that our solutions  are  model-agnostic, ensuring compatibility with any GNN that utilizes a message-passing framework.

\eat{In line with recent works \cite{yuan2021explainability,ying2019gnnexplainer,zhang2022gstarx,huang2023global}, we employ a classic message-passing \gnn, namely a graph convolutional network (GCN) with three convolution layers, each having an embedding dimension of 128. To facilitate classification, the GCN model is enhanced with a max pooling layer and a fully connected layer. For datasets without node features, we assign each node a default feature. During training, we utilize the Adam optimizer \cite{kingma2014adam} with a learning rate of 0.001 for 2000 epochs. The datasets are split into 80\% for training, 10\% for validation, and 10\% for testing. The explanations are generated based on the classification results of the test set. Recall that our proposed solutions are model-agnostic, making them adaptable to any \gnn employing message-passing.}

%\vspace{-1ex}
\stitle{GNN Explainers}. We compare \rgexp against two recent %state-of-the-art 
GNN explainers. We use the original codes and authors' configurations for a fair comparison.
(1) \underline{\cfexplainer}~\cite{cf-gnnexplainer}
generates counterfactual explanations for GNNs via minimal edge deletions, %by modeling common decision logic across similar input graphs and extracting explanations from decision boundaries, 
but overlooks factual explanations.
%\eat{\dq{only third-party reproduction} generates robust counterfactual explanations by finding the decision region in the feature space of node embeddings.}
(2) \underline{\cf}~\cite{tan2022learning} integrates factual and counterfactual reasoning for a specified test node into a single optimization problem, allowing the generation of GNN explanations that are both necessary and sufficient, but without robustness guarantees.
\eat{generates explanation subgraphs with counterfactual and factual constraints, and optimizes a relaxed objective.}

We are aware of two more recent methods %\gcf
~\cite{huang2023global} and %\rcexp
~\cite{bajaj2021robust}. 
Both are optimized for counterfactual 
explanations. 
%and do not discuss robustness 
%guarantees. 
%, leaving alone all three aspects of 
%factual, counterfactual and robustness. 
The former has
source codes relying on a pre-trained model; 
and the latter provides no available codes for node classification based explanation generation.
Neither discusses efficiency and scalability of explanation generation. 
We thus cannot provide a fair experimental 
comparison with them. 

\eat{
and the latter 
yet encountered challenges in reproducing their results. \gcf necessitates the use of pre-trained models from another study. }
\eat{
However, the publicly available code for generating this model is currently plagued with unresolved bugs. Similarly, \rcexp's codebase, while comprehensive with multiple options for varied experiments, is hampered by redundant files and codes, making it difficult to adapt to our dataset and methods. 
%Beyond 
Despite these reproducibility issues, our analysis of  \rcexp %reveals 
suggest that the method is not 
%a lack of 
addressing %consideration in the 
the robustness of their explanations, particularly in the context of $k$-edge disturbances. To our knowledge, our study is the first to address all three key criteria: robustness, factual and counterfactual,  aspects of explanations. Moreover, we delve into exploring the efficiency and scalability of explanation generation. 
}

\eat{\textcolor{red}{[AK: Add reasons why we do not show \gcf and RCExplainer. For RCExplainer, mention that the source code has not been provided by the authors. Also mention that RCExplainer, though considers robustness of explanations, does not provide any guarantee in terms of $k$-edge disturbance, unlike ours. State that none of the existing methods consider all three criteria, unlike ours. Add that none of these competitors consider efficiency and scalability of explanation generation, unlike ours.]}}

\eat{\gcf~\cite{huang2023global} 
finds a small set of representative
counterfactual graphs, similar to the input graph, but with opposite predictions. %The search space of the counterfactual is constrained by the graph edit distance threshold. 
Since it only considers graph classification as the downstream task, we do not compare against \gcf.}

%\vspace{-1ex}
\stitle{Evaluation metrics}. We evaluate \rgexp and other explainers based on the following set of metrics. 

\sstab
(1) \underline{Normalized $\ged$}. 
%\textcolor{red}{[AK: plot shows normalized GED?]} 
%\wu{we need to give a formula here.}
Graph edit distance ($\ged$) measures the number of edits required to %transition %from a disturbed state back to its undisturbed state. 
transform one graph to another. We use a normalized \ged (defined 
in Eq.~\ref{eq-ged}; where the graph size 
refers to the total number of nodes and edges) 
to quantify the structural similarity of generated 
\rcws, over their counterparts from disturbed graphs. 
%between $k$-\rcw and the corresponding \cw or $0$-\rcw. 
The $\ged$ quantifies the ability that a $k$-\rcw $G_w'$ 
from a disturbed graph by a $k$-disturbance remains 
to be an ``invariant'' compared to its original counterparts 
$G_w$ before $k$-disturbances, which indicates their robustness: the smaller, the more ``robust''. 
\begin{small}
\begin{equation}
\label{eq-ged}
\text{normalized \ged}(G_w, G_w') = \frac{\ged(G_w, G_w')}{\max(|G_w|, |G'_w|)}
\end{equation}
\label{-2ex}
\end{small}
\eat{\textcolor{blue}{It measures the similarity between the embeddings of a test set and its $k$-RCW (Robust Counterfactual Witness) in \gnn}, \textcolor{red}{[AK: write more formally what is being done, what is being measured, and how?]}
indicating how well $k$-\rcw preserves the test set's features and structure. \textcolor{blue}{Consistency score remaining stable as $k$ increases suggest better $k$-\rcw robustness}\textcolor{red}{[AK: intuitively clarify why so?]}.
\eat{via graph embedding cosine similarity: We use a GNN\cite{kipf2016semi} to generate embeddings for both test graph and explanation subgraph. A higher similarity score indicates that the explanation subgraph maintains the meaningful information of test nodes.}}

\vspace{-1ex}
\sstab
(2) \underline{\fplus\cite{yuan2022explainability}.} It quantifies the deviations caused by removing the explanation subgraph from the input graph. Fidelity+ evaluates the counterfactual effectiveness. A higher Fidelity+ score indicates better distinction, hence the better. Below, $\mathbbm{1}$ is 1 if $M(v, G)=l$, and 0 otherwise. 

\begin{small}
\begin{equation*}
    Fidelity+ =\frac{1}{|V_T|}\mathop{\sum}_{v\in V_T}(\mathbbm{1}(M(v, G)=l)-\mathbbm{1}(M(v, G\setminus G_s)=l))
\end{equation*}
\end{small}

\sstab
(3) \underline{\fminus\cite{yuan2022explainability}.} %It assesses the match between predictions from explanation subgraphs and original inputs, %targeting
It quantifies the difference between the results 
over $G$ and the generated \rcws, 
indicating a factual accuracy.\eat{measures how close the prediction results of the explanation subgraphs are to the original inputs. Fidelity- evaluates the factual effectiveness. A desirable Fidelity- score should be close to or even smaller than zero,} Ideal Fidelity- scores are low, even negative, indicating perfectly matched or even stronger predictions. 

\begin{small}
\begin{equation*}
    Fidelity- = \frac{1}{|V_T|} \mathop{\sum}_{v\in V_T}(\mathbbm{1}(M(v, G)=l)-\mathbbm{1}(M(v, G_s)=l))
\end{equation*}
\end{small}

\sstab
(4) \underline{Generation Time.} The total response 
time on generating explanations for all the test nodes 
$|V_T|$. To evaluate the impact of the factor $k$ on $k$-disturbances to generation time,  
we report the total response time it takes to 
``re-generate'' the explanations for each 
method. The learning-based methods \cf and \cfexplainer require 
retraining upon the change of graphs. 
We include the training and generation 
costs for both. 
%We generate the explanations for test nodes by considering all test nodes together, while learning-based baselines require traversing through each test node. 
%For \cf and \cfexplainer, we consider both the training time and inference time, since both baselines require re-training over different graph variants. 

\eat{\wu{as these two use different methods, add more to clarify what time 
are included -- training time? Give some proper 
details, not details, on how the time is calculated, 
for \cf and \cfexplainer.}}

\eat{\textcolor{red}{[AK: shall we report explanation size or sparsity, only in Table III, no plots. Change rows to columns for space saving.]}}

\eat{\textcolor{red}{[AK: (1) some idea should be given how test nodes in $V_T$ are selected (e.g., uniformly at random)?
(2) If a competitor identifies instance level explanation for each test node, how the overall explanation for $V_T$ is created (by graph union of instance-level explanations)?]}}

\vspace{-1ex}
\stitle{Environment}.
Our algorithms are implemented in Python 3.8.1 by PyTorch-Geometric framework~\cite{fey2019fast}. All experiments are conducted on a Linux system equipped with 2 CPUs, each possessing 16 cores %(i.e., total 32 threads) 
and 256 GB RAM.

\subsection{Experiment results}

\vspace{-1ex}
\stitle{Exp-1: Effectiveness: quality of explanations (RQ1(a))}. 
Using \citeseer, %dataset, 
we first evaluate the effectiveness of \rgexp, \cf, and \cfexplainer  
in terms of the quality of the explanations 
they generate.  The results over other 
datasets are consistent, hence omitted. 
%observe consistent 
%results and 
%robustness, counterfactual, and factuality. 
%\textcolor{red}{[AK: say for which datasets we show what results, and the rest are not shown, but similar trends follow.]}

\eetitle{Quality of Explanations}. 
Setting $k$ = $20$, and $|V_T|$ = $20$, we 
report the quality of the explanation structures 
of \rgexp, \cf, and \cfexplainer in Table~\ref{tab:effective}. 
The table shows that \rgexp outperforms both \cf and \cfexplainer for all three metrics. 
(1) \rgexp outperforms \cf and \cfexplainer by twice 
and $2.3$ times in terms of \ged. This demonstrates 
that it can generate explanations that are much 
more stable in structural similarity over 
the disturbance of underlying graphs. 
(2) \rgexp achieves best scores in \fplus 
and \fminus, outperforming \cf and \cfexplainer. 
These indicate that \rgexp can generate 
explanations as \rcws that stay consistent 
with the results of \gnn classification 
simultaneously as factual, and counterfactual explanations. 

We remark that the \rcws, by definition, should 
be theoretically achieving \fplus as $1.0$ and 
\fminus $0.0$, as counterfactual and factual 
witnesses, respectively. The reason 
that it does not achieve the theoretical 
best scores is due to the fact that not all the test nodes 
retain the same labels, leading to non-existence of 
non-trivial $k$-\rcws in some cases. 

We also compare the size of the explanations. \rgexp generate the smallest size, which is on average half of their \cf counterparts. \cf and \cfexplainer 
produce explanations as the union of subgraph 
structures at instance-level (for single test node), which may contain redundant nodes and 
edges that occur more than once.  The explanation 
subgraphs produced by \rgexp are more concise, due to which 
their generation targets for all test nodes in a greedy 
selection approach. 
%while the competitors focus on explaining around one single node.
%\warn{Due to space limits, 
%we report it in~\cite{full}.}

%%%% MOVE TO FULL
\eat{
We also report the average size of 
the generated explanations in terms 
of the total number of nodes and 
edges in Table~\ref{tab:effective}. 
The total size for \cf and \cfexplainer is calculated by the graph union of instance-level explanations. The explanation of \rgexp has the smallest size which is twice smaller than \cf. This indicates that the generation of \rgexp targets small explanations for all test nodes, while the competitors focus on explaining around one single node. 
}

\eat{
\sstab
(1) In Table~\ref{tab:effective}, we present the effectiveness of each explainer that has been introduced in Section~\ref{sec:exp:settings}. Here, $k$ and $|V_T|$ have been set to $20$ and $20$, respectively. The table shows that \rgexp performs better than \cf and \cfexplainer across all the metrics listed.
The \ged score for \rgexp is approximately $2$ times lower than that of \cf, and $2.3$ times lower than \cfexplainer. This significant difference demonstrates that \rgexp's explanations have minor variations for up to $k$ disturbances in the input graph, %from the original \cw's features and structure 
compared to the competitors, suggesting that \rgexp has a more robust explanation capability.
Also, the Fidelity+ score of \rgexp is higher than competitors, making it more effective for counterfactual explanations as it has a greater impact on the model's prediction when its explanation subgraph is removed.
Moreover, \rgexp has a Fidelity- score of $0.05$, while \cf and \cfexplainer have higher scores %at least $0.06$, 
indicating that \rgexp's explanations are more closely aligned with the actual predictions of the model compared to \cf and \cfexplainer.
}

\begin{table}[tb!]
\caption{Quality of explanations (\citeseer)} 
\vspace{-2.2ex}
\label{tab:effective}
\begin{small}
\begin{center}
\begin{tabular}{c|c|c|c|c}
\hline
& NormGED & Fidelity+ & Fidelity- & Size \\ 
\hline
\rgexp & 0.32 & 0.79 & 0.05 & 66 \\ \hline  
\cf & 0.68 & 0.47 & 0.06 & 132 \\ \hline
\cfexplainer & 0.72 & 0.65 & 0.13 & 78 \\ \hline 
\end{tabular}
\end{center}
\end{small}
\vspace{-5ex}
\end{table}

\eat{\begin{table}[tb!]
\caption{Quality of explanations (\citeseer)} %\textcolor{red}{which dataset?}  
\vspace{-2.2ex}
\label{tab:effective}
\begin{small}
\begin{center}
\begin{tabular}{c|c|c|c}
\hline
Metrics & \rgexp & \cf & \cfexplainer \\ 
\hline
NormGED & 0.32 & 0.68 & 0.72 \\ \hline  
Fidelity+ & 0.79 & 0.47 & 0.65 \\ \hline
Fidelity- & 0.05 & 0.06 & 0.13 \\ \hline 
Size & 66 & 132 & 78 \\ \hline
\hline
\end{tabular}
\end{center}
\end{small}
\vspace{-3ex}
\end{table}}

\stitle{Exp-2: Effectiveness: impact of factors (RQ1(b))}. 
We next investigate the impact of the size of 
disturbance $k$ and test size $|\V_T|$ on the effectiveness of \rgexp, \cf, and \cfexplainer. We report their performances in Fig~\ref{fig:effec}.
%\eat{\textcolor{red}{[AK: In Fig 3, can we show smooth and clear trends? In particular, find datasets where our fidelity+, fidelity-, robustness are clearly and consistently better than both competitors. Figure c, e, f are bad. Find better ones.]}}

\eetitle{Varying $k$}. Fixing $|\V_T|=20$, we vary $k$ from $4$ to $20$, and report the results in Fig~\ref{fig:effec}(a), (c), and (e).  

\sstab
(1) Fig~\ref{fig:effec}(a) shows us that for all 
approaches, \ged increases for larger $k$. 
Increasing $k$ implies a more substantial disturbance, 
and accordingly larger variance to the underlying graph, 
hence larger differences in \rcws to retain the label 
of $V_T$. Due to the similar reason, \cf and \cfexplainer 
exhibit larger \ged. On the other hand, \rgexp 
consistently outperforms \cf and \cfexplainer. For example, 
it outperforms both when $k$ = $20$, with a \ged 
 that remains better than the best scores \cf and 
\cfexplainer can reach.

\eat{
\rgexp maintains a significantly smaller \ged score compared to other methods, indicating higher robustness. In contrast, \cf and \cfexplainer exhibit an increasing \ged score with increasing $k$. Increasing $k$ implies a substantial disturbance in the input graph. Notice that the worst case for \rgexp, i.e., at $k=20$, is even better than the best case for both competitors, i.e., at $k=4$. %with a significantly smaller \ged score. 
This suggests that \rgexp is more robust compared to competitors, as it is less susceptible to perturbations. 
}

\sstab
(2) Fig~\ref{fig:effec}(c) verifies that in general, 
all three methods achieve higher \fplus as $k$ becomes larger. 
Meanwhile, \rgexp maintains higher and more stable \fplus than \cf and \cfexplainer, %even when $k$ is larger than \tbf, 
for all cases. We found that with $|V_T|$ unchanged, 
larger $k$ allows \rgexp to explore more $k$-disturbances 
``outside'' of \rcws, and identify 
edges that are more likely to change the node labels. 
By incorporating such edges into 
\rcws, the generated explanations are also closer 
to the decision boundary. This verifies that \rgexp 
is able to respond better to larger structural variance of graphs and find counterfactual 
structures that better approximate the decision boundary 
of \gnns. \rgexp remains 
least sensitive due to its ability to find robust 
explanations under disturbances. 
\cfexplainer outperforms 
\cf due to its specialized learning 
goal towards counterfactual property. 
%A larger $k$-disturbance indicates that they consider a smaller computation graph, and shrink their search space for finding desired edges, resulting in improved \fplus, but with the cost of re-training on each $k$-disturbance. \cfexplainer outperforms \cf since it only considers the counterfactual property. 
\eat{\comwu{do we have some explanation on why \fplus also increases as $k$ becomes larger for \cf and \cfexplainer? 
They are clearly using different approach. }}
\eat{
\mengying{A larger $k$ will broaden the analysis by incorporating more nodes into the explanation. This is particularly important for complex, multi-class datasets like \citeseer, as it will enable a more comprehensive understanding of \gnn behavior and lead to better explanations.}
}
 
%\comwu{as \cf considers both factual and counterfactual, 
%it should outperform \cfexplainer which only consider counterfactual, no? I added some explanation but please improve.}

\eat{
significantly impact the model's predictions, remaining effective even with larger test sets or more disturbances. On the other hand, both \cf and \cfexplainer show a more significant decrease. \cf achieves \tbf times lower than \rgexp in Fidelity+ when $k$ is increased to \tbf. This suggests lower counterfactual effectiveness amidst disruptions.
}

\begin{figure}[tb!]
\centering
\centerline{\includegraphics[width =0.46\textwidth]{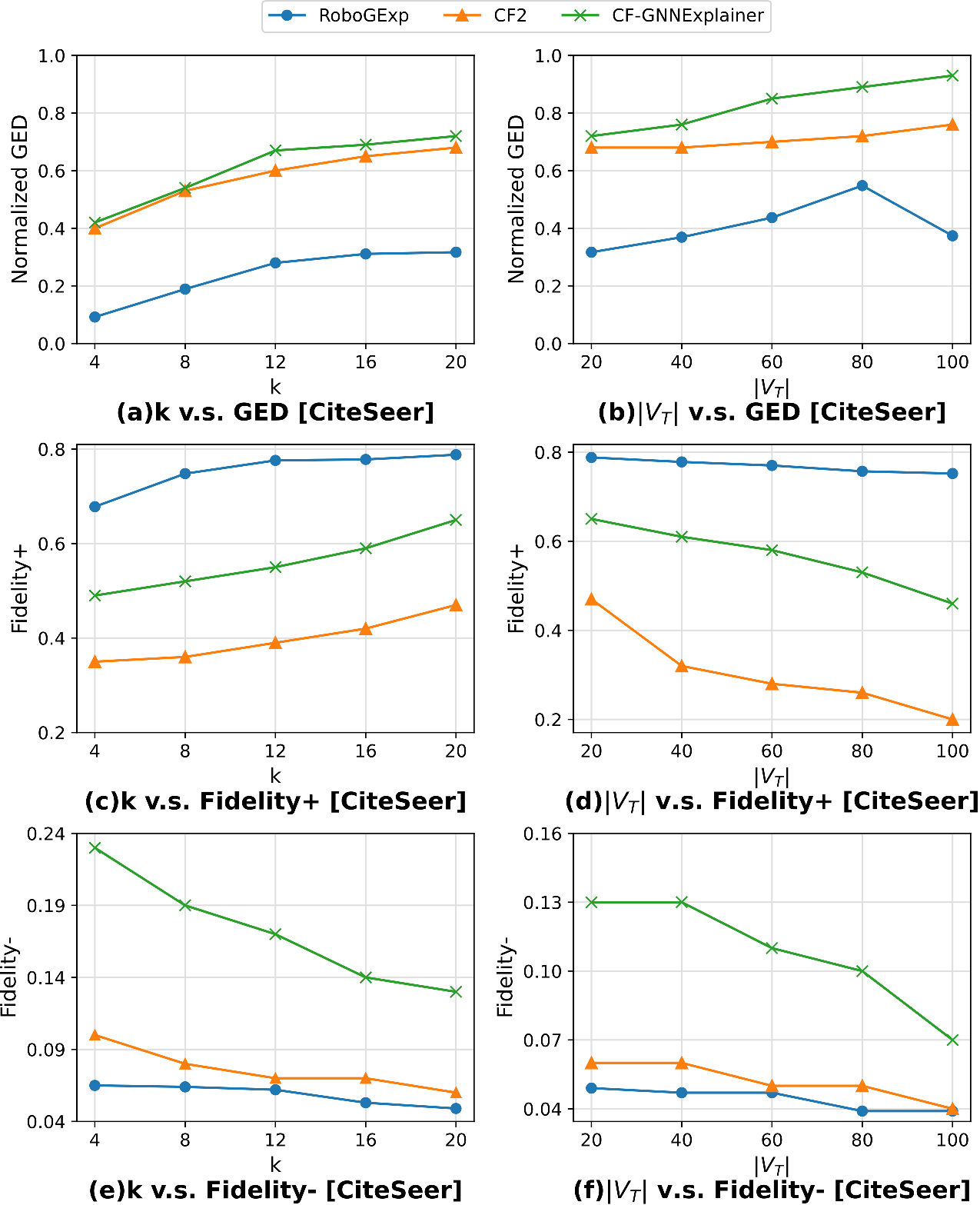}}
\vspace{-1.5ex}
\caption{Effectiveness with Impact Factors %\textcolor{red}{which datasets?}
%\wu{not common to see (sub)figure title at the top. revise to make it follow the convention; legends, on the other hand, can be put at the top instead}.
}
\label{fig:effec}
\vspace{-3ex}
\end{figure}

\sstab
(3) In Fig~\ref{fig:effec}(e), all methods 
have improved results with smaller \fminus 
%Fidelity-, as \tbf\tbf.
as $k$ becomes larger. 
The erratic trend in \cf's \fminus score is indicative of its less consistent factual explanation generation process, particularly when the graph is subjected to disturbances. 
Comparing \cf and \cfexplainer, \cf 
is able to achieve \fminus 
scores that are closer to \rgexp, 
due to that it considers 
both factual and counterfactual explanations, 
while \cfexplainer is optimized for generating counterfactual explanations. 
Similar to \fplus, larger $k$ allows \rgexp to 
explore more disturbances and increases its chance to 
identify factual and counterfactual structures, 
resulting in a lower \fminus. 
%Similar to what we discussed about \fplus, the skunked search space allows baselines to optimize over fewer instances. 
\eat{\comwu{add why increased k help improve \fminus for \cf and \cfexplainer too.} }
%\mengying{Similar to \fplus, increasing the value of $k$ broadens the analysis space and includes more comprehensive information in explanations, resulting in a lower \fminus.}

\eetitle{Varying $|\V_T|$}. Fixing $k=20$, we vary $|\V_T|$ from $20$ to $100$, and present the results in Fig~\ref{fig:effec}(b), (d), and (f).

\sstab
(1) Fig~\ref{fig:effec}(b) depicts the following. 
 \rgexp remains to outperform \cf and \cfexplainer constantly in all cases. 
 Both \rgexp and \cf exhibit a relatively stable performance 
in \ged 
 %\ged scores that show little to no increase, holding steady
 as $|\V_T|$ varies from $20$ to $100$. 
 %showcasing its stable performance and robustness. 
 \cf is quite stable 
 due to its goal of seeking factual and 
 counterfactual explanations, which may have helped 
 it identify similar structures. \cfexplainer is quite sensitive to larger $|V_T|$, 
 due to larger test nodes introducing more 
 structurally different explanations. %,  thus more variance in \ged. 

 \eat{
 This trend reflects \rgexp's capability to provide consistent explanations regardless of the test set size. In contrast, \cf and \cfexplainer report a much higher \ged compared to \rgexp, indicating their weaknesses in robustness.
 }

\sstab
(2) In Fig~\ref{fig:effec}(d), all methods 
have lower \fplus scores as $|V_T|$ becomes 
larger. As more test nodes are introduced, 
it becomes difficult to maintain 
a counterfactual explanation due to 
more diverse structures. 
\cf optimizes on both factual and counterfactual explanations, and  
demonstrates more sensitive behavior. \cfexplainer outperforms \cf 
in this case due to its optimization for only 
counterfactual explanations. On the other hand, 
\rgexp remains to outperform \cf and \cfexplainer, 
and is least sensitive to $|V_T|$. It is likely
that the goal to 
find stable explanation for all test nodes 
greatly mitigates the impact of the additional 
diversity introduced by larger $V_T$.

\eat{
the \fplus of \rgexp remain consistently high at around \tbf, 
indicating a strong counterfactual impact across different test set sizes. \cf's decrease is more pronounced, from \tbf to \tbf, thus the method's counterfactual reasoning becomes less effective as the size of the test set increases. The scores of \cfexplainer decrease by \tbf$\%$, which is closer to \cf than the stable \rgexp. 
}

\sstab
(3) In Fig~\ref{fig:effec}(f), all three methods find it difficult to 
maintain \fminus as $|V_T|$ is larger. 
\rgexp continues to achieve the best \fminus scores. 
The difference is that \cf in turn outperforms 
\cfexplainer and is able to achieve \fminus 
scores closer to \rgexp, due to its optimization 
on both factual and counterfactual explanations. In general, \rgexp remains the least sensitive to $V_T$, 
due to its prioritization strategy and enforcement 
of robustness verification.

\vspace{-1ex}
\stitle{Exp-3: Efficiency (RQ2)}. We report the efficiency of \rgexp, \cf, and \cfexplainer in Fig~\ref{fig:effic}. 
By default, we set $k = 20$, $|\V_T| = 20$, and test with \citeseer. 
%(a-c) shows that \rgexp has lower computation times for different $k$ values, $|\V_T|$, and datasets. 

% \textcolor{red}{[AK: In Fig 4a, b, why RoboGExp does not vary with $k$ and $|\V_T|$.], while the competitors vary? Is this expected, if not, find more realistic results. If expected, explain w.r.t. the algorithm and its time complexity. Fig 4 must contain our results with Reddit, even if the competitors cannot run.]}

\eetitle{Overall efficiency}. Fig~\ref{fig:effic}(a) shows the response time of \rgexp, \cf, and \cfexplainer for three real-world datasets: \house, \citeseer, and \ppi. \rgexp constantly 
outperforms \cf and \cfexplainer for all 
datasets. %On average, it outperforms \cf by \tbf times, and \cfexplainer by \tbf times. 
We found that 
the later two incur major overhead in the learning process to infer the explanations, which remains a main bottleneck and 
sensitive to large graphs with enriched features. 
Our methods explore bounded numbers of $k$-disturbances 
with efficient expansion and verification 
algorithms, %that only rely on, 
and avoid unnecessary \gnn inferences. %\footnote{We observe each \gnn inference only takes 1-2 seconds on \citeseer.}. 
For example, \rgexp 
takes only 58.6\% of the time of \cfexplainer and 12.03\% of the time of \cf to find 
explanations that are factual, counterfactual, and 
robust. Moreover, upon the disturbance of 
graph $G$, \cf and \cfexplainer  
require retraining and ``re-generate'' explanations, while \rgexp can be applied 
to find robust explanations ``once-for-all'' 
for any variants of $G$ under $k$-disturbance. 

\eat{exhibits consistent performance across all datasets, with the lowest computation times. Notably, the computation time for \cf and \cfexplainer shows a sharp increase for the \ppi dataset, \rgexp's computation time on the \ppi dataset is only 58.6\% of the time taken for \cfexplainer and 12.03\% of the time for \cf, which could be attributed to the larger or more complex nature of this dataset. This suggests that our method performs best in dealing with more complex graph structures.
}

\eat{
\eetitle{Varying datasets}. Fig~\ref{fig:effic}(c) shows the computation time for different datasets: \house, \citeseer, and \ppi. It can be observed that \rgexp exhibits consistent performance across all datasets, with the lowest computation times. Notably, the computation time for \cf and \cfexplainer shows a sharp increase for the \ppi dataset, \rgexp's computation time on the \ppi dataset is only 58.6\% of the time taken for \cfexplainer and 12.03\% of the time for \cf, which could be attributed to the larger or more complex nature of this dataset. This suggests that our method performs best in dealing with more complex graph structures.
}

\begin{figure}[tb!]
\centering
\centerline{\includegraphics[width =0.46\textwidth]{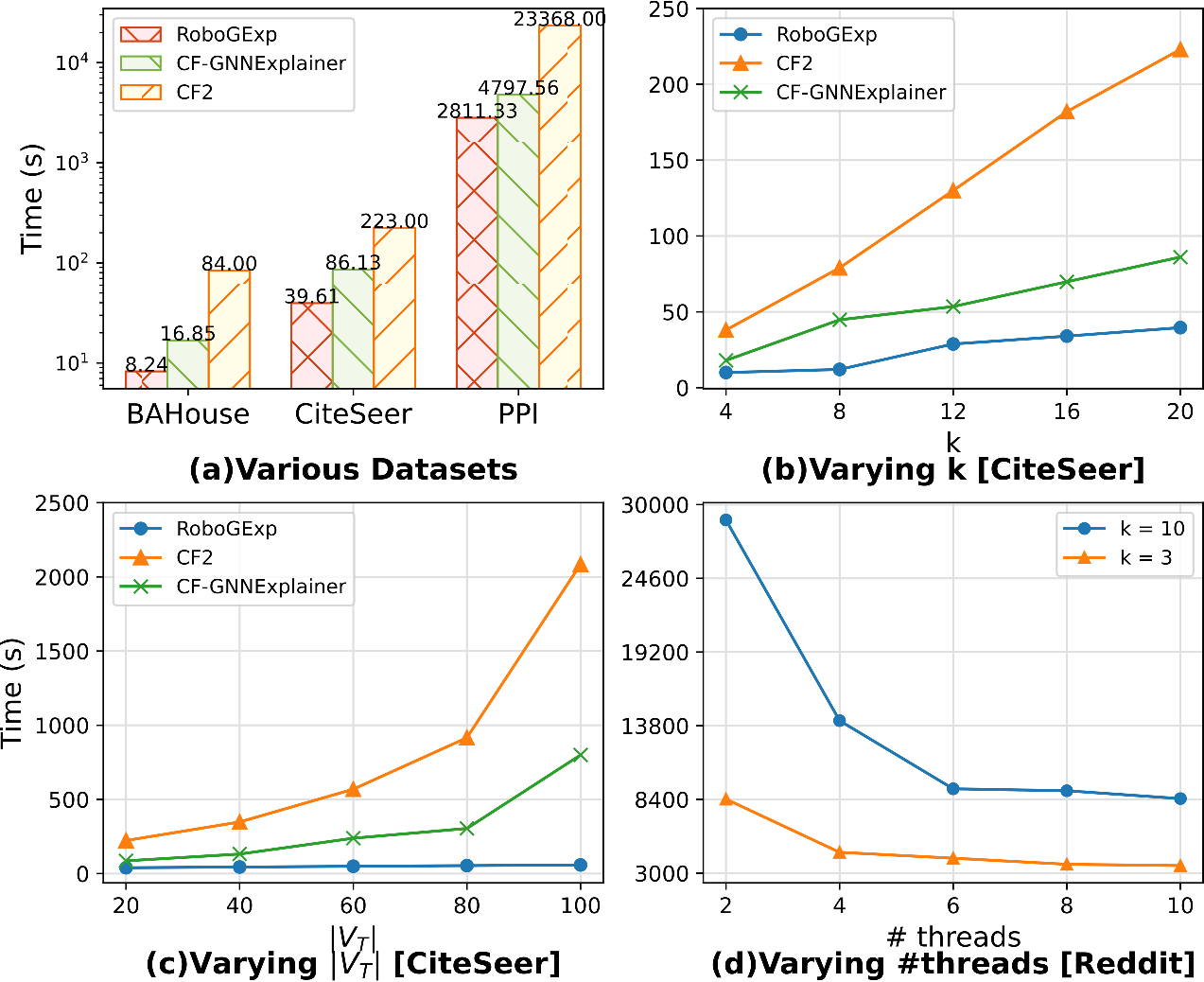}}
\vspace{-1.5ex}
\caption{Scalability and Efficiency 
%\textcolor{red}{which datasets a, b, d?}
%\comwu{put c the first, followed by a, b, and d}}
}
\label{fig:effic}
\vspace{-3ex}
\end{figure}

\eetitle{Impact of $k$}. Using the same 
setting as in its effectiveness counterparts 
in Figs.~\ref{fig:effec} (a), (c), and (e), 
we report the response time of \rgexp, 
\cf, and \cfexplainer as $k$ varies from 
$4$ to $20$. All methods take more time as $k$ 
becomes larger. As expected, \rgexp 
takes more time to verify a larger number of 
$k$-disturbances, yet still benefits from 
its localized search in ``nearby'' 
area of the explanations. 
Both \cf and \cfexplainer incur increased overhead in learning; the difference is that 
\cf directly re-learns the matrix 
representation, 
and \cfexplainer re-learns to remove edges to sparsify the matrix, 
hence benefits better from removal-heavy 
disturbances.

\eat{
According to Fig~\ref{fig:effic}(b), the computation time for \rgexp minimally increases from 10s for $k$=4 to 39.6s for $k$=20, \textcolor{blue}{indicating a marginal increase rate of 1.85s per layer} \textcolor{red}{do we mean GNN layer? What increase in disturbances has to do with GNN Layers?}. In stark contrast, \cfexplainer and \cf result in a significant increase in computation time, \textcolor{blue}{with an increasing rate of 4.26s and 11.56s, respectively}. This is caused by the \textcolor{blue}{competitors requiring processing the entire adjacency matrix for each disturbance}, \textcolor{red}{[AK: unclear. explanation generation w.r.t. more disturbances in not an incremental process for our and competing methods. Given a graph -- disturbed or original, the methods produce an explanation.]} while ours efficiently computes explanations by expanding from $\V_T$ \textcolor{blue}{layer by layer}, minimizing the data processed in each round. These trends clearly illustrate \rgexp's superior efficiency in handling multiple disturbances, indicating that it is an efficient tool for generating robust \gnn explanations.
}

\eetitle{Varying $|\V_T|$}. Using the same 
setting as in its effectiveness counterparts 
in Figs.~\ref{fig:effec} (b), (d), and (f), 
we evaluate the impact of $|V_T|$ by varying 
it from 
$20$ to $100$. 
As shown in Figure \ref{fig:effic}(c), \rgexp scales well and is insensitive 
in response time, due to 
its prioritization strategy that 
favors the processing of test nodes which are 
unlikely to have labels changed given 
current explanations. In contrast, 
both \cf and \cfexplainer 
are sensitive, due to that 
both re-generate explanations each time 
$V_T$ changes, and incur
higher cost with 
more test nodes. 

\eat{
\rgexp's computation time demonstrates impressive efficiency, it only increases by 18.79s from $|\V_T|=20$ to $|\V_T|=100$. This is because \rgexp's computation cost benefits from generating explanations for the entire $\V_T$ in a unified process, thereby reducing redundant computations, so the increase in $|\V_T|$ does not affect computation time linearly. In contrast, \cfexplainer and \cf experience a steeper increase in computation time as $|\V_T|$ grows, compared to \rgexp. They generate explanations for each test node individually, which increases the overall computation time. This suggests that these competitors may not be efficient when scaling up to larger test sets.
}

%\textcolor{red}{[AK: imp to show results on largest reddit dataset here.]}

\vspace{-1ex}
\stitle{Exp-4: Scalability (RQ3)}. We next 
evaluate the scalability of \prcwgen, in terms of the number of threads and 
$k$. Fig~\ref{fig:effic}(d) verifies the result over a large real-world dataset \reddit.  
\prcwgen scales well as the number of 
threads increases, with consistent impact 
due to larger $k$. The generation time is improved by 70.7\% as the number of threads varies 
from $2$ to $10$ when $k$=10. The result verifies that 
it is practical to generate explanations 
over large graphs, and the response time 
can be effectively better improved for larger-scale disturbances (i.e., when $k$ is larger).

\eat{which leverages \textcolor{blue}{intra-layer} parallelism \textcolor{red}{[AK: more commonly used term than 'intra-layer' parallelism? What does it mean? Layer means GNN layer?]} to optimize the \rgexp for efficient computations. }

\eat{
The results presented in Fig~\ref{fig:effic}(d) illustrate that our algorithm benefits significantly from parallel strategy on \reddit, which is a large-scale social network dataset with hundreds of millions of edges. As shown, the computation time reduces from approximately $3\times10^5$s with two threads to around $8\times10^3$s with ten threads at $k=10$, resulting in a reduction of more than 70.7\%, indicating that \prcwgen is capable of distributing the workload effectively \textcolor{blue}{within each layer} across multiple processors. It is worth noticing that the addition of threads from 2 to 4 leads to a decrease rate of 7351s and 1950s per thread in computation time for $k=3$ and $k=10$, respectively. The effect of different values of $k$ is due to the fact that with \prcwgen extending \textcolor{blue}{graph layers}, \textcolor{blue}{higher layers} may add more nodes, which amplifies the efficiency gains from \textcolor{blue}{intra-layer} parallelism. Also, the plateau is observed when increasing the thread count beyond 4, which is expected in parallel computing since adding threads increases coordination complexity. 
}

% \textcolor{red}{[AK: In Fig 5, combine two plots into one, only show threads on X-axis, that is what matters for scalability. But in the same figure, show for different k. Fig 5 must contain our results with Reddit.]}

\eat{
\begin{figure}[tb!]
\centering
\centerline{\includegraphics[width =0.46\textwidth]{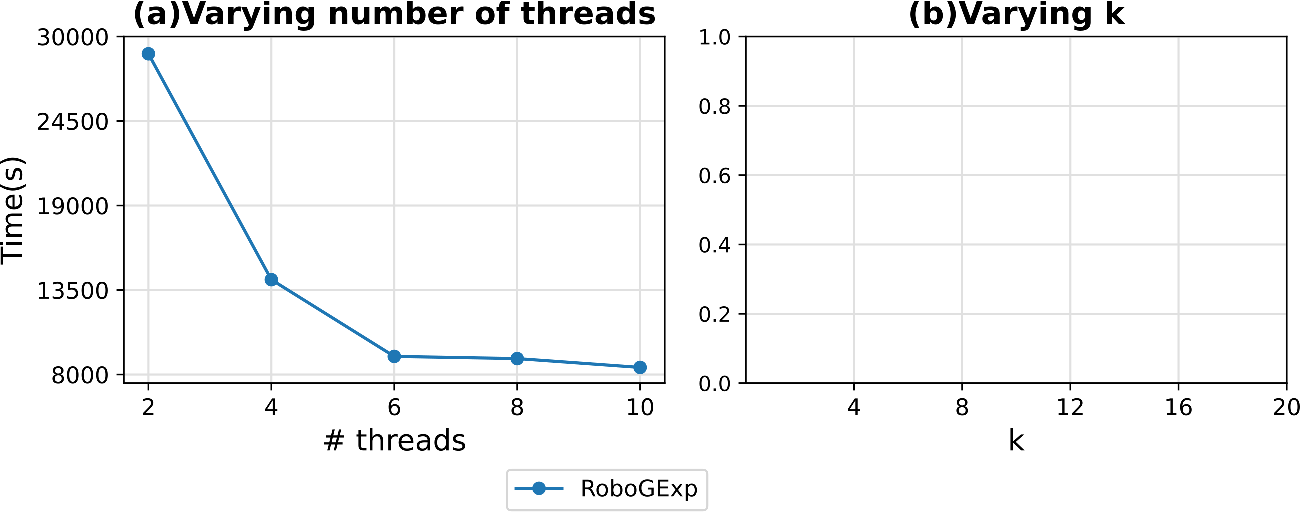}}
\vspace{-1.5ex}
\caption{Scalability and Parallization}
\label{fig:effic}
\end{figure}}

\vspace{-1ex}
\stitle{Exp-5: Case Analysis}. We next present two case studies, to showcase the application scenarios of \rgexp.

\eetitle{Deciphering invariant in drug structures.} Fig.~\ref{fig:case} depicts a chemical compound graph $G_3$ with a test node $v_3$ classified as \textit{``Mutagenic''}. On the top right is the \rcw $G_{w3}$ generated by \rgexp, which preserves the aldehyde structure in the blue area -- crucial for recognizing $v$ as a component from a toxic fraction. \cf generates a larger explanation $G^{\prime}_{w3}$. For two molecule variants $G^{1}_{3}$ and $G^{2}_{3}$ of $G_3$ obtained by removing the edge $e_7$ and edge $e_8$, respectively, 
\rgexp verifies that $G_{w3}$ remains 
to be a $1$-\rcw for the family of all three compounds, while 
\cf generates %$G^{1}_{3}$ and $G^{2}_{3}$ accordingly, and denoted 
two different explanations $G^{\prime 1}_{w3}$ and $G^{\prime 2}_{w3}$. 
%$G_{w3}$ remains the same among the two variants. 
Observe that in both $G^{\prime 1}_{w3}$ and $G^{\prime 2}_{w3}$, their aldehyde structure contains one redundant connection to a hydrogen atom, which 
will change the label of $v_3$ to ``nonmutagen''. 
Indeed, while the structure O-C-H is easy to be recognized as a toxic fragment responsible 
for mutagenic property, and is preserved in $G_{w3}$, additional ``noisy'' substructure that contains O, C, and two H is included to \cf generated structures, 
making it hard to distinguish the node to be in a  
mutagenic fraction or not. 
%since $V$ is fully connected and has no potential to react with others. 

%\eetitle{Indexing and querying complex graph data}. Another case is shown in Figure \ref{fig:case}, $G_4$

\begin{figure}[tb!]
\centering
\centerline{\includegraphics[width =0.48\textwidth]{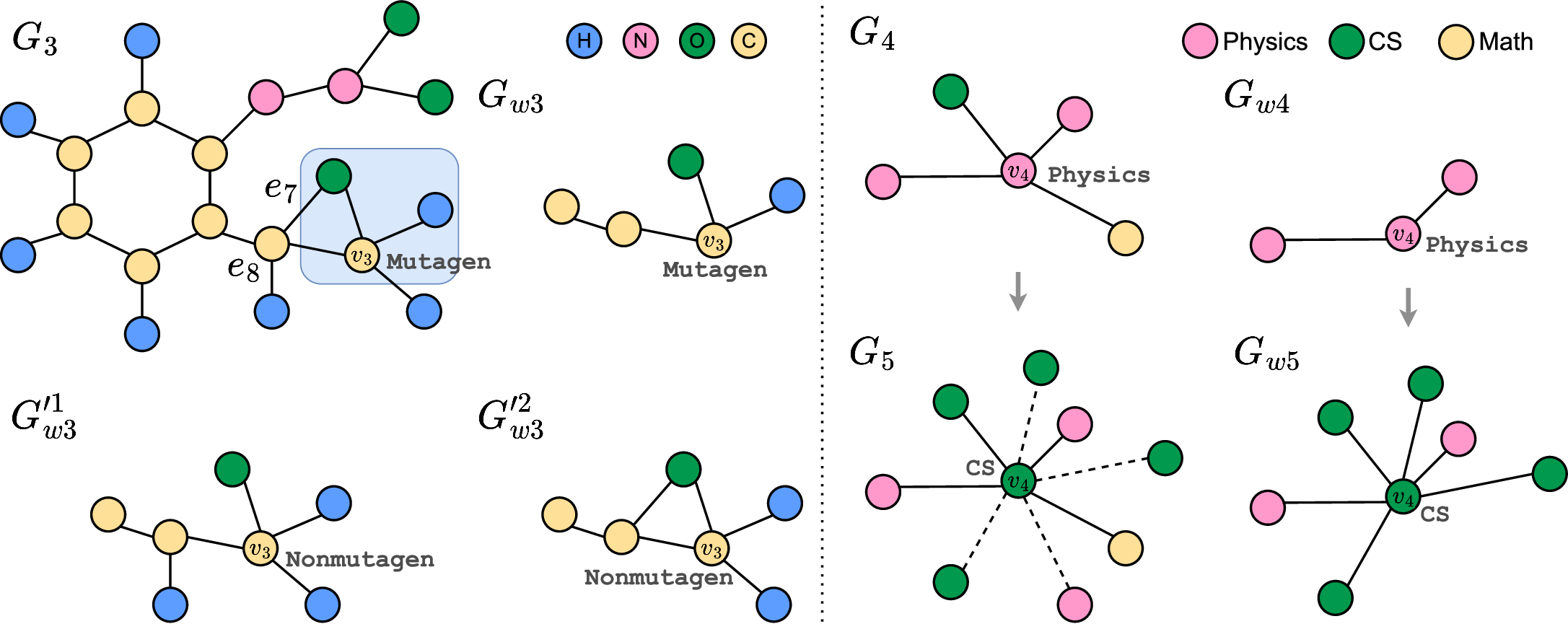}}
%\vspace{-1.5ex}
\caption{Case Study: Left: a small \rcw that capture an invariant 
structure for three variants 
of a drug structure; Right: 
an \rcw that explains topic 
change with matching new citations.}
\label{fig:case}
\vspace{-3ex}
\end{figure}

\eetitle{Explaining topic change with 
new references}. Our second case 
shows that \rgexp is able to respond quickly to 
the change of true labels with new explanation. 
Consider $G_4$ from \citeseer with a node (a paper) 
about Quantum Computing, which has the label `Physics', 
and has an explanation $G_{w4}$ found by \rgexp that identifies its relevant papers, all in physics area. 
Recent citations ``disturb'' $G_4$ to 
$G_5$ with new neighbors 
that cite the paper from computer sciences, via which 
the \gcn $\M$ updates its label 
to be ``Computer Science''. 
\rgexp responds by discovering a new explanation $G_{w5}$, with small changes that include 
a majority of major computer science citations
and less from physics. 
This verifies that \rgexp strikes a balance between factual and robustness as 
the true label changes.

\eat{\begin{figure}[tb!]
\addtolength{\subfigcapskip}{-0.04in}
%\vspace{-1.5ex}
\begin{center}
\subfigure[Case 1: Deciphering invariant structure in drug variants]{\label{fig:case:drug}
{\includegraphics[width =0.46\textwidth]{figures/case1.pdf}}}
\quad
\subfigure[Case 2: Indexing and querying complex graph data]{\label{fig:case:index}
{\includegraphics[width =0.46\textwidth]{figures/case2.pdf}}}
\end{center}
\vspace{-2ex}
\caption{Case Analysis \label{fig:case}}
%\vspace{-3ex}
\end{figure}
}

\section{Conclusion}
\label{sec:concl}

We proposed $k$-robust counterfactual witnesses ($k$-\rcw), an explanation 
structure for \gnn-based classification, that 
are factual, counterfactual, and robust to 
$k$-disturbances. We 
established the hardness and feasibility results, 
from tractable cases to co-\NP-hardness, 
for both verification and generation problems. 
We introduced feasible algorithms to 
tackle the verification and generation problems, and parallel algorithms 
to generate \rcws for large graphs. 
Our experimental study verified the 
efficiency of explanation generation, and the  
quality of the explanations, and their applications. 
A future topic is to enhance  
our solution to generate minimum 
explanations, 
and evaluate their application for 
other \gnns-based tasks.

\stitle{Acknowledgment}.
Dazhuo Qiu and Mengying Wang contributed equally. Qiu and Khan acknowledge support from the Novo Nordisk Foundation grant NNF22OC0072415. Wang and Wu are supported in part by NSF under CNS-1932574, ECCS-1933279, CNS-2028748 and OAC-2104007. 

%This work is supported by \tbf.

\balance
\bibliographystyle{IEEEtran}
\bibliography{ref}

%\newpage
%\setcounter{section}{0}
%\titleformat{\section}[block]
%{\normalfont\large\bfseries}{Appendix \Alph{section}:}{0.5em}{}
%\input{camera ready/cr-appendix}

\end{document}